
\synctex=1

\documentclass[reqno]{amsart}
\usepackage[margin=1.5in,bottom=1.25in]{geometry}	


\usepackage{amsmath}	
\usepackage{amssymb}	
\usepackage{amsfonts}	
\usepackage{amsthm}	

\usepackage[centercolon=true]{mathtools}	

\mathtoolsset{%
}

\usepackage[utf8]{inputenc}	
\usepackage[T1]{fontenc}	

\usepackage[
cal=cm,
]
{mathalfa}


\usepackage{dsfont}	

\usepackage[light,scaled=.95]{AlegreyaSans}



\usepackage[proportional,tabular,lining,sf,mono=false]{libertine}
\usepackage{acronym}	
\newcommand{\acli}[1]{\emph{\acl{#1}}}	
\newcommand{\acdef}[1]{\define{\acl{#1}} \textup{(\acs{#1})}\acused{#1}}	

\usepackage[labelfont={bf,small},labelsep=colon,font=small]{caption}	
\captionsetup[algorithm]{labelfont=bf,labelsep=colon}	

\usepackage{subcaption}	
\subcaptionsetup{labelfont=bf,justification=centering}	

\usepackage[svgnames]{xcolor}	
\colorlet{MyRed}{Crimson!60!DarkRed}
\colorlet{MyBlue}{DodgerBlue!75!black}
\colorlet{MyGreen}{DarkGreen}
\colorlet{MyViolet}{DarkMagenta}

\colorlet{MyLightBlue}{DodgerBlue!20}
\colorlet{MyLightGreen}{MyGreen!20}

\colorlet{PrimalColor}{MyBlue}
\colorlet{PrimalFill}{MyLightBlue}
\colorlet{DualColor}{MyRed}

\colorlet{AlertColor}{MyRed}	
\colorlet{BadColor}{MyRed}	
\colorlet{GoodColor}{MyGreen}	
\colorlet{LinkColor}{MediumBlue}	
\colorlet{MacroColor}{MyViolet}
\colorlet{RevColor}{MediumBlue}	

\usepackage[]{titlesec}	
\titleformat{name=\section}{\medskip}{\thetitle.}{0.8em}{\centering\scshape}

\titleformat{name=\subsection}[runin]{}{\bfseries\thetitle.}{0.5em}{\bfseries}[.]
\titleformat{name=\subsubsection}[runin]{}{\thetitle.}{0.5em}{\bfseries}[.]

\titleformat{name=\paragraph,numberless}[runin]{}{}{0em}{\bfseries}[]
\titlespacing{\paragraph}{0em}{\medskipamount}{1em}

\titleformat{name=\subparagraph,numberless}[runin]{}{}{0em}{}[.]
\titlespacing{\subparagraph}{0em}{0em}{0.5em}

\newcommand{\afterhead}{.\;}	
\newcommand{\para}[1]{\paragraph{\textbf{#1\afterhead}}}	

\usepackage{latexsym}	
\usepackage{fontawesome}	
\usepackage{pifont}	


\usepackage{tikz}	
\usepackage{tikz-cd}	
\usetikzlibrary{calc,patterns}	

\usepackage{array}	
\usepackage{booktabs}	
\usepackage[inline,shortlabels]{enumitem}	
\setlist[1]{topsep=\smallskipamount,itemsep=\smallskipamount,left=\parindent}
\setlist[2]{left=0pt}

\usepackage[kerning=true]{microtype}	

\usepackage{lipsum}	
\usepackage{tabto}	
\usepackage{xspace}	


\usepackage[sort&compress]{natbib}	

\setcitestyle{numbers,square}

\usepackage{hyperref}
\hypersetup{
final,
colorlinks=true,
linktocpage=true,
pdfstartview=FitH,
breaklinks=true,
pdfpagemode=UseNone,
pageanchor=true,
pdfpagemode=UseOutlines,
plainpages=false,
bookmarksnumbered,
bookmarksopen=false,
bookmarksopenlevel=1,
hypertexnames=true,
pdfhighlight=/O,
hyperfootnotes=false,
urlcolor=LinkColor,linkcolor=LinkColor,citecolor=LinkColor,	
pdftitle={},
pdfauthor={},
pdfsubject={},
pdfkeywords={},
pdfcreator={pdfLaTeX},
pdfproducer={LaTeX with hyperref}
}

\newcommand{\EMAIL}[1]{\email{\href{mailto:#1}{#1}}}

\usepackage[sort&compress,capitalize,nameinlink]{cleveref}	

\crefname{algo}{Algorithm}{Algorithms}
\crefname{assumption}{Assumption}{Assumptions}



\usepackage{algorithm}	
\usepackage{algpseudocode}	

\usepackage{thmtools}	
\usepackage{thm-restate}	

\theoremstyle{plain}
\newtheorem{theorem}{Theorem}	
\newtheorem{corollary}{Corollary}	
\newtheorem{lemma}{Lemma}	
\newtheorem{proposition}{Proposition}	


\newtheorem*{theorem*}{Theorem}	
\newtheorem*{corollary*}{Corollary}	




\theoremstyle{definition}
\newtheorem{assumption}{Assumption}	

\newtheorem*{definition*}{Definition}	
\newtheorem*{assumption*}{Assumptions}	
\newtheorem*{example*}{Example}	

\theoremstyle{remark}

\newtheorem*{remark*}{Remark}	
\newtheorem*{notation*}{Notation}	



\newcounter{proofstep}

\numberwithin{remark}{section}	
\numberwithin{example}{section}	

\usepackage[showdeletions]{color-edits}	
\setlength{\marginparwidth}{1in}


\newcommand{\draft}[1]{#1}	

\newcommand{\define}[1]{\emph{\draft{#1}}}	


\newcommand{\newmacro}[2]{\newcommand{#1}{\draft{#2}}}	
\newcommand{\newop}[2]{\DeclareMathOperator{#1}{\draft{#2}}}	
\newcommand{\newopstar}[2]{\DeclareMathOperator*{#1}{\draft{#2}}}	


\newcommand{\eps}{\varepsilon}	

\DeclarePairedDelimiter{\parens}{(}{)}	

\DeclarePairedDelimiter{\abs}{\lvert}{\rvert}	

\DeclarePairedDelimiterX{\setdef}[2]{\{}{\}}{#1:#2}	
\DeclarePairedDelimiterXPP{\exclude}[1]{\mathopen{}\setminus}{\{}{\}}{}{#1}

\DeclarePairedDelimiterX{\braket}[2]{\langle}{\rangle}{#1\mathopen{}\delimsize\vert\mathopen{}#2}
\DeclarePairedDelimiterX{\inner}[2]{\langle}{\rangle}{#1,#2}	

\DeclarePairedDelimiter{\norm}{\lVert}{\rVert}	
\DeclarePairedDelimiterXPP{\dnorm}[1]{}{\lVert}{\rVert}{_{\ast}}{#1}	
\DeclarePairedDelimiterXPP{\onenorm}[1]{}{\lVert}{\rVert}{_{1}}{#1}	
\DeclarePairedDelimiterXPP{\twonorm}[1]{}{\lVert}{\rVert}{_{2}}{#1}	
\DeclarePairedDelimiterXPP{\supnorm}[1]{}{\lVert}{\rVert}{_{\infty}}{#1}	
\DeclarePairedDelimiterXPP{\frobnorm}[1]{}{\lVert}{\rVert}{_{F}}{#1}	
\DeclarePairedDelimiterXPP{\tvnorm}[1]{}{\lVert}{\rVert}{_{\mathrm{TV}}}{#1}	

\newcommand{\defeq}{\coloneqq}	
\newcommand{\eqdef}{\eqqcolon}	

\newcommand{\from}{\colon}	

\newmacro{\F}{\mathbb{F}}	
\newmacro{\N}{\mathbb{N}}	
\newmacro{\Z}{\mathbb{Z}}	
\newmacro{\Q}{\mathbb{Q}}	

\newmacro{\real}{x}	
\newmacro{\reals}{\mathbb{R}}	
\newmacro{\R}{\reals}	

\newmacro{\complex}{z}	
\newmacro{\complexes}{\mathbb{C}}	
\newmacro{\C}{\complexes}	

\newopstar{\argmax}{arg\,max}	
\newopstar{\argmin}{arg\,min}	
\newopstar{\intersect}{\bigcap}	
\newopstar{\union}{\bigcup}	

\newop{\aff}{aff}	
\newop{\bd}{bd}	
\newop{\bigoh}{\mathcal{O}}	
\newop{\card}{card}	
\newop{\cl}{cl}	
\newop{\conv}{conv}	
\newop{\crit}{crit}	
\newop{\curl}{curl}	
\newop{\diag}{diag}	
\newop{\diam}{diam}	
\newop{\dist}{dist}	
\newop{\diver}{div}	
\newop{\dom}{dom}	
\newop{\eig}{eig}	
\newop{\ess}{ess}	
\newop{\grad}{grad}	
\newop{\Hess}{Hess}	
\newop{\ind}{ind}	
\newop{\im}{im}	
\newop{\intr}{int}	
\newop{\Jac}{Jac}	
\newop{\one}{\mathds{1}}	
\newop{\proj}{pr}	
\newop{\prox}{prox}	
\newop{\rank}{rank}	
\newop{\relint}{ri}	
\newop{\sign}{sgn}	
\newop{\supp}{supp}	
\newop{\Sym}{Sym}	
\newop{\tr}{tr}	
\newop{\unif}{unif}	
\newop{\vol}{vol}	

\newcommand{\cf}{cf.\xspace}	
\newcommand{\ie}{i.e.,\xspace}	
\newcommand{\vs}{vs.\xspace}	

\newcommand{\textpar}[1]{\textup(#1\textup)}	


\newcommand{\alt}[1]{#1'}		
\newcommand{\altalt}[1]{#1''}		

\newmacro{\ball}{\mathbb{B}}	
\newmacro{\sphere}{\mathbb{S}}	

\newmacro{\argdot}{\boldsymbol{\cdot}}	
\newmacro{\dd}{\:d}	
\newmacro{\ddt}{\frac{d}{dt}}	
\newmacro{\del}{\partial}	


\newmacro{\const}{c}	
\newmacro{\Const}{C}	

\newmacro{\param}{\theta}	
\newmacro{\params}{\Theta}	

\newmacro{\coef}{\lambda}	

\newmacro{\fn}{f} 

\newmacro{\pexp}{p}	
\newmacro{\qexp}{q}	
\newmacro{\rexp}{r}	


\newmacro{\idx}{i}
\newmacro{\idxalt}{j}
\newmacro{\idxaltalt}{k}
\newmacro{\nIndices}{I}
\newmacro{\indices}{\mathcal{I}}

\newmacro{\point}{x}	
\newmacro{\pointalt}{\alt\point}	
\newmacro{\pointaltalt}{\altalt\point}	
\newmacro{\points}{\mathcal{X}}	
\newmacro{\intpoints}{\relint\points}	

\newmacro{\base}{p}	
\newmacro{\basealt}{q}	
\newmacro{\basealtalt}{u}	

\newmacro{\set}{\mathcal{S}}	

\newmacro{\borel}{\mathcal{B}}	
\newmacro{\closed}{\mathcal{C}}	
\newmacro{\cpt}{\mathcal{K}}	
\newmacro{\nhd}{\mathcal{U}}	
\newmacro{\open}{\mathcal{U}}	

\newmacro{\domain}{\mathcal{D}}	
\newmacro{\region}{\mathcal{R}}	

\newmacro{\interval}{\mathcal{I}}	
\newmacro{\rectangle}{\mathcal{R}}	

\newmacro{\vecspace}{\mathcal{V}}	
\newmacro{\subspace}{\mathcal{Z}}	
\newmacro{\dualspace}{\pspace^{\ast}}	
\newmacro{\vdim}{d}	

\newmacro{\unitvec}{u}	
\newmacro{\bvec}{e}	
\newmacro{\bvecs}{\mathcal{E}}	

\newmacro{\pvec}{z}	
\newmacro{\pvecalt}{\alt\pvec}	
\newmacro{\pvecaltalt}{\altalt\pvec}	
\newmacro{\pvecs}{\vecspace}	
\newmacro{\pspace}{\pvecs}	

\newmacro{\dvec}{w}	
\newmacro{\dvecalt}{\alt\pvec}	
\newmacro{\dvecaltalt}{\altalt\pvec}	
\newmacro{\dvecs}{\dualspace}	
\newmacro{\dspace}{\dvecs}	

\newmacro{\coord}{i}	
\newmacro{\coordalt}{j}	
\newmacro{\coordaltalt}{k}	
\newmacro{\nCoords}{d}	

\newmacro{\vecfield}{v}	
\newmacro{\vbound}{V}	

\newmacro{\cvx}{\mathcal{C}}	

\newmacro{\subd}{\partial}	
\newmacro{\subsel}{\nabla}	

\newop{\tcone}{TC}	
\newop{\dcone}{\tcone^{\ast}}	
\newop{\ncone}{NC}	
\newop{\pcone}{PC}	
\newop{\hull}{\Delta}	

\newop{\Opt}{\mathsf{Opt}}	
\newop{\Sol}{\mathsf{Sol}}	
\newop{\gap}{\mathsf{Gap}}	
\newop{\orcl}{\mathsf{G}}	

\newmacro{\obj}{f}	
\newmacro{\sobj}{F}	


\newmacro{\gvec}{g}	
\newmacro{\oper}{A}	

\newmacro{\lips}{L}	
\newmacro{\gbound}{G}	
\newmacro{\strong}{\alpha}	
\newmacro{\smooth}{\beta}	

\newmacro{\radius}{r}
\newmacro{\Radius}{R}

\newmacro{\mfld}{\mathcal{M}}	

\newmacro{\gmat}{g}	
\newmacro{\gdist}{\dist_{\gmat}}	

\newmacro{\tanvec}{z}	
\newmacro{\form}{\omega}	

\newop{\ex}{\mathbb{E}}	
\newop{\prob}{\mathbb{P}}	
\newop{\Var}{\mathbb{V}}	
\newop{\simplex}{\Delta}	


\DeclarePairedDelimiterXPP{\exof}[1]{\ex}{[}{]}{}{
 #1}

\DeclarePairedDelimiterXPP{\exwrt}[2]{\ex_{#1}}{[}{]}{}{
 #2}

\DeclarePairedDelimiterXPP{\probof}[1]{\prob}{(}{)}{}{
 #1}

\DeclarePairedDelimiterXPP{\probwrt}[2]{\prob_{#1}}{[}{]}{}{
 #2}

\DeclarePairedDelimiterXPP{\oneof}[1]{\one}{\{}{\}}{}{#1}	

\newmacro{\rv}{X}	
\newmacro{\rvalt}{Y}	

\newmacro{\event}{E}       
\newmacro{\eventalt}{H}       

\newmacro{\seed}{\theta}	
\newmacro{\seeds}{\Theta}	
\newmacro{\pdist}{P}	
\newmacro{\history}{\mathcal{H}}	

\newmacro{\sample}{\omega}	
\newmacro{\samples}{\Omega}	

\newmacro{\filter}{\mathcal{F}}	
\newmacro{\probspace}{(\samples,\filter,\prob)}	


\newmacro{\mean}{\mu}	
\newmacro{\sdev}{\sigma}	
\newmacro{\variance}{\sdev^{2}}	
\newmacro{\covmat}{\Sigma}	

\newmacro{\seq}{a}	
\newmacro{\seqalt}{b}	
\newmacro{\seqaltalt}{c}	

\newmacro{\beforestart}{0}	
\newmacro{\start}{1}	
\newmacro{\afterstart}{2}	
\newmacro{\running}{\start,\afterstart,\dotsc}	

\newmacro{\run}{n}	
\newmacro{\runalt}{k}	
\newmacro{\runaltalt}{\ell}	
\newmacro{\nRuns}{T}	
\newmacro{\runs}{\mathcal{\nRuns}}	

\newmacro{\curve}{\gamma}	
\DeclarePairedDelimiterXPP{\curveof}[1]{\curve}{(}{)}{}{#1}	
\DeclarePairedDelimiterXPP{\dotcurveof}[1]{\dot\curve}{(}{)}{}{#1}	

\newmacro{\tstart}{0}	
\newmacro{\timealt}{s}	
\newmacro{\timealtalt}{\tau}	
\newmacro{\horizon}{T}	
\newmacro{\tend}{\horizon}	
\newmacro{\window}{[\tstart,\tend]}	


\newmacro{\state}{x}	
\newmacro{\statealt}{y}	
\newmacro{\statealtalt}{z}	





\newmacro{\mat}{M}	
\newmacro{\hmat}{H}	

\newmacro{\ones}{\mathbf{1}}	
\newmacro{\eye}{I}	
\newmacro{\zer}{\mathbf{0}}	


\newmacro{\eigval}{\lambda}	
\newmacro{\eigvec}{u}	

\newop{\Nash}{NE}	
\newop{\CE}{CE}	
\newop{\CCE}{CCE}	
\newop{\NI}{NI}	

\newop{\brep}{br}	
\newop{\preg}{\overline{Reg}}	
\newop{\val}{val}	

\newmacro{\play}{i}	
\newmacro{\playalt}{j}	
\newmacro{\playaltalt}{k}	
\newmacro{\nPlayers}{N}	
\newmacro{\players}{\mathcal{\nPlayers}}	

\newmacro{\pure}{\alpha}	
\newmacro{\purealt}{\beta}	
\newmacro{\purealtalt}{\gamma}	
\newmacro{\nPures}{A}	
\newmacro{\pures}{\mathcal{\nPures}}	

\newmacro{\strat}{x}	
\newmacro{\stratalt}{\alt\strat}	
\newmacro{\strataltalt}{\altalt\strat}	
\newmacro{\strats}{\mathcal{X}}	
\newmacro{\intstrats}{\strats^{\circle}}	


\newmacro{\pay}{u}	
\newmacro{\loss}{\ell}	

\newmacro{\payv}{v}	
\newmacro{\payfield}{v}	

\newmacro{\game}{\mathcal{G}}	
\newmacro{\gamefull}{\game(\players,\points,\pay)}	

\newmacro{\fingame}{\Gamma}	
\newmacro{\fingamefull}{\Gamma(\players,\pures,\pay)}	
\newmacro{\mixgame}{\Delta(\fingame)}	

\newmacro{\minmax}{L}	

\newmacro{\minvar}{\point_{1}}	
\newmacro{\minvaralt}{\alt\minvar}	
\newmacro{\minvars}{\points_{1}}	

\newmacro{\maxvar}{\point_{2}}	
\newmacro{\maxvaralt}{\alt\maxvar}	
\newmacro{\maxvars}{\points_{2}}	

\newmacro{\pot}{f}	

\newmacro{\hreg}{h}	
\newmacro{\breg}{D}	
\newmacro{\mprox}{P}	
\newmacro{\mirror}{Q}	
\newmacro{\fench}{F}	
\newmacro{\hstr}{K}	
\newmacro{\hrange}{H}	
\newmacro{\proxdom}{\points_{\hreg}}	

\DeclarePairedDelimiterXPP{\bregof}[2]{\breg}{(}{)}{}{#1,#2}	
\DeclarePairedDelimiterXPP{\proxof}[2]{\mprox_{#1}}{(}{)}{}{#2}	

\newmacro{\zone}{\mathbb{D}}	

\newop{\Eucl}{\Pi}	
\newop{\logit}{LC}	
\newop{\dkl}{KL}	

\newmacro{\dpoint}{y}	
\newmacro{\dpointalt}{\alt\dpoint}	
\newmacro{\dpointaltalt}{\altalt\dpoint}	
\newmacro{\dpoints}{\mathcal{Y}}	
\newmacro{\dstate}{Y}	

\newmacro{\flowmap}{\Theta}	
\DeclarePairedDelimiterXPP{\flowof}[2]{\flowmap_{#1}}{(}{)}{}{#2}	

\newmacro{\traj}{x}	
\newmacro{\difftraj}{\dot\traj}	

\DeclarePairedDelimiterXPP{\trajof}[1]{\traj}{(}{)}{}{#1}	
\DeclarePairedDelimiterXPP{\difftrajof}[1]{\difftraj}{(}{)}{}{#1}	

\newmacro{\trajalt}{y}	
\newmacro{\trajaltalt}{z}	


\newmacro{\signal}{\hat\vecfield}	
\newmacro{\step}{\gamma}	
\newmacro{\learn}{\eta}	

\newmacro{\runtime}{\tau}	

\newmacro{\error}{Z}	
\newmacro{\noise}{U}	
\newmacro{\bias}{b}	

\newmacro{\sbound}{M}	
\newmacro{\nbound}{\sdev}	
\newmacro{\bbound}{B}	

\newmacro{\snoise}{\xi}	
\newmacro{\sbias}{\chi}	

\newmacro{\mix}{\delta}	
\newmacro{\perturb}{z}	
\newmacro{\pivot}{\point}	

\newmacro{\vertex}{v}	
\newmacro{\vertexalt}{w}	
\newmacro{\vertexaltalt}{u}	
\newmacro{\nVertices}{V}	
\newmacro{\vertices}{\mathcal{V}}	

\newmacro{\edge}{e}	
\newmacro{\edgealt}{\alt\edge}	
\newmacro{\edgealtalt}{\altalt\edge}	
\newmacro{\nEdges}{E}	
\newmacro{\edges}{\mathcal{\nEdges}}	

\newmacro{\graph}{\mathcal{G}}	
\newmacro{\graphfull}{\graph(\vertices,\edges)}	

\newmacro{\source}{O}	
\newmacro{\sink}{D}	

\newmacro{\pair}{i}	
\newmacro{\pairalt}{j}	
\newmacro{\pairaltalt}{k}	
\newmacro{\nPairs}{N}	
\newmacro{\pairs}{\mathcal{\nPairs}}	

\newmacro{\route}{p}	
\newmacro{\routealt}{\alt\route}	
\newmacro{\routealtalt}{\altalt\route}	
\newmacro{\nRoutes}{P}	
\newmacro{\routes}{\mathcal{\nRoutes}}	

\newmacro{\flow}{f}	
\newmacro{\flowalt}{\alt\flow}	
\newmacro{\flowaltalt}{\altalt\flow}	
\newmacro{\flows}{\mathcal{F}}	

\newmacro{\load}{x}	
\newmacro{\loadalt}{\alt\load}	
\newmacro{\loadaltalt}{\altalt\load}	
\newmacro{\loads}{\mathcal{X}}	


\newmacro{\meas}{\mu}	
\newmacro{\toler}{\eps}	


\newcommand{\tn}{\bar{\theta}^\lambda_n}
\newcommand{\E}{\mathbb{E}}

\newcommand{\hl}{h_\lambda}

\newcommand{\pt}{\hat{\pi}_t}
\newcommand{\pkl}{\hat{\pi}_{k\lambda}}

\newcommand{\ptk}{\hat{\pi}_{t|\mathcal{F}_{k\lambda}}}

\newcommand{\ur}{u_{\reg,\lambda}}
\newcommand{\hr}{\nabla u_{r,\lambda}}
\newcommand{\htl}{h_{r,\lambda}}
\newcommand{\pr}{{\pi_{\reg}}}

\newcommand{\lref}[1]{$\mathbf{B}$\ref{#1}}
\newcommand{\rd}{\rho^{\reg}_{n}}

\addauthor[Pan]{PM}{MediumBlue}

\newop{\poly}{poly}
\newop{\reg}{reg}
\newop{\LSI}{LSI}
\newop{\PI}{PI}
\newop{\ULA}{ULA}

\newmacro{\CPI}{\Const_{\PI}}
\newmacro{\CLSI}{\Const_{\LSI}}


\newmacro{\occmeas}{\mu}	
\newmacro{\size}{\delta}	

\begin{document}


\title
{Tamed Langevin Sampling under Weaker Conditions}	

\author
[I.~Lytras]
{Iosif Lytras$^{c,\ast,\diamond}$}
\address{$^{c}$\,%
Corresponding author.}
\address{$^{\ast}$\,%
Athena/Archimedes Research Centre, Athens, Greece.}
\address{$^{\diamond}$\,%
The University of Edinburgh, Edinburgh, UK.}
\EMAIL{ i.lytras@sms.ed.ac.uk}
\author
[P.~Mertikopoulos]
{Panayotis Mertikopoulos$^{\sharp}$}
\address{$^{\sharp}$\,%
Univ. Grenoble Alpes, CNRS, Inria, Grenoble INP, LIG, 38000 Grenoble, France.}
\EMAIL{panayotis.mertikopoulos@imag.fr}

\subjclass[2010]{Primary 65C05, 60H10; secondary 68Q32.}
\keywords{%
Langevin sampling;
taming;
isoperimetry;
Poincaré inequality;
log-Sobolev inequality;
weak dissipativity.}


\newacro{LHS}{left-hand side}
\newacro{RHS}{right-hand side}
\newacro{iid}[i.i.d.]{independent and identically distributed}
\newacro{lsc}[l.s.c.]{lower semi-continuous}
\newacro{usc}[u.s.c.]{upper semi-continuous}
\newacro{rv}[r.v.]{random variable}
\newacro{wp1}[w.p.$1$]{with probability $1$}

\newacro{NE}{Nash equilibrium}
\newacroplural{NE}[NE]{Nash equilibria}

\newacro{GD}{gradient descent}
\newacro{KL}{Kullback\textendash Leibler}
\newacro{TV}{total variation}
\newacro{LSI}{logarithmic Sobolev inequality}
\newacro{PI}{Poincaré inequality}
\newacro{SGLD}{stochastic gradient Langevin dynamics}
\newacro{SDE}{stochastic differential equation}
\newacro{ULA}{unadjusted Langevin algorithm}
\newacro{WC}{weak convexity}

\newacro{wd}[wd-TULA]{weakly dissipative tamed unadjusted Langevin algorithm}
\newacro{reg}[reg-TULA]{regularized tamed unadjusted Langevin algorithm}

\begin{abstract}
%
%
Motivated by applications to deep learning which often fail standard Lipschitz smoothness requirements, we examine the problem of sampling from distributions that are not log-concave and are only weakly dissipative, with log-gradients allowed to grow superlinearly at infinity.
In terms of structure, we only assume that the target distribution satisfies either a Log-Sobolev  or a Poincare inequality and a local Lipschitz smoothness assumption with modulus growing possibly polynomially at infinity.
This set of assumptions greatly exceeds the operational limits of the ``vanilla'' ULA, making sampling from such distributions a highly involved affair.
To account for this, we introduce a taming scheme which is tailored to the growth and decay properties of the target distribution, and we provide explicit non-asymptotic guarantees for the proposed sampler in terms of the KL divergence, total variation, and Wasserstein distance to the target distribution.
\end{abstract}
\acresetall
\acused{iid}

\allowdisplaybreaks	
\acresetall	
\maketitle

\section{Introduction}
\label{sec:introduction}

A broad array of modern and emerging machine learning architectures relies on being able to sample efficiently from a target distribution $\pi$ on $\R^{d}$, typically expressed in Gibbs form as $\pi(x) \propto \exp(-u(x))$ for some potential function $u\from\R^{d}\to\R$.
Under suitable assumptions for $u$, this distribution arises naturally as the invariant measure of the Langevin \acl{SDE}
\begin{equation}
\label{eq:LSDE}
\tag{LSDE}
dX_{t}
	= -\nabla u(X_{t}) \dd t
	+ \sqrt{2} \dd B_{t}
\end{equation}
where $B_{t}$ is a canonical Wiener process (Brownian motion) in $\R^{d}$ with unit volatility.
Based on this key property of \eqref{eq:LSDE},
one of the most \textendash\ if not \emph{the} most \textendash\ widely used algorithmic schemes for sampling from $\pi$ is the so-called \acdef{ULA}, given in recursive form as
\begin{equation}
\label{eq:ULA}
\tag{ULA}
\theta_{n+1}^{\ULA}
	= \theta_{n}^{\ULA}
		- \lambda h\parens[\big]{\theta_{n}^{\ULA}}
		+ \sqrt{2\lambda} \xi_{n+1}
\end{equation}
where
$\theta_{n} \in \R^{d}$, $n=1,2,\dotsc$, is the algorithm's state variable,
$\xi_{n}$ is an \ac{iid} sequence of standard $d$-dimensional random variables with unit covariance,
$\lambda > 0$ is a step-size parameter,
and
$h \defeq \nabla u$ denotes the gradient of $u$.
The idea behind \eqref{eq:ULA} is that $\theta_{n}^{\ULA}$ can be seen as an Euler-Maruyama discretization of \eqref{eq:LSDE} so, for sufficiently large $n$ and small enough $\lambda$, $\theta_{n}^{\ULA}$ will be distributed according to some approximate version of the invariant measure of \eqref{eq:LSDE}, which is precisely the target distribution $\pi$.

This simple idea has generated a vast corpus of literature and techniques for proving the non-asymptotic convergence rate of \eqref{eq:ULA} in different probability metrics, the most popular ones being the Wasserstein and \acl{TV} distances, as well as the \acf{KL} and/or Rényi divergence.
Much of this literature has focused on the case where the target distribution $\pi$ is log-concave and has Lipschitz continuous log-gradients, corresponding respectively to convexity and Lipschitz smoothness of the potential $u$;
for some representative recent works, see \citet{dalalyan2017theoretical}, \citet{durmus2017nonasymptotic,durmus2019high}, \citet{convex} and references therein.

Beyond these works, especially when the target distribution is multimodal, there has been significant effort to relax the (strong) convexity requirement for $u$ by means of a combination of ``convexity at infinity'' and ``dissipativity'' assumptions \textendash\ that is, convexity outside a compact set, and a drift coercivity condition of the form $\langle{h(x)},{x}\rangle = \Omega(\abs{x}^{2})$ for the drift $h = \nabla u$ of \eqref{eq:LSDE} respectively, \cf \citet{berkeley}, \citet{majka2020nonasymptotic}, \citet{erdogdu2022convergence}, as well as a recent thread of results on the related \ac{SGLD} scheme by \citet{raginsky}, \citet{nonconvex} and \citet{zhang2023nonasymptotic}.

At the same time, building on an important insight of \citet{vempala2019rapid}, a parallel thread in the literature has explored at depth the role of isoperimetric inequalities in establishing the (rapid) convergence of \eqref{eq:ULA} when the potential of $\pi$ is Lipschitz smooth, either via the use of a \acdef{LSI} in the case of \citet{mou2022improved} and \citet{chewi2021analysis}, or a  \acdef{PI} in the case of \citet{balasubramanian2022towards}, and even weaker inequalities in \citet{mousavi2023towards} possibly reducing the degree of smoothness to (global) Hölder continuity of the drift of \eqref{eq:LSDE}, \cf \citet{nguyen2021unadjusted} and \citet{erdogdu2021convergence},\citet{mousavi2023towards}.
\smallskip

\para{Our contributions in the context of related work}

Our paper seeks to bridge these branches of the sampling literature \textendash\ the relaxation of global Lipschitz smoothness requirements and the relaxation of convexity requirements via the use of isoperimetric inequalities \textendash\ and, in so doing, to bring together the best of both worlds.
Specifically, motivated by applications to the optimization and sampling of deep learning models (which are notoriously non-Lipschitz), we seek to answer the following question:
\begin{center}
\itshape
How to sample efficiently in the absence of log-concavity and linear gradient growth properties?
Can one derive bounds in different distances with weaker assumptions?
\end{center}

This is a difficult setting for sampling because, as has been noted in several works, both \eqref{eq:ULA} and its \ac{SGLD} variants may be highly unstable in such scenarios;
in particular, when the drift coefficient of \eqref{eq:ULA} exhibits superlinear growth, the Euler-Marauyama scheme \textendash\ which forms the core component of \eqref{eq:ULA} \textendash\ diverges in a very strong sense.
A key result in this direction was obtained by \citet{hutzenthaler2011} who showed that the difference of the exact solution of a \ac{SDE} and its numerical approximation, diverges to infinity in the strong mean square sense, even at a finite point in time.
This negative result has shown that superlinear growth of the drift coefficient directly results in a blow-up of the moments of the numerical approximation scheme used to generate samples, which thus explains the failure of these algorithms.

Providing an efficient work-around to this issue is not easy, and one needs to explore the roots of \eqref{eq:ULA} for a possible answer \textendash\ 
specifically, going all the way back to the initial observation that \eqref{eq:ULA} is an Euler-Maruyama numerical apporximation scheme for the trajectories of \eqref{eq:LSDE}, and using the theory of numerical solutions of \acp{SDE} to explore a different angle of attack.
In this regard, a technology tailored to solving \acp{SDE} with superlinearly growing drifts first emerged in the works of \citet{hutzenthaler2012} and \citet{tamed-euler,SabanisAoAP}, revolving around a technique known as ``taming''.
The idea of these schemes is to create an adaptive Euler scheme with a new drift coefficient $\mu^{\lambda}$ which is a ``tamed'', rescaled version of the original drift, with the step-size of the algorithm appearing in the rescaling factor, and with the aim of ensuring the following dove-tailing properties:
\begin{enumerate}
[left=1em,label=\upshape(P\arabic*)]
\item
$\mu^{\lambda}$ has at most linear growth, that is, $\mu^{\lambda}(x) = \bigoh(\abs{x})$ for large $x$.
\item
$\mu^{\lambda}$ converges pointwise to $\mu$ in the limit $\lambda\to0$.
\end{enumerate}
This technique has been applied previously in the setting of Langevin-based sampling in multiple works under strong dissipativity or convexity assumptions \citep{tula,johnston2023kinetic}, in the stochastic gradient case \citep{TUSLA} under a ``convexity at infinity'' assumption \citep{neufeld2022non} and, in a concurrent work, under a \acl{LSI} coupled with a $2$-dissipativity assumption \citep{lytras2023taming}.
However, even though it is fairly common for distributions with superlinearly growing log-gradients to satisfy a \acl{LSI}, it is not always possible obtain a bound that remains well-behaved with respect to the dimension \textendash\ 
and similar limitations also hold for the $2$-dissipativity condition.


\begin{table}[tbp]
\footnotesize
\centering

\begin{tabular}{lcc}
\toprule
	&\textsc{Convexity}
	&\textsc{Dissipativity}
	\\
\midrule
\citet{tula}
	&strongly convex (for $W_{2}$)
	&$\braket{x}{\nabla u(x)} \geq \const\abs{\nabla u(x)}\abs{x}$
	\\
\citet{neufeld2022non}
	&convex at infinity
	&$(2+r)$-dissipative
		\;
		\textpar{$r>0$}
	\\
\citet{lytras2023taming}
	&\acs{LSI}
	&$2$-dissipative
	\\
Current work
	&\acs{PI} + \acs{WC} / {LSI}
	&$1$-dissipative
	\\
\bottomrule
\end{tabular}
\caption{Comparison of convexity and dissipativity assumptions in related works.}
\label{tab:assumptions}
\end{table}



\begin{table}[tbp]
\footnotesize
\centering

\begin{tabular}{lcccc}
\toprule
	&\textsc{\acs{KL} Divergence}
	&\textsc{Total Variation}
	&\textsc{Wasserstein}
	&\textsc{Constants}
	\\
\midrule
\citet{tula}
	&\textemdash
	&$\tilde\bigoh(1/\eps^{2})$
	&$\tilde\bigoh(1/\eps)$ \hphantom{$^{2}$}\; [$W_{2}$]
	&$\exp(\bigoh(d))$
	\\
\citet{neufeld2022non}
	&\textemdash
	&\textemdash
	&$\tilde\bigoh(1/\eps^{2})$ \; [$W_{2}$]
	&$\exp(\bigoh(d))$
	\\
\citet{lytras2023taming}
	&$\tilde\bigoh(1/\eps)$
	&$\tilde\bigoh(1/\eps^{2})$
	&$\tilde\bigoh(1/\eps^{2})$ \; [$W_{2}$]
	&$\poly(d)/\CLSI$
	\\
  Current work(under \ac{LSI})
 &$\tilde\bigoh(1/\eps)$
	&$\tilde\bigoh(1/\eps^{2})$
	&$\tilde\bigoh(1/\eps^{2})$ \; [$W_{2}$]
	&$\poly(d)/\CLSI$
	\\
Current work (under \ac{PI})
	&$\tilde\bigoh(1/\eps^{3})$
	&$\tilde\bigoh(1/\eps^{6})$
	&$\tilde\bigoh(1/\eps^{8})$ \; [$W_{1}$]
	&$\poly(d)/\CPI$
	\\
\bottomrule
\end{tabular}
\caption{Comparison of convergence rates under different assumptions;
factors that are logarithmic in $1/\eps$ have been absorbed in the tilded $\tilde\bigoh(\argdot)$ notation.
The observed drop relative to the concurrent work of \citet{lytras2023taming} is due to the weaker assumptions made in our paper \textendash\ Poincaré \vs log-Sobolev and weak dissipativity \vs $2$-dissipativity (or higher), \cf \cref{tab:assumptions}.
The constants $\Const_{\LSI}$ and $\Const_{\PI}$ refer to the (positive) constants that appear in the log-Sobolev and Poincaré inequalities respectively.}
\vspace{-\baselineskip}
\label{tab:rates}
\end{table}


The main contribution of our paper is to provide a bridge between these two worlds and bring to the forefront the best properties of both:
sampling efficiently from potentials with locally Lipschitz log-gradients that may grow polynomially at infinity, with a drift coefficient that is only $1$-dissipative (instead of $2$-dissipative) and a considerably lighter \acl{PI} requirement \textendash\ as opposed to the more rigid framework imposed by the use of \acp{LSI}.
Specifically, we propose two novel algorithmic schemes,
the \acdef{wd}
and
the \acdef{reg}
which allow us to simultaneously treat superlinearly growing drift coefficients for target distributions satisfying a \acl{PI}, the former under a \ac{WC} requirement, the latter without.
For completeness, we also show that the proposed taming schemes achieve optimal convergence rates in the presence of stronger \ac{LSI} conditions.

To position these contributions in the context of related work, \cref{tab:assumptions} summarizes our paper's assumptions relative to the most closely related works in the literature, and \cref{tab:rates} provides a side-by-side comparison of the achieved rates.
To the best of our knowledge, the work closest to our own is the concurrent work of \citet{lytras2023taming}, who provide a tamed algorithmic scheme achieving an $\tilde\bigoh(1/\eps)$ rate of convergence to $\pi$ in the \ac{KL} divergence metric (respectively $\tilde\bigoh(1/\eps^{2})$ in terms of the total variation and Wasserstein $W_{2}$ distance).
Due to the relaxation from a \acl{LSI} to a considerably weaker \acl{PI}, our rates do not match those of \citet{lytras2023taming} in the case of the \ac{KL} and \ac{TV} metrics \textendash\ where \ac{wd} achieves a rate of $\tilde\bigoh(1/\eps^{3})$ and $\tilde\bigoh(1/\eps^{6})$ respectively, \cf \cref{tab:rates} (the Wassserstein metrics are otherwise incomparable; see also \cref{thm:reg-TULA} for the rates without any \ac{WC} requirements).
This also applies to the ``convexity at infinity'' assumption of \citet{neufeld2022non}, which has been shown to imply an \ac{LSI}, and is thus considerably more stringent than the \ac{PI} setting of our paper.
Importantly, our analysis still carries a polynomial dependence on the dimensionality of the problem, in contrast to the analysis of \citet{neufeld2022non} where the dependence is exponential.
These aspects of our results are particularly intriguing for future work on the subject, as they open a hitherto unexplored link between the isoperimetric inequalities, the role of coercivity in the target distribution in a superlinearly-growing gradient setting.

\section{Setup and blanket assumptions}
\label{sec:setup}

In this section, we provide the necessary groundwork for stating the proposed algorithmic schemes and our main results.

\subsection{Notational conventions}

We begin by fixing notation and terminology.
Throughout our paper,
$\abs{\argdot}$ denotes the Euclidean norm of a vector,
$\norm{\argdot}$ the spectral norm of matrix;
the Frobenius norm will be denoted by $\frobnorm{\argdot}$,
and
the total variation distance by $\tvnorm{\argdot}$.
For a sufficiently smooth function $f\from\R^{d}\to\R$, we will write $\nabla f$, $\nabla^{2}f$ and $\Delta f$ for its gradient, Hessian matrix, and Laplacian respectively,
and $J^{(i)} f$ for its $i$-th order Jacobian.
We also write $\mathcal{H}^k$ for the usual Sobolev space.

For any two probability measures $\mu$, $\nu$ on a measurable space $\Omega$ with a $\sigma$-algebra understood from the context, we will write $d\mu/d\nu$ for the Radon-Nikodym derivative of $\mu$ with respect to $\nu$ when $\mu$ is absolutely continuous relative to $\nu$ ($\mu \ll \nu$).
In this case, the \acdef{KL} divergence of $\mu$ with respect to $\nu$ is defined as
\begin{equation}
\label{eq:KL}
\tag{KL}
{H}_\nu(\mu)=\int_{\Omega} \frac{d \mu}{d \nu} \log \left(\frac{d \mu}{d \nu}\right) d \nu.
\end{equation}
We say that $\zeta$ is a \define{transference plan} of $\mu$ and $\nu$ if it is a probability measure on $\R^{d}\times\R^{d}$ (endowed with the standard Borel algebra)
and we have $\zeta\left(A \times \mathbb{R}^d\right)=\mu(A)$ and $\zeta\left(\mathbb{R}^d \times A\right)=\nu(A)$ for every Borel subset $A$ of $\R^{d}$.
We denote by $\Pi(\mu, \nu)$ the set of transference plans of $\mu$ and $\nu$.
Furthermore, we say that a couple of $\mathbb{R}^d$-valued random variables $(X, Y)$ is a coupling of $\mu$ and $\nu$ if there exists $\zeta \in \Pi(\mu, \nu)$ such that $(X, Y)$ is distributed according to $\zeta$.
Finally, for two probability measures $\mu$ and $\nu$ on $\R^{d}$, the Wasserstein distance of order $p \geq 1$ is defined as
\begin{equation}
W_p(\mu, \nu)
	=\left(\inf _{\zeta \in \Pi(\mu, \nu)} \int_{\R^d \times \R^d}\abs{x-y}^p \dd\zeta(x, y)\right)^{1 / p}.
\end{equation}


\subsection{Blanket assumptions}

Throughout what follows, we will write
$\pi \defeq e^{-u} \big/ \int e^{-u}$ for the target distribution to be sampled,
and
$Lf = \Delta f - \Gamma(u,f)$ for the infinitesimal generator of \eqref{eq:LSDE}, where $\Gamma(f,g) = \braket{\nabla f}{\nabla g}$ denotes the carré du champ operator for $f,g \in \mathcal{H}^{1}$.

We begin by stating our blanket assumptions for \eqref{eq:LSDE}:

\begin{assumption}
\label{asm:drift}
The drift $h = \nabla u$ of \eqref{eq:LSDE} satisfies the following conditions:
\begin{enumerate}
[left=\parindent,label=\upshape(A\arabic*)]
\item
\label[assumption]{asm:PLC}
\label[assumption]{ass-pol lip}
\define{Polynomial Lipschitz continuity:}
\begin{equation}
\label{eq:Lips}
\tag{PLC}
\abs{h(x) - h(y)}
	\leq L'(1 + \abs{x} + \abs{y})^{l'} \abs{x-y}
\end{equation}
for some $L',l'>0$ and for all $x,y\in\R^{d}$
\item
\label[assumption]{asm:wd}
\label[assumption]{ass-2dissip}
\define{Weak dissipativity:}
\begin{equation}
\label{eq:wd}
\tag{WD}
\braket{h(x)}{x}
	\geq A \abs{x}^{a} - b
\end{equation}
for some $a\geq1$, $A,b>0$ and for all $x\in\R^{d}$.
\item
\label[assumption]{asm:PJG}
\label[assumption]{ass-derivbound}
\define{Polynomial Jacobian growth:}
\begin{equation}
\label{eq:PJG}
\tag{PJG}
\max\{\abs{h(x)},\norm{J^{(i)}(h)(x)}\}\leq L(1+\abs{x}^{2l})
\end{equation}
for some $L,l>0$ and for all $x\in\R^{d}$.
\end{enumerate}
\end{assumption}

Of the above, \cref{asm:PLC} posits that $h$ is locally Lipschitz continuous, with the modulus of Lipschitz continuity (essentially the largest eigenvalue of the Hessian of $u$) growing possibly at a polynomial rate at infinity.
As such, \cref{asm:PLC} allows us to capture a very broad spectrum of applications with superlinear Hessian growth (especially in the context of deep learning landscapes that exhibit polynomial growth with a degree equal to the depth of the underlying network).

\Cref{asm:wd} is at the core of our analysis, as it enables us to provide moment bounds that are uniform in time:
it is essentially a coercivity assumption, but with a relaxed exponent relative to the $2$-dissipativity framework of other works, which can be fairly restrictive if the tails of the target distribution are thicker than sub-Gaussians.
For generality, we treat not only the case of $1$-dissipative gradients, but all dissipativity exponents $a\geq1$.
In practice, although it is quite possible that $2$- dissipativity may hold for a given distribution, $a$-dissipativity for $a<2$ may be much easier to verify and leverage to produce more favourable constants $A$,$b$. 

Finally, \cref{asm:PJG} is a strictly technnical requirement intended to streamline our presentation when rigorously differentiating under the integral sign \textendash\ that is, exchanging the order of integration and time derivatives \textendash\ in the use of a divergence theorem when establishing a differential inequality later in our paper.

Our next blanket assumption concerns the target distribution $\pi$:

\begin{assumption}
\label{asm:target}
The target distribution $\pi$ satisfies a \acdef{PI} of the form
\begin{equation}
\label{eq:PI}
\tag{PI}
\operatorname{Var}_{\pi}(f)
	\defeq \int_{\mathbb{R}^d}\left(f-\int_{\mathbb{R}^d} f \dd\pi\right)^2 \dd\pi
		\leq \frac{1}{\CPI} \int\abs{\nabla f}^2 \dd\pi
\end{equation}
for some positive constant $\CPI>0$ and all test functions $f \in \mathcal{H}^1(\R^d)$.
\end{assumption}

This assumption is weaker than the widely used (but more stringent) \acl{LSI}. 
\begin{equation}
\label{eq:LSI}
\tag{LSI}
H_{ \pi}(\nu)
	\defeq \int_{\mathbb{R}^d} f\log f \dd\pi
	\leq \frac{1}{2\CLSI} \int_{\mathbb{R}^d} \frac{\Gamma(f,f)}{f} \dd\pi
	\eqdef \frac{1}{2\CLSI} I_{ \pi}(\nu),
\end{equation}
for some positive constant $\CLSI>0$ and for every probability measure $\nu \ll \pi$ with $f \defeq d\nu/d\pi$.
One should note that PI can be given as a consequence of the weak dissipativity condition (see \cite{bakry2008simple}).
In the case of diffusion processes, the importance of these inequalities lies in the fact that \eqref{eq:PI} implies exponential ergodicity with respect to the $\chi^{2}$ divergence, while \eqref{eq:LSI} implies exponential ergodicity in relative entropy.
In particular, by the Bakry\textendash Emery theorem \citep{bakry2006diffusions},
\eqref{eq:LSI} was established for strongly convex potentials and is stable under bounded pertrubations, Lipschitz mappings and convolutions.
It also implies Talagrand's transporation-cost inequality:
if $\mu$ satisfies \eqref{eq:LSI}, then
\begin{equation}
W_2(\mu,\nu)\leq \sqrt{\frac{2}{\CLSI} H_\mu(\nu)}.
\end{equation}

By comparison, \eqref{eq:PI} is significantly less stringent than \eqref{eq:LSI}:
to begin, \eqref{eq:LSI} implies \eqref{eq:PI} with the same constant but, moreover, \eqref{eq:PI} has been shown to hold for dissipative potentials where \eqref{eq:LSI} fails, and is also stable under pertubations, Lipschitz mappings and convolutions.
It also implies exponential moments of some order \ie $\E_\mu e^{q |x|}<\infty$ whenever $\mu$ satisfies \eqref{eq:PI}.

Our last assumption concerns the convexity characteristics of the potential $u(x)$ and will be used only in the case where \eqref{eq:PI} is the strongest condition in place.
\begin{assumption}
\label{ass11}
\label{asm:WC}
$u$ is weakly convex, \ie
\begin{equation}
\label{eq:WC}
\tag{WC}
\nabla^{2}u(x)
	\succcurlyeq - K I
	\quad
	\text{for some $K>0$ and for all $x\in\R^{d}$}.
\end{equation}
\end{assumption}
This assumption simply means that the eigenvalues of $\nabla^{2}u(x)$ do not become arbitrarily negative, and is widely used in the numerical approximation of \acp{SDE}.
For our purposes, we will mostly use it in the context of the famous HWI inequality

\begin{equation}
\label{eq:HWI}
\tag{HWI}
H_\pi(\nu)
	\leq \sqrt{I_\pi(\nu)} {W_2(\pi,\nu)} +\frac{K}{2} W_2^2(\pi,\nu)
	\quad
	\text{for all $\pi,\nu \in \mathcal{P}_2(\R^d)$}.
\end{equation}

To provide a glimpse of the analysis to come, we will develop and examine two novel taming schemes, one non-regularized and one regularized, that are tailored to the dissipativity profile of the initial potential, and which cover the following cases:
\begin{enumerate}
\item
When the potential satisfies \eqref{eq:LSI}.
\item
For the non-regularized case:
when \eqref{eq:LSI} fails and the potential satisfies \eqref{eq:PI} along with \eqref{eq:WC}.
\item
For the regularized case:
when \eqref{eq:LSI} fails and the potential only satisfies \eqref{eq:PI}.
\end{enumerate}

\subsection{The failure of the \acl{ULA}}

Before moving forward with the development of the taming schemes mentioned above, we conclude this section with a simple \textendash\ but not simplistic \textendash\ $1$-dimensional example that satisfies our range of assumptions, but where the ``vanilla'' \acl{ULA} fails.

Setting $u(x)=x^3/3$ and applying \eqref{eq:ULA} with step-size $\lambda$ and initial condition $X_0=\mathcal{N}(0,\frac{4}{\lambda})$, we get
\[X_{n+1}=X_n -\lambda X_n^2 +\sqrt{2\lambda}\xi_{n+1}\]
Then, since $X_n$ is independent of ${\xi}_{n+1}$
\[\E [X_{n+1}^2]= \E \left[ X_n^2 (1-\lambda X_n)^2\right] + 2\lambda= \E X_n^2 (1-2\lambda \E X_n + \lambda^2 \E X_n^2) +2\lambda \]
using the inequality $1-2x+x^2\geq -1 +\frac{1}{2}x^2$ one obtains
\[\E [X_{n+1}^2]\geq -\E X_n^2 +\frac{1}{2}\lambda^2 \E X_n^6\geq -\E X_n^2 +\frac{\lambda^2}{2} \left(\E X_n^2\right)^3 + 2\lambda\] where the last step was obtain by Jensen's inequality.
Applying for $n=0$ it is easy to see that
\[\E X_1^2 \geq (\E X_0^2)\left( \frac{\lambda^2}{2} \E (X_0^2)^2 -1\right) +2\lambda \geq \E X_0^2 +2\lambda.\]
Iterating over $n$ yields
\begin{equation}
\E X_{n+1}^2 \geq \E X_n^2 +2\lambda n.
\end{equation}
We thus see that the second moment of the algorithm's iterates diverges as $n\to\infty$, indicating in this way that \eqref{eq:ULA} cannot be used to sample from the target distribution.

\section{Tamed schemes and main results}
\label{sec:results}

We now proceed to state our tamed algorithmic schemes and main results.

\subsection{Taming without regularization}

To streamline our presentation and ease the introduction of the various components of the analysis, we begin with the case where the eigenvalues of $\nabla^{2}u$ do not become arbitrarily negative, \ie $u$ satisfies the weak convexity assumption \eqref{eq:WC}.
In this case, we will consider the \acli{wd} that iterates as
\begin{equation}
\label{eq-wdTULA}
\tag{wd-TULA}
\bar\theta_{n+1}^{\lambda}
	= \bar\theta_{n}^{\lambda}
		- \lambda h_{\lambda}(\bar\theta_{n}^{\lambda})
		+ \sqrt{2\lambda} \xi_{n+1}
	\quad
	\text{for all $n=0,1,\dotsc$}
\end{equation}
where
$\theta_{0}$ is initialized randomly according to a Gaussian distribution $\pi_{0}$,%
\footnote{The Gaussian requirement could be relaxed by positing that $\abs{\nabla\log\pi_{0}}$ and $\norm{\nabla^{2}\log\pi_{0}}$ grow at most polynomially, but we will not need this level of generality.}
$\xi_{n}$ is an \acs{iid} sequence of Gaussian $d$-dimensional vectors with unit covariance,
and the \emph{tamed drift} $h_{\lambda}$ is given by
\begin{equation}
\label{eq:pot-tamed}
h_{\lambda}(x)
	= \frac{Ax}{(1+\abs{x}^{2})^{1 - a/2}}
	+ f_{\lambda}(x)
	\quad
	\text{with}
	\quad
f_{\lambda}(x)
	= \frac{f(x)}{1 + \sqrt{\lambda}\abs{x}^{2\ell}}
\end{equation}
where $f(x)=h(x)-\frac{Ax}{(1+\abs{x}^{2})^{1 - a/2}}$
and the various constants defined as in \cref{asm:drift}.

To connect \eqref{eq-wdTULA} with the existing literature on tamed schemes, we note here that the majority of taming factors are either of the form $h(x) / [1+(\lambda)^c \abs{h(x)}]$ for $c=1$ or $c=1/2$ \citep{tula},
or of the form $h(x) / [1+\lambda^c |x|^{2l-1}]$ \citep{TUSLA}.
In this regard, the taming scheme \eqref{eq:pot-tamed} is more intricate:
we first split the original gradient drift into a part which has at most linear growth, and we then proceed to tame the superlinearly growing part.
The drift coefficient of this scheme has the property that it grows at most as $|x|^\frac{a}{2}$ (so it grows at most linearly)  while inheriting the dissipativity condition of the initial gradient. For more details, see Lemma \ref{alg-diss}.
In this regard, when the potential satisfies a stronger $2$-dissipativity condition, we recover the taming scheme of \citet{lytras2023taming}.

Our main result for \eqref{eq-wdTULA} may then be stated as follows:

\begin{theorem}
\label{thm:wd-TULA}
Suppose that \cref{asm:drift,asm:target,asm:WC} hold and let $\rho_{n}$ denote the distribution of the $n$-th iterate of \eqref{eq-wdTULA} run with $\lambda < \lambda_{\max}=\min\{\frac{1}{4(2AC^* +2L+1)^2},\frac{1}{\dot{c}_0 H_\pi(\rho_0)},\frac{2}{\mu^2}\}$ where the constants are given in the proof of Proposition \ref{eq-inequality h-I} and Lemmas \ref{expmom1}, Proposition \ref{theo-solvdiff} .

Then $\rho_{n}$ enjoys the convergence guarantee
\begin{equation}
H_\pi(\rho_n)
	\leq \parens*{1-\frac{\dot{c}_0}{2}\lambda^{3/2}}^n H_\pi(\rho_0)
		+ \parens*{1+\frac{4c_1}{c_0}} \sqrt{\lambda}\end{equation}
where $c_{1}$ depends polynomially on $d$ and $\dot c_{0}$ is an explicit function of the Poincaré constant $\CPI$ of \eqref{eq:PI}.
In particular, given a tolerance level $\eps>0$,
\eqref{eq-wdTULA} achieves $H_{\pi}(\rho_{n}) \leq \eps$
within
$n \geq \dot c_{0}^{-1} (1 + c_{1}/\dot c_{0})^{3} \log(2/\eps)/\eps^{3} = \tilde\Theta(1/\eps^{3})$
if run with step-size
$\lambda \leq \eps^{2} / [4 (1+4 C_1/\dot c_0)^2]$.
\end{theorem}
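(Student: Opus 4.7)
The plan is to follow the standard interpolation-and-entropy-dissipation strategy for tamed Langevin schemes, adapted to the Poincaré + weak-convexity setting in place of the usual log-Sobolev inequality.

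First, I would introduce the piecewise-constant-drift continuous-time interpolation: for $t\in[n\lambda,(n+1)\lambda)$, let $dZ_t = -h_\lambda(Z_{n\lambda})\,dt + \sqrt{2}\,dB_t$ with $Z_{n\lambda}=\bar\theta_n^\lambda$, so that the marginals $\rho_t$ interpolate the iterate laws $\rho_n$. Then, using the fact that $h_\lambda$ inherits weak dissipativity from $h$ (Lemma~\ref{alg-diss}) together with the Gaussian initialization, I would establish uniform-in-time moment bounds $\sup_t \E|Z_t|^{2p} < \infty$ for every integer $p\geq 1$; these are needed throughout and, combined with the exponential integrability of $\pi$ (implied by PI), yield a uniform polynomial-in-$d$ bound $W_2(\rho_t,\pi) \leq M$.

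Next, I would compute the entropy dissipation via Fokker--Planck (with differentiation under the integral legitimised by PJG), obtaining
\begin{equation*}
\tfrac{d}{dt} H_\pi(\rho_t) = -I_\pi(\rho_t) + \E\!\left[\langle \nabla u(Z_t) - h_\lambda(Z_{n\lambda}),\, \nabla \log(\rho_t/\pi)(Z_t)\rangle\right].
\end{equation*}
Young's inequality absorbs half of $I_\pi(\rho_t)$ and leaves an error of size $c\,\E|\nabla u(Z_t) - h_\lambda(Z_{n\lambda})|^2$, which I would split as a discretisation error $\E|\nabla u(Z_t)-\nabla u(Z_{n\lambda})|^2$ (controlled by PLC, the moment bounds, and $\E|Z_t - Z_{n\lambda}|^2 = \bigoh(\lambda)$) and a taming error $\E|\nabla u(Z_{n\lambda})-h_\lambda(Z_{n\lambda})|^2$ (controlled by the explicit taming factor, whose difference from the identity is of order $\sqrt\lambda$ times a polynomial in $|Z_{n\lambda}|$).

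The key step is to convert the resulting Fisher-information bound into a decay of KL under PI + WC, rather than LSI. HWI gives $H_\pi(\rho_t) \leq \sqrt{I_\pi(\rho_t)}\,W_2(\rho_t,\pi) + (K/2)W_2^2(\rho_t,\pi)$; I would combine this with the uniform bound $W_2\leq M$ and with PI (which controls $\chi^2(\rho_t\|\pi)$ and hence sharpens $W_2$ on large scales) to derive, for $\lambda\leq\lambda_{\max}$, a functional inequality of the form $I_\pi(\rho_t) \geq \dot c_0 \sqrt\lambda\, H_\pi(\rho_t)$. Substituting into the dissipation inequality and integrating over a step of length $\lambda$ then yields the one-step contraction $H_\pi(\rho_{n+1}) \leq (1 - \dot c_0 \lambda^{3/2}/2) H_\pi(\rho_n) + c_1 \lambda^2$, and iterating gives the stated geometric decay with steady-state bias $(4c_1/\dot c_0)\sqrt\lambda$; choosing $\sqrt\lambda \sim \eps$ and $n \sim 1/\lambda^{3/2}$ delivers the $\tilde\bigoh(1/\eps^3)$ complexity. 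The main obstacle I anticipate is precisely this Fisher-to-entropy conversion: since PI alone does not imply $I\geq CH$, one must route through HWI, and extracting the clean linear lower bound $I \gtrsim \sqrt\lambda\, H$ (as opposed to the quadratic $I\gtrsim H^2/W_2^2$ that HWI naively suggests) requires carefully balancing the $W_2$ and $K$ terms against the step-size ceiling $\lambda_{\max}$, together with a PI-driven contraction estimate on $W_2(\rho_t,\pi)$; secondary technical points are the rigorous differentiation under the integral and the uniform propagation of moments for the interpolated diffusion with a superlinearly growing tamed drift.
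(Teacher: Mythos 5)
Your overall architecture matches the paper's: continuous-time interpolation with frozen tamed drift, uniform-in-time moment bounds from the inherited dissipativity, the Fokker--Planck entropy-dissipation identity with Young's inequality giving $\frac{d}{dt}H_\pi(\rho_t)\leq-\tfrac34 I_\pi(\rho_t)+\E|h(\theta_t)-h_\lambda(\theta_{k\lambda})|^2$, the split of the error into a one-step discretisation term and a taming term each of order $\lambda$, and the use of HWI together with \eqref{eq:PI} and moment bounds to convert Fisher information into entropy. The gap is in that conversion step. You assert a \emph{linear} functional inequality $I_\pi(\rho_t)\geq\dot c_0\sqrt\lambda\,H_\pi(\rho_t)$ valid for all $t$ once $\lambda\leq\lambda_{\max}$, and you build the whole one-step contraction $H_{n+1}\leq(1-\dot c_0\lambda^{3/2}/2)H_n+c_1\lambda^2$ on it. No such uniform inequality follows from \eqref{eq:PI}, \eqref{eq:WC} and bounded moments: what HWI plus the Poincar\'e-based Hellinger/$W_2$ estimates actually give (and what the paper proves in Proposition \ref{eq-inequality h-I}) is the \emph{quadratic} bound $H_\pi(\rho_t)\leq\dot c_0^{-1}\sqrt{I_\pi(\rho_t)}$, i.e.\ $I\geq\dot c_0^2H^2$. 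A linear lower bound $I\gtrsim\sqrt\lambda\,H$ then holds only in the regime $H\gtrsim\sqrt\lambda$, and fails when $H_\pi(\rho_t)\ll\sqrt\lambda$ (a linear entropy--Fisher inequality under PI alone would require something like a uniformly bounded density ratio, which you do not have and which your vague appeal to ``PI sharpening $W_2$ on large scales'' does not supply). So the inequality you rely on is false as a blanket statement, and your recursion is not justified for all $n$.

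The paper circumvents this by keeping the quadratic inequality: it obtains the Riccati-type differential inequality $\frac{d}{dt}H_\pi(\rho_t)\leq-\dot c_0H_\pi^2(\rho_t)+C_{err}\lambda$ (Corollary \ref{diff-ineq}), solves it over each step by an ODE comparison theorem to get $H_\pi(\rho_{k+1})\leq\bigl(H_\pi(\rho_k)^{-1}+\dot c_0\lambda\bigr)^{-1}+2C_1\lambda^2$ (Proposition \ref{theo-solvdiff}), and then extracts the stated bound by an induction showing $H_\pi(\rho_n)\leq 1/(\dot c_0\lambda)$ together with a two-case analysis on whether $H_\pi(\rho_n)$ exceeds the threshold $\tfrac{4c_1}{\dot c_0}\sqrt\lambda$: above the threshold the quadratic term yields exactly the $1-\Theta(\lambda^{3/2})$ contraction you wanted, and below it one shows the entropy stays of order $\sqrt\lambda$ (Theorem \ref{theoKL1}). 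Your plan becomes correct if you replace the claimed uniform linear inequality by this threshold/case-split argument (or equivalently work directly with the quadratic inequality and a comparison theorem); as written, the central Fisher-to-entropy step does not go through.
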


This theorem ensures that the algorithm converges at a polynomial rate, even in the absence of \eqref{eq:LSI}. This is very important in practice as, even if \eqref{eq:LSI} holds, it is usually difficult to derive explicit bounds with nice dependence on the problem's defining parameters.
More to the point, if \eqref{eq:LSI} holds and $\CLSI$ is known, the proposed algorithm exhibits optimal convergence rates, achieving in this way the best of both worlds:

\begin{theorem}
    \label{thm:wd-TULA-LSI}
    Suppose that \cref{asm:drift} and \eqref{eq:LSI} hold.
    Let $\rho_n$ be the distribution of $n-th$ iterate of the algorithm \eqref{eq-wdTULA}.
Then,  for $\lambda\leq \lambda_{\max}$, we have
\[H_{\pi}(\rho_n)\leq e^{-\frac{3}{2} C_{LSI}\lambda (n-1) } H_{\pi}(\rho_0) + \frac{ \hat{C}}{\frac{3}{2} C_{LSI}}\lambda\]
where $\hat{C}$ depends polynomially on the dimension with leading term at most $\mathcal{O}\left(d^{\max\{2,l'+1\}(2l+1)}\right)$.
In particular, given a tolerance level $\eps>0$, if \eqref{eq-wdTULA} is run with step-size $\lambda\leq \frac{3 \epsilon C_{LSI}}{2  \hat{C}}$, for $n\geq \frac{2  \hat{C}}{\epsilon}C_{LSI}^{-1}\log(\frac{2}{\epsilon}H_{\pi}(\rho_0))= \tilde\Theta(1/\eps) $ iterations, we have $H_\pi(\rho_n)\leq \epsilon.$ 
\end{theorem}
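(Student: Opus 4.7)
The plan is to follow the standard Langevin-under-LSI template (as in \citet{vempala2019rapid}), but replacing the globally Lipschitz drift by the tamed drift $h_\lambda$ and carefully tracking the bias induced by taming. Specifically, I would introduce the continuous-time interpolation
\begin{equation*}
d\bar{X}_t = -h_\lambda(\bar{X}_{\lfloor t/\lambda\rfloor \lambda})\,dt + \sqrt{2}\,dB_t,
\end{equation*}
so that the law of $\bar{X}_{n\lambda}$ coincides with $\rho_n$, and write $\rho_t = \mathrm{Law}(\bar{X}_t)$. The Fokker--Planck equation for $\rho_t$, together with the standard manipulation that differentiates $H_\pi(\rho_t)$ in time, then yields an identity of the form
\begin{equation*}
\frac{d}{dt} H_\pi(\rho_t) = -I_\pi(\rho_t) + \mathbb{E}\bigl[\langle \nabla\log(\rho_t/\pi)(\bar{X}_t),\, h(\bar{X}_t) - h_\lambda(\bar{X}_{\lfloor t/\lambda\rfloor\lambda})\rangle\bigr].
\end{equation*}

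I would then split the inner product into two pieces: (i) a \emph{taming bias} $h(\bar{X}_t) - h_\lambda(\bar{X}_t)$, and (ii) a \emph{discretization bias} $h_\lambda(\bar{X}_t) - h_\lambda(\bar{X}_{\lfloor t/\lambda\rfloor\lambda})$. Each piece is handled by Young's inequality to produce a term of the form $\tfrac14 I_\pi(\rho_t) + (\text{moment-type quantity})$, absorbing half of the Fisher information. At this point, applying \eqref{eq:LSI} to the remaining $-\tfrac12 I_\pi(\rho_t) \leq -C_{\LSI} H_\pi(\rho_t)$ and bounding the moment-type quantities by moments of the iterates gives a differential inequality of the form
\begin{equation*}
\frac{d}{dt} H_\pi(\rho_t) \leq -\tfrac{3}{2}C_{\LSI}\, H_\pi(\rho_t) + \hat{C}\,\lambda.
\end{equation*}
Solving this via Grönwall over $[(n-1)\lambda,n\lambda]$ and iterating yields the claimed bound.

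To close the loop, I would borrow from the \textit{wd-TULA} analysis the fact that $h_\lambda$ has at most linear growth and inherits weak dissipativity (\cref{alg-diss}), which feeds uniform-in-time polynomial moment bounds $\mathbb{E}|\bar{X}_t|^{2p} = O(\poly(d))$; combined with \cref{asm:PLC} and the explicit form \eqref{eq:pot-tamed}, these give
\begin{equation*}
\mathbb{E}|h(\bar{X}_t) - h_\lambda(\bar{X}_t)|^2 = O(\lambda \cdot \poly(d)), \qquad \mathbb{E}|h_\lambda(\bar{X}_t) - h_\lambda(\bar{X}_{\lfloor t/\lambda\rfloor\lambda})|^2 = O(\lambda\cdot \poly(d)),
\end{equation*}
producing the dimension dependence $\hat{C} = O(d^{\max\{2,l'+1\}(2l+1)})$ stated in the theorem. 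The final complexity claim is then an elementary rearrangement: choosing $\lambda \leq \tfrac{3\epsilon C_{\LSI}}{2\hat{C}}$ makes the bias term at most $\epsilon/2$, and $n \geq (2\hat{C}/\epsilon)C_{\LSI}^{-1}\log(2 H_\pi(\rho_0)/\epsilon)$ makes the transient term at most $\epsilon/2$.

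The main obstacle is the Young-inequality absorption step in the presence of a polynomially-growing drift: one must verify that the constants in the inner-product bound are compatible with the $\tfrac12 I_\pi$ budget, and that the moment bounds needed for the residual terms can be established \emph{uniformly in $n$} rather than growing with the horizon. This is exactly where \cref{asm:wd} (even with $a=1$) is indispensable, and where the polynomial-Jacobian hypothesis \cref{asm:PJG} enters via a Taylor expansion of $h_\lambda$ along the Brownian bridge between $\bar{X}_{\lfloor t/\lambda\rfloor\lambda}$ and $\bar{X}_t$. I expect the rest of the proof to reuse auxiliary lemmas (moment bounds, $h_\lambda$--$h$ proximity, dissipativity of $h_\lambda$) already required for \cref{thm:wd-TULA}; the LSI case is strictly simpler because the HWI detour and the $W_2$-to-$H_\pi$ conversion needed under \eqref{eq:PI} are bypassed by the direct entropy-Fisher comparison.
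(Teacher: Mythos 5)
Your proposal is correct and follows essentially the same route as the paper: the Vempala--Wibisono-style interpolation, the differential identity for $H_\pi(\rho_t)$ (rigorously justified via the moment/regularity lemmas), Young's inequality to absorb part of the Fisher information, the taming-error and one-step-error bounds with uniform-in-time moments from the dissipativity of $h_\lambda$, then \eqref{eq:LSI} and a Gr\"onwall/iteration argument. The only small bookkeeping point is that absorbing $\tfrac{1}{4}I_\pi(\rho_t)$ for \emph{each} of your two error pieces leaves only $-\tfrac{1}{2}I_\pi(\rho_t)$, which yields a contraction rate $C_{LSI}$ rather than the stated $\tfrac{3}{2}C_{LSI}$; retuning the Young weights (the paper absorbs a single $\tfrac{1}{4}I_\pi(\rho_t)$, keeping $-\tfrac{3}{4}I_\pi(\rho_t)$) recovers the theorem's constant exactly.
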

\Cref{thm:wd-TULA,thm:wd-TULA-LSI} are our main results for \eqref{eq-wdTULA}.
The proof of both theorems is fairly arduous and involves a series of intricate steps, so, to streamline our presentation and facilitate our comparison with the case where \eqref{eq:WC} is dropped altogether, we proceed directly to the regularized version of \eqref{eq-wdTULA} and defer the discussion of the proof of the theorem to the next section.

\subsection{Regularized taming}

We consider now the case where the eigenvalues of $\nabla^{2}u(x)$ become arbitrarily negative (\ie \cref{asm:WC} fails altogether).
To account for this negative growth, we are going to regularize the tamed potential by anchoring it close to the original target.
This will require care to ensure that the new sampling potential satisfies \cref{asm:WC} with a controllable constant, as well as the corresponding regularity requirements of \cref{asm:drift}.

Without further ado, these considerations lead to the \emph{regularized potential}
\begin{equation}
\label{eq:pot-reg}
u_{r,\lambda}(x)
	= u(x) + \lambda \abs{x}^{2r+2}
\end{equation}
where, with a fair degree of hindsight, the exponent $r$ is chosen so that $r>l/2$ and $r(2+l') / [(r+1)(2r-l')] < 1$ (a moment's reflection shows that this is not the empty set).

In view of the above, the \emph{regularized taming} scheme that we will consider involves rescaling by the factor $(1 + \sqrt{\lambda}\abs{x}^{2r+1})$, leading to the regularized drift:
\begin{equation}
h_{r,\lambda}(x)
	= \frac{A x }{(1+|x|^2)^{1-\frac{a}{2}}}
		+ \frac{\hr(x) - A x (1+|x|^2)^{a/2-1}}{1+\sqrt{\lambda}|x|^{2r+1}}.
\end{equation}
In this way, we obtain the \acli{reg}
\begin{equation}
\label{eq-tamedregalg}
\tag{reg-TULA}
\bar x_{n+1}^{\lambda}
	= \bar x_{n}^{\lambda}
		- \lambda \htl(\bar x_{n}^{\lambda})
		+ \sqrt{2\lambda} \Xi_{n+1}
\end{equation}
where $\Xi_n$ is an \acs{iid} sequence of Gaussian $d$-dimensional vectors.
Our main result for this regularized sampling scheme may then be stated as follows:

\begin{theorem}
\label{thm:reg-TULA}
Suppose that \cref{asm:drift,asm:target} hold and let $\rd$ denote the distribution of the $n$-th iterate of \eqref{eq-tamedregalg} run with $\lambda < \lambda_{\max,2}:=\min\{\lambda_{\max},\frac{ln2}{R_2^{2r+2}}\}$ where $R_2$ is given in Lemma \ref{lemma poincarereg}.
Then $\rd$ enjoys the convergence guarantee
\begin{equation}
H_\pi(\rd)
	\leq \parens*{1- c\lambda^{1+\frac{1}{r+1}+\frac{l}{2r-l}}}^{n-1} H_{\pr}(\rho_0)
		+ (\hat{C}/c) \lambda^{1-\frac{1}{r+1}-\frac{l}{2r-l}}
		+ C_3 \lambda
\end{equation}
where $c,C_3,\hat{C}$ depend polynomially on $d$.
In particular, if $c_{l,r}:= \frac{r(2+l)}{(r+1)(2r-l)}$, then, for $\lambda<\mathcal{O}\left({\epsilon}^\frac{1}{1-c_{l,r}}\right),$ we have
\[H_{\pr}(\rd)\leq \epsilon \quad \text{after} \quad  n = \Theta\left(\log(1/\epsilon) \cdot \epsilon^{-\frac{1+c_{l,r}}{1-c_{l,r}}}\right)\quad \text{iterations}.\]
\end{theorem}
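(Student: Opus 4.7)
The plan is to view \eqref{eq-tamedregalg} as an instance of \eqref{eq-wdTULA} applied to the auxiliary potential $u_{r,\lambda}(x) = u(x) + \lambda|x|^{2r+2}$ (whose invariant distribution is $\pr \propto e^{-u_{r,\lambda}}$), and then transfer the convergence from $\pr$ to $\pi$ by absorbing an $\mathcal{O}(\lambda)$ bias. The role of the regularizer is precisely to restore weak convexity, which is the one hypothesis of Theorem~\ref{thm:wd-TULA} that $u$ itself may fail.

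First I would verify that $u_{r,\lambda}$ satisfies all the hypotheses of Theorem~\ref{thm:wd-TULA}. The regularizer is convex and contributes a Hessian term of order $\lambda|x|^{2r}I$ which, combined with the Hessian lower bound obtained by differentiating \eqref{eq:Lips} and the condition $r>l/2$, renders $\nabla^{2}u_{r,\lambda}$ positive semi-definite outside a ball of radius $R_2 = \Theta(\lambda^{-\beta})$ for an explicit $\beta>0$, and bounded below by a $\lambda$-independent constant inside; this yields \eqref{eq:WC} for $u_{r,\lambda}$. The conditions \eqref{eq:Lips}, \eqref{eq:wd}, \eqref{eq:PJG} transfer to $u_{r,\lambda}$ with only mild modification of constants (and dissipativity is in fact strengthened by the regularizer). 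Lemma~\ref{lemma poincarereg} then establishes a Poincaré inequality for $\pr$, leveraging the weak dissipativity of $u_{r,\lambda}$ in the spirit of \cite{bakry2008simple}.

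With these ingredients in hand, the machinery of Theorem~\ref{thm:wd-TULA} applied to $u_{r,\lambda}$ yields a contraction estimate of the form
\[
H_{\pr}(\rd) \leq \bigl(1 - c\lambda^{1+c_{l,r}}\bigr)^{n-1} H_{\pr}(\rho_0) + (\hat{C}/c)\,\lambda^{1-c_{l,r}},
\]
where $c_{l,r} = r(2+l)/[(r+1)(2r-l)]$. The modified exponent $1+c_{l,r}$ (as opposed to the clean $3/2$ of Theorem~\ref{thm:wd-TULA}) arises from carefully tracking how the Poincaré constant of $\pr$, the weak-convexity constant $K$, and the Lipschitz/Jacobian moduli of $u_{r,\lambda}$ each depend polynomially on $\lambda^{-1}$ via the regularization scale $R_2$, and propagating these dependencies into $\dot c_0$ and $c_1$ of Theorem~\ref{thm:wd-TULA}. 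To bridge from $\pr$ to $\pi$, I would use the identity $\pr/\pi = (Z_\pi/Z_{\reg})e^{-\lambda|x|^{2r+2}}$ to obtain
\[
H_\pi(\rd) - H_{\pr}(\rd) = \log(Z_\pi/Z_{\reg}) - \lambda \int |x|^{2r+2} \dd\rd \leq \log(Z_\pi/Z_{\reg}) \leq \lambda \int |x|^{2r+2} \dd\pi,
\]
where the last step is Jensen's inequality for the convex function $-\log$. Since \eqref{eq:PI} implies polynomial moments of every order for $\pi$, with polynomial-in-$d$ bounds, the integral is a finite constant $C_3$. Combining this with the contraction estimate yields the stated three-term bound, and the iteration count follows by balancing $\lambda \sim \epsilon^{1/(1-c_{l,r})}$ against the contraction rate.

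The hardest part is the constant-tracking in the intermediate step: the scale $R_2 \sim \lambda^{-1/(2r-l)}$ injects a $\lambda^{-\text{poly}}$ factor into essentially every constant inherited from Theorem~\ref{thm:wd-TULA}, and these effects must be combined precisely to produce the exponent $c_{l,r}$. The admissible range $r > l/2$ together with the implicit requirement $c_{l,r}<1$ is exactly what keeps both rate exponents positive and dictates the viable choice of regularization exponent.
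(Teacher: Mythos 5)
There is a genuine gap at the heart of your plan: you route \eqref{eq-tamedregalg} through Theorem~\ref{thm:wd-TULA} (the \eqref{eq:PI}$+$\eqref{eq:WC} branch) applied to $u_{r,\lambda}$, and then assert that "constant tracking" produces the exponents $1\pm c_{l,r}$. That branch cannot produce these exponents. First, your premise that $\nabla^{2}u_{r,\lambda}$ is "bounded below by a $\lambda$-independent constant inside" the ball is false: the regularizer only dominates the negative part of $\nabla^{2}u$ where $\lambda|x|^{2r}\gtrsim|x|^{l}$, i.e.\ outside a ball of radius $\Theta(\lambda^{-1/(2r-l)})$, and inside that ball the Hessian of $u$ can be as negative as $-L(1+|x|^{l})$, so the best available bound is $\nabla^{2}u_{r,\lambda}\succcurlyeq -K_{\lambda}I$ with $K_{\lambda}=\Theta\bigl(\lambda^{-l/(2r-l)}\bigr)$ (this is exactly the $K_\lambda$ of \eqref{eq-Hess bound}). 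If you now rerun the HWI-based chain of Theorem~\ref{thm:wd-TULA} (Proposition~\ref{eq-inequality h-I}, Corollary~\ref{diff-ineq}, Theorem~\ref{theoKL1}) with this $K_{\lambda}$, the constant $\dot c_{0}$ scales like $1/K_{\lambda}\sim\lambda^{l/(2r-l)}$, giving a contraction exponent $\tfrac32+\tfrac{l}{2r-l}$ and a bias of order $\lambda^{\frac12-\frac{l}{2r-l}}$. This is not the claimed bound, and it would moreover require $r>3l/2$ merely for the bias to vanish, a more restrictive regime than the theorem's $r>l/2$ with $c_{l,r}<1$. Crucially, the $\tfrac{1}{r+1}$ contribution to $c_{l,r}$ cannot arise from your route at all: it originates from the $\lambda$-dependent $2$-dissipativity constant $A_{\reg}=\lambda^{1/(r+1)}$ of the regularized drift, which plays no role in the Poincar\'e/HWI argument.

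What the paper actually does after Lemma~\ref{lemma poincarereg} (the $\lambda$-independent Poincar\'e inequality for $\pr$, obtained by a Lyapunov function plus a Holley--Stroock perturbation on a ball, not directly from weak dissipativity) is to \emph{upgrade} \eqref{eq:PI} for $\pr$ to a log-Sobolev inequality, Proposition~\ref{LSI reg}, with constant $\CLSI^{-1}=\mathcal{O}\bigl(\lambda^{-\frac{1}{r+1}-\frac{l}{2r-l}}\bigr)$; this upgrade uses precisely the $2$-dissipativity with $A_{\reg}=\lambda^{1/(r+1)}$ together with the Hessian bound $-K_{\lambda}$ through a Lyapunov criterion \citep{menz2014poincare}. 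The analysis then follows the \emph{LSI branch} (as in Theorem~\ref{thm:wd-TULA-LSI}, carried out for $\pr$ in Theorem~\ref{theo2}), and it is the $\lambda$-dependence of $\CLSI$ that yields the exponents $1+\frac{1}{r+1}+\frac{l}{2r-l}$ in the contraction factor and $1-\frac{1}{r+1}-\frac{l}{2r-l}$ in the bias. Your closing step transferring $H_{\pr}(\rd)$ to $H_{\pi}(\rd)$ via $\log(\pr/\pi)=-\lambda|x|^{2r+2}+\log(Z_\pi/Z_{\reg})$ and Jensen is correct and essentially matches the end of the paper's proof of Theorem~\ref{theo2}, as does the verification of the regularity/dissipativity/moment properties of $u_{r,\lambda}$; but the central contraction estimate is asserted rather than derived, and the mechanism you invoke for it is incompatible with the exponents you claim.
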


\section{Proof outline and technical innovations}
\label{sec:proofs}

To give an idea of the main ideas and technical innovations required for the proof of \cref{thm:wd-TULA,thm:wd-TULA-LSI,thm:reg-TULA}, we provide below a brief roadmap of our proof strategy.

The cornerstone of our approach is the derivation of a differential inequality in the spirit of \citet{vempala2019rapid}.
The tricky part here is that the drift coefficient is not the original gradient but a tamed one, which yields additional complexity when one tries to prove the exchange of integrals and derivatives starting from the Fokker-Planck equation.
In so doing, we ultimately obtain a ``template inequality'' of the form
\begin{equation}
\label{eq:template}
\frac{d}{dt}H_{\pi}(\pt)\leq-\frac{3}{4} I_{\pi} (\pt) +   \E | h(\theta_t)-h_{\lambda}(\theta_{k\lambda})|^2
\end{equation}
where $\pt$ is the continuous-time interpolation of the algorithm.

The first term of the template inequality \eqref{eq:template} is connected to the relative entropy via an isoperimtric inequality, either directly \textendash\ if \eqref{eq:LSI} holds \textendash\ or by means of another inequality \textendash\ under \eqref{eq:PI} and \eqref{eq:WC}.
In this last case, the connection is achieved by establishing a ``modified'' version of \eqref{eq:LSI} as a consequence of \eqref{eq:PI} and \eqref{eq:HWI}.
As for the second term of \eqref{eq:template}, its contribution can be controlled by the one-step error of the algorithm (by local Lipschitzness) and the $L_2$ approximation of the tamed scheme to the original gradient at grid points.
A key difficulty here is that these bounds must be uniform in the number of iterations moment bounds for our algorithm which is obtained by the careful construction of the tamed coefficient, which satisfies a dissipativity condition.

This is an important novetlyt of our work, as we are able to achieve a uniform-in-time exponential moment bound for the algorithm, even in the $1-$ dissipative case (contrary to other works such as \cite{erdogdu2021convergence} where the moments bounds are not uniform in time or \cite{zhang2023nonasymptotic} which leverages a ``convexity at infinity'' assumption).
We achieve this by means of a Herbst argument for Gaussians (since each iterate is a Gaussian when conditioned to the previous step), to pass from the conditional expectation of the exponential to the exponential of the conditional expectation. We then use the contraction structure (which is provided by the inherited dissipativity of our scheme and the growth of the tamed drift coefficient as given in Lemma \ref{alg-diss}) to create an induction.

In the absence of \emph{both} \eqref{eq:LSI} \emph{and} \eqref{eq:WC}, we employ a similar method to sample from the regularized potential \textendash\ for which we prove the equivalent of \eqref{eq:LSI} \textendash\  and we then proceed to compute the \ac{KL} divergence of the algorithm's iterates relative to the target distribution.
The main challenge here is to show that the regularized potential also satisfies \eqref{eq:PI} with a constant that is explicitly connected to the Poincaré constant of the \emph{original} target and is \emph{independent} of $\lambda$.
In general, these are mutually antagonistic properties, which are ultimately achieved in our case by leveraging the dissipativity properties of the regularized potential to find a Lyapunov function $W$ such that $LW\leq -\theta$ for some $\theta>0$ outside of a ball.
By using the inequality for the infinitesimal generator of the Langevin SDE with drift coeffient the regularized potential, and the fact that, inside said ball, the regularized measure inherits the Poincaré constant of the original (as a bounded perturbation thereof), and by employing a variation of a shrewd argument of \citet[Proof of Theorem 2.3]{cattiaux2013poincare}, we are finally able to establish \eqref{eq:PI} on the whole space.
The conditions of $2$-dissipativity and weak convexity are then easier to prove, eventually leading to the upgrade of \eqref{eq:PI} to a suitably modified form of \eqref{eq:LSI}.

A major obstacle in the above strategy is determining the explicit constants while simultaneously trying to minimize their dependence on the step-size as much as possible.
For example, one can establish a version of \eqref{eq:LSI} for the regularized potential in a more direct manner, by simply using convexity at infinity, or by a technique similar to \citet[Corollary 5.4]{lytras2023taming}.
However, the version of \eqref{eq:LSI} obtained in this would involve a catastrophic exponential dependence on $1/\lambda$, which would thus render it unusable for deriving finite-time convergence rates.
Albeit (significantly) more involved, our method completely circumvents the exponential dependence, which in turn enables the polynomial-time convergence rates of the proposed schemes (and recoups the technical investment described above).

For convenience, we summarize the main steps below, in decreasing order of the assumptions made.

\para{Case 1: Analysis under \eqref{eq:LSI}}
This case concerns \cref{thm:wd-TULA-LSI}, and the analysis unfolds as follows:
\begin{enumerate}
\item
We prove that the tamed coefficients exhibit linear growth, and inherit the dissipativity property of the original gradient, \cf \cref{alg-diss}.
\item
We use the properties of the tamed scheme to derive exponential (and subsequently polynomial) moments for our algorithm, \cf \cref{expmom1,lemma-algmom}. 
\item
We establish a differential template inequality for the relative entropy between the continuous-time interpolation of the algorithm and target measure, \cf \cref{diff-ineq}.
This involves a rigorous treatment of the exchange of derivatives and integrals.
\item
We bound the remaining terms using the local Lipschitz property of $\nabla u$ and the approximation properties of the tamed scheme.
\item
We employ \eqref{eq:LSI} to conntect the Fisher information term to the relative entropy, and we backsolve to produce convergence rates in terms of the \ac{KL} divergence.
\item
Using Pinsker's inequality, we obtain a result for total variation distance and by Talagrand's inequality for the $W_2$ distance.
\end{enumerate}

\para{Case 2: \eqref{eq:PI} and \eqref{eq:WC}}
This case concerns \cref{thm:wd-TULA}, and the analysis unfolds as follows:
\begin{enumerate}
\item
We employ the same scheme to tame $\nabla u$ and repeat the first steps as in the proof of \cref{thm:wd-TULA-LSI}.
\item
Lacking \eqref{eq:LSI}, our analysis branches out as follows:
we use \cref{asm:WC} and \eqref{eq:PI} to produce a different template inequality between the \ac{KL} divergence and the Fisher information distance between the continuous-time interpolation of the algorithm and target measure (\cref{eq-inequality h-I}).
\item
We backsolve the derived differential inequality to produce non-exponential convergence rates relative to the \ac{KL} divergence, \cf \cref{theo-solvdiff,theoKL1}.
\item
Finally, by using Pinsker's inequality and the exponential moments of our alogirthm and the bound in relative entropy, we are able to derive bounds $TV$ and $W_1$ distance under \eqref{eq:PI}, \cf \cref{cor-othdist}. 
\end{enumerate}

\para{Case 3: \eqref{eq:PI} only}
This case concerns \cref{thm:reg-TULA}, and the analysis unfolds as follows:
\begin{enumerate}
\item
We introduce a regularized potential to sample from, and we show that it has a range of desirable properties as described in \cref{alg-diss}.
\item
We use the same scheme to tame the gradient of the regularized potential and, through similar arguments, we derive exponential (and polynomial) moments for \eqref{eq-tamedregalg}.
\item
We show that the regularized measure satisfies a version of \eqref{eq:PI}, \cf \cref{lemma poincarereg}, and this inequality can be upgraded to a version of \eqref{eq:LSI} with manageable constants (\cref{LSI reg}).
\item
We branch back to the analysis using \eqref{eq:LSI} to sample from the regularized potential, and we use the relation between the regularized and the original one to derive the algorithm's convergence rate, as outlined in \cref{theo2}.
\item
Using Pinsker's and Talagrand's inequalities, we convert these bounds to $TV$ and $W_2$ (\cref{cor-22}).
\end{enumerate}

The details of all the above are provided in full in the paper's appendix.

\section{Numerical Experiments}
\label{sec:numerics}

We proceed by providing some numerical experiments that validate our results.
The focus of our attention will be the invariant measure $\pi$ generated by the double-well potential $u=(|x|^2-1)^2$.
Since $\nabla u=4x(|x|^2-1)$ one easily observes that the potential satisfies our smoothness and dissipativity assumptions.
Since the 1-dissipativity holds one can also deduce a Poincare inequality for the potential.
Finally, it is immediate to see that the potential satisfies a convexity at infinity assumption, which of course is way stronger that our weak convexity assumption.
For explicit calculations the interested reader can point to \cite{neufeld2022non}.
We perform the experiment for our algorithm for $10^6$ iterations and $10^5$ samples for 100 independent iterations. The dimension is $d=100$ and we start the algorithm from a constant where every coordinate is zero, but the first coordinate is 200. We should note that starting from a constant doesn't contradict our analysis as we can still perform the analysis of the algorithm starting from the result of the 1st iteration which is a Gaussian.
When one runs 'vanilla' ULA for stepsize=\{0.1, 0.01\} all experiments show that the algorithm explodes (the moments exceed the infinity value of the computer) so the need to use an alternative becomes apparent.
Here we present a boxplot which describes the the second moment of the first coordinate for different values.
The second moment of each coordinate is given by
\[\mathbb{E}\left[X_i^2\right]=d^{-1} \int_{\mathbb{R}_{+}} r^2 \nu(r) \mathrm{d} r / \int_{\mathbb{R}_{+}} \nu(r) \mathrm{d} r, \quad \nu(r)=r^{d-1} \exp \left\{\left(r^2 / 2\right)-\left(r^4 / 4\right)\right\}\]
and is estimated by a random walk of $10^7$ samples as $\mathbb{E}[X_i^2]=0.104.$
The figure below shows the behaviour of the algorithm for different stepsizes. One can see that the for stepsize=\{0.1, 0.01\} the error is of order $10^{-1}$ while for stepsize =0.001 the error is of order $10^{-2}.$ 

We also present a similar figure for the well-known TULA algorithm devoleped in \cite{tula}. We can see that the TULA algorithm is not as efficient as wd-TULA for large stepsize as it gives an error of approximately 1.1 
but performs better for stepsize 0.1 (gives error of order $10^{-2}$) and similarly to wdTULA for $\lambda=0.001.$


\begin{figure}[t]
\centering
\includegraphics[height=30ex]{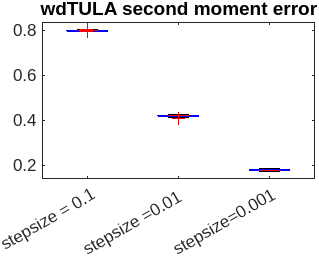}
\hspace{4em}
\includegraphics[height=30ex]{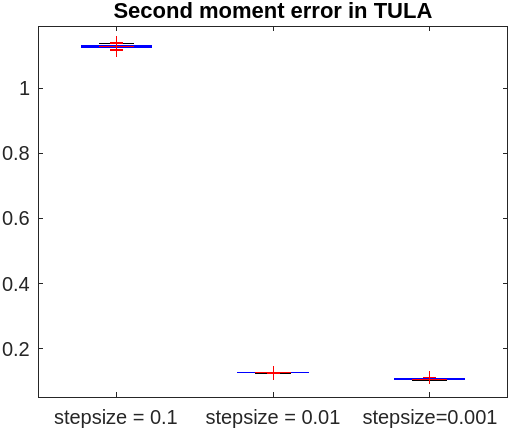}
\caption{Performance of \eqref{eq-wdTULA} compared to TULA (left and right respectively); lower values are better.}
\label{fig:test}
\end{figure}


\appendix
\numberwithin{lemma}{section}	
\numberwithin{proposition}{section}	
\numberwithin{equation}{section}	

\section{Preliminary steps and lemmas}
\label{app:main steps}

\subsection{Moment bounds for \eqref{eq-wdTULA}}
In order to prove the moment bounds for our algorithm, the following properties of our the tamed coeffient will play a pivotal role.
\begin{lemma}
\label{alg-diss}
For all $x\in\R^{d}$, we have
\begin{equation}
\langle \hl(x),x\rangle
	\geq A_{1} \abs{x}^{a} - B_{1}
\end{equation}
where $A_{1} = A/2$ and $B_{1} = \max\{A_{1},B\}$.
In addition, we have
\begin{equation}
\abs{\hl(x)}^{2}
	\leq 4A^{2} \abs{x}^{a} + 2 L^{2}/\lambda + 4 A^{2}
	\quad
	\text{for all $x\in\R^{d}$}.
\end{equation}
\end{lemma}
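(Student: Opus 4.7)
The plan is to prove the two inequalities by exploiting two different reformulations of $h_{\lambda}$. The key algebraic identity is
\[
h_{\lambda}(x) = (1-T(x))\,\phi(x) + T(x)\,h(x), \qquad
T(x) := \frac{1}{1+\sqrt{\lambda}\,|x|^{2l}}\in(0,1],
\]
where $\phi(x) := Ax/(1+|x|^{2})^{1-a/2}$; indeed, adding and subtracting $T(x)\phi(x)$ in the definition of $h_\lambda$ yields exactly this expression. Thus $h_{\lambda}$ is a pointwise convex combination of $\phi$ and $h$, and this form is ideally suited to the lower bound on $\langle h_{\lambda}(x),x\rangle$. For the upper bound on $|h_{\lambda}(x)|^{2}$, however, the convex-combination identity is too loose, since $|h|$ grows like $|x|^{2l}$ while $T$ only decays to order $(\sqrt{\lambda}\,|x|^{2l})^{-1}$; to absorb the \emph{squared} taming denominator, I would instead use the additive split $h_{\lambda}=\phi+f_{\lambda}$ with $f_{\lambda}(x)=f(x)/(1+\sqrt{\lambda}\,|x|^{2l})$ and $f=h-\phi$.

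For the lower bound, it suffices to show that both endpoints of the convex combination satisfy $\langle \phi(x),x\rangle \geq (A/2)|x|^{a} - B_{1}$ and $\langle h(x),x\rangle\geq (A/2)|x|^{a}-B_{1}$; convexity will then transfer the bound to $h_{\lambda}$ at no cost. The second inequality is immediate from \cref{ass-2dissip} together with $A\geq A/2$ and taking $B_1\geq b$. For the first, I would plug in the explicit form of $\phi$ and split on whether $|x|\geq 1$ or $|x|\leq 1$: in the former regime the bound $(1+|x|^{2})^{1-a/2}\leq 2^{1-a/2}|x|^{2-a}$ (valid for $a\in[1,2]$) yields $\langle \phi(x),x\rangle\geq A|x|^{a}/2^{1-a/2}\geq (A/2)|x|^{a}$; in the latter, the inner product is non-negative and trivially dominates $(A/2)|x|^{a}-A/2$. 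Setting $B_{1}=\max\{A/2,b\}$ closes this half.

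For the upper bound, I would apply $|u+v|^{2}\leq 2|u|^{2}+2|v|^{2}$ to $h_{\lambda}=\phi+f_{\lambda}$ and control each piece separately. The bound $|\phi(x)|^{2}\leq A^{2}|x|^{a}+A^{2}$ follows from the same case split used above, after observing that $|x|^{2}/(1+|x|^{2})^{2-a}\leq \max\{|x|^{a},1\}$ whenever $a\in[1,2]$ (the cases $|x|\geq 1$ and $|x|\leq 1$ being handled by $(1+|x|^2)\geq|x|^2$ and $(1+|x|^2)\geq 1$ respectively). The control of $|f_{\lambda}(x)|^{2}$ is where the main technical work lies: after writing $|f|^{2}\leq 2|h|^{2}+2|\phi|^{2}$, the $|h|^{2}$ contribution is handled through the elementary estimate $(1+\sqrt{\lambda}\,|x|^{2l})^{2}\geq \max\{1,\lambda|x|^{4l}\}$ followed by a case analysis on whether $\lambda|x|^{4l}\lessgtr 1$; in either regime, $|h(x)|^{2}/(1+\sqrt{\lambda}\,|x|^{2l})^{2}$ collapses to a constant multiple of $L^{2}/\lambda$, while the residual $|\phi|^{2}$-contribution adds only a term of the correct $|x|^{a}$-growth. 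The main obstacle is precisely this case analysis: the numerator $|h|$ grows with exponent $2l$ and the denominator with effective order $\sqrt{\lambda}\,|x|^{2l}$, so it is only after squaring that the two mesh to yield the stated $1/\lambda$ factor, and one must check carefully that the crossover regime $\lambda|x|^{4l}\asymp 1$ does not introduce a worse scaling than the two extreme regimes.
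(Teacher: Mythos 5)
Your treatment of the first inequality is correct and is essentially the paper's argument in a slightly cleaner guise: the paper splits on the sign of $\langle f(x),x\rangle$ (if it is negative, taming only increases the inner product, so the dissipativity of $h$ transfers; if it is non-negative, it drops $f_\lambda$ and lower-bounds $\langle \phi(x),x\rangle$ by the same $|x|\gtrless 1$ case split you use), whereas you exploit the convex-combination identity $h_\lambda=(1-T)\phi+Th$ and verify the bound at both endpoints. Both routes are valid and give $A_1=A/2$, $B_1=\max\{A/2,b\}$ within the intended range $a\in[1,2]$.

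The second half is where your plan falls short of the lemma as stated. Your reason for discarding the convex-combination form is mistaken: the cancellation you worry about is exactly what makes it work, since for $\lambda\le 1$ one has $1+\sqrt{\lambda}|x|^{2l}\ge \sqrt{\lambda}\,(1+|x|^{2l})$, hence $T(x)\,|h(x)|\le L(1+|x|^{2l})/(1+\sqrt{\lambda}|x|^{2l})\le L/\sqrt{\lambda}$ uniformly in $x$, with no case analysis on the crossover regime $\lambda|x|^{4l}\asymp 1$. This is precisely the paper's proof: $|h_\lambda|^2\le 2(1-T)^2|\phi|^2+2T^2|h|^2$, with $(1-T)^2\le 1$, $|\phi(x)|^2\le A^2(1+|x|^2)^{a-1}\le A^2(1+|x|^a)$ for $a\in[1,2]$, and $2T^2|h|^2\le 2L^2/\lambda$, which recovers the stated bound $4A^2|x|^a+2L^2/\lambda+4A^2$. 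Your additive split $h_\lambda=\phi+f_\lambda$ followed by $|f|^2\le 2|h|^2+2|\phi|^2$ doubles both contributions and, even with the sharp uniform estimate above, can only deliver something like $6A^2|x|^a+6A^2+4L^2/\lambda$ (worse still if you keep the crossover analysis), i.e.\ a strictly weaker inequality than the one claimed. Since these explicit constants are consumed downstream (e.g.\ in the choice of $M$ and $\mu$ in the exponential moment bound of \cref{expmom1}), this is not merely cosmetic: as written, your argument does not establish the lemma with the constants $4A^2$ and $2L^2/\lambda$, and the fix is simply to keep the convex-combination form for the upper bound as well.
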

\begin{proof}
  Postponed to the proof section.  
\end{proof}

\begin{lemma}\label{expmom1}
Let $M:=\left(2(2d+4A^2+2L^2+A)\right)^\frac{1}{a} $  and $\mu=\frac{Aa M^a}{16(1+M^2)^{1-\frac{a}{2}}}.$ Let $V_\mu(x)=e^{\mu (1+|x|^2)^\frac{a}{2}}.$ There holds, for $\lambda<\min\{1,\frac{A}{4},\frac{2}{\mu^2}\}$ \[\sup_n \E V_\mu (\tn)\leq C_\mu\]
where $C_\mu\leq \mathcal{O}(e^{\mu d}).$
\end{lemma}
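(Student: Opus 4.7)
The plan is to establish a one-step geometric Lyapunov drift inequality of the form
$\E[V_\mu(\bar\theta_{n+1}^\lambda)\mid\mathcal{F}_n] \le (1-\gamma\lambda)\,V_\mu(\tn) + K$
with $\gamma>0$ and $K=\mathcal{O}(e^{\mu d})$, and then to iterate the resulting inequality after taking expectations, yielding $\sup_n\E[V_\mu(\tn)] \le \E[V_\mu(\bar\theta_0^\lambda)] + K/(\gamma\lambda)$. Together with the Gaussian choice of $\bar\theta_0^\lambda$, this delivers the stated $\mathcal{O}(e^{\mu d})$ bound. The two ingredients driving the drift are a Gaussian moment-generating computation for the noise increment (the ``Herbst for Gaussians'' trick alluded to in \cref{sec:proofs}) and the dissipative contraction of the tamed coefficient supplied by \cref{alg-diss}.

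\emph{Gaussian step.} Conditional on $\mathcal{F}_n$, the iterate is $\bar\theta_{n+1}^\lambda = m_n + \sqrt{2\lambda}\,\xi_{n+1}$ with $m_n := \tn - \lambda\hl(\tn)$. For $a\in[1,2]$ (the range $a>2$ being treated by a quadratic Taylor expansion with explicit remainder), concavity of $t\mapsto(1+t)^{a/2}$ yields the tangent bound
\[
(1+|\bar\theta_{n+1}^\lambda|^2)^{a/2} \le (1+|m_n|^2)^{a/2} + a\,(1+|m_n|^2)^{a/2-1}\!\left[\sqrt{2\lambda}\,\langle m_n,\xi_{n+1}\rangle + \lambda|\xi_{n+1}|^2\right].
\]
Multiplying by $\mu$, exponentiating, and applying the standard quadratic-linear Gaussian identity $\E[e^{\alpha|\xi|^2+\beta\langle v,\xi\rangle}] = (1-2\alpha)^{-d/2}\exp(\beta^2|v|^2/[2(1-2\alpha)])$ (valid for $\alpha<1/2$) with $\alpha = \mu a\lambda(1+|m_n|^2)^{a/2-1}$ and $\beta = \mu a\sqrt{2\lambda}(1+|m_n|^2)^{a/2-1}$, the step-size bound $\lambda<2/\mu^2$ keeps $2\alpha\le 1/2$ uniformly in $m_n$ and produces
\[
\E[V_\mu(\bar\theta_{n+1}^\lambda)\mid\mathcal{F}_n] \le 2^{d/2}\exp\!\Big\{\mu(1+|m_n|^2)^{a/2} + 2\mu^2 a^2 \lambda (1+|m_n|^2)^{a-1}\Big\}.
\]

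\emph{Dissipative contraction.} Expanding $|m_n|^2 = |\tn|^2 - 2\lambda\langle\tn,\hl(\tn)\rangle + \lambda^2|\hl(\tn)|^2$ and applying both halves of \cref{alg-diss}, the restriction $\lambda\le A/4$ allows the $\lambda^2|\hl|^2$ term to be absorbed into half of the dissipative cross-term, giving $|m_n|^2\le|\tn|^2 - \tfrac{\lambda A}{2}|\tn|^a + C_1\lambda$ for an explicit $C_1$ polynomial in $d,L,A,B$. A mean-value-theorem bound for $(1+\cdot)^{a/2}$ transfers this to $(1+|m_n|^2)^{a/2}\le(1+|\tn|^2)^{a/2} - \eta(\tn)\lambda$, with $\eta(\tn)>0$ of order $|\tn|^a/(1+|\tn|^2)^{1-a/2}$ for $|\tn|$ large. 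The threshold $M$ in the statement is calibrated precisely so that on $\{|\tn|\ge M\}$ the dissipative gain $\mu\eta(\tn)\lambda$ strictly dominates both the Herbst-type error $2\mu^2 a^2\lambda(1+|m_n|^2)^{a-1}$ and $\tfrac{d}{2}\log 2$, producing the strict contraction $\E[V_\mu(\bar\theta_{n+1}^\lambda)\mid\mathcal{F}_n]\le e^{-\gamma\lambda}V_\mu(\tn)$. On $\{|\tn|<M\}$, the mean $m_n$ is deterministically bounded, so the conditional expectation is at most an explicit constant $K=\mathcal{O}(e^{\mu d})$; an indicator splitting then assembles the drift inequality and iteration concludes the proof.

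\emph{Main obstacle.} The delicate part of the argument is this three-way balancing at the threshold: the dissipative gain $\mu\eta(\tn)\lambda$ must simultaneously dominate the polynomially growing Herbst error $\mu^2\lambda(1+|m_n|^2)^{a-1}$ and absorb the $\tfrac{d}{2}\log 2$ contribution coming from the Gaussian normalization factor $2^{d/2}$. This is exactly what forces the calibrated choices $M = (2(2d+4A^2+2L^2+A))^{1/a}$ (so that $\eta(M)$ carries the right $d$-dependence to swallow the normalization) and $\mu = AaM^a/[16(1+M^2)^{1-a/2}]$ (so that $\mu\eta(M)$ outpaces the Herbst error by a universal constant factor). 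The step-size condition $\lambda<2/\mu^2$ is what keeps the Gaussian quadratic coefficient $\alpha$ safely below $1/2$ throughout and prevents the MGF chain from diverging; without it, the very first inequality of the Gaussian step already collapses.
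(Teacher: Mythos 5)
Your overall architecture (a one-step Lyapunov drift estimate followed by iteration, with a case split at a threshold $M$ and the dissipativity of \cref{alg-diss} driving the contraction) matches the paper's, but your handling of the Gaussian step contains a genuine gap that breaks the argument precisely in the regime this lemma is designed for. After the tangent bound and the exact Gaussian MGF identity you pick up the normalization factor $(1-2\alpha)^{-d/2}$, which you bound crudely by $2^{d/2}$, and you then ask the dissipative gain $\mu\eta(\tn)\lambda$ to dominate $\tfrac{d}{2}\log 2$. This cannot work for $a\in[1,2)$, and in particular for the central case $a=1$: from $|m_n|^2\le|\tn|^2-\tfrac{\lambda A}{2}|\tn|^a+C_1\lambda$ the decrease of $(1+|\cdot|^2)^{a/2}$ is of order $\lambda\,|\tn|^{a}(1+|\tn|^2)^{a/2-1}=\mathcal{O}(\lambda\,|\tn|^{2a-2})$, which for $a=1$ is bounded by $\tfrac{A}{4}\lambda$ \emph{uniformly in} $|\tn|$. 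A gain of order $\mu\lambda$ can never absorb the $\lambda$-independent constant $\tfrac{d}{2}\log 2$, no matter how $M$ is chosen; forcing it by letting $M$ grow like a power of $d/\lambda$ would make the bounded-region constant $K\approx e^{\mu(1+M^2)^{a/2}}$ blow up as $\lambda\to0$ and destroy the uniform $\mathcal{O}(e^{\mu d})$ bound. To salvage your route you must replace $2^{d/2}$ by the sharper $(1-2\alpha)^{-d/2}\le e^{2d\alpha}$ with $\alpha=\mu a\lambda(1+|m_n|^2)^{a/2-1}$, so that the normalization cost is itself $\mathcal{O}(d\mu\lambda(1+|m_n|^2)^{a/2-1})$ and decays with $|m_n|$; only then can a calibration of the type $M^a\sim d$ close the drift inequality (and even then the resulting $M,\mu$ need not coincide with the stated ones). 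Relatedly, your claim that $\lambda<2/\mu^2$ keeps $2\alpha\le 1/2$ is not justified: $2\mu a\lambda\le 1/2$ requires $\lambda\lesssim 1/(\mu a)$, an extra restriction not implied by the stated step-size bounds.

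The paper avoids this issue entirely by a different Gaussian step: since the conditional law of $\bar\theta^\lambda_{n+1}$ given $\tn$ is Gaussian, it satisfies a log-Sobolev inequality, and the map $x\mapsto(1+|x|^2)^{1/2}$ is $1$-Lipschitz, so the Herbst argument yields the \emph{dimension-free} cost $e^{\mu^2\lambda}$; Jensen's inequality then moves the expectation inside the square root, so the dimension enters only as the additive term $2\lambda d$ in $\E[|\bar\theta^\lambda_{n+1}|^2\mid\tn]=|m_n|^2+2\lambda d$, where it is dominated by the dissipative term $A\lambda|x|^a$ exactly when $|x|^a\ge M^a=2(2d+4A^2+2L^2+A)$ — this is where the $d$-dependence of $M$ (and hence $C_\mu=\mathcal{O}(e^{\mu d})$) comes from, and where $\mu$ is calibrated so that $\mu$ times the resulting decrement exceeds $2\mu^2\lambda$. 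Note also that this Lipschitz-based exponent sidesteps your $a>2$ remark: a quadratic Taylor expansion cannot help there, since $e^{\mu(1+|x|^2)^{a/2}}$ is not even integrable against a Gaussian when $a>2$, so any argument working directly with that exponent must, like the paper's, effectively reduce to a Lipschitz (square-root) exponent in the Gaussian concentration step.
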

\begin{proof}
    Postponed to the Appendix. The main tools used in the proof is the fact that conditionally on the previous step the algorithm is a Gaussian distribution and therefore satisfies a Log-Sobolev. We proceed our proof by using the fact that the function $f(x)=|x|/(1+|x|^2)$ is 1-Lipschitz and we proceed using Herbst argument. The dissipativity and growth condition of our scheme enables to control the key quantity $|x-\hl(x)|^2$.
\end{proof}

\begin{lemma}\label{lemma-algmom}
   Let $p\in \mathbb{N}.$ There holds \[\sup_n |\tn|^{2p}\leq C_p\]
   where $C_p\leq \mathcal{O}(d^p)+2p.$
\end{lemma}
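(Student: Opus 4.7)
My plan is to convert the exponential moment control of \cref{expmom1} into polynomial moments via a Taylor series trick, and then track constants to recover the claimed dependence on $d$. I would first note that for every $k\in\mathbb{N}_0$, retaining a single term in the Taylor expansion of the exponential gives
\begin{equation*}
V_\mu(x) \;=\; e^{\mu(1+|x|^2)^{a/2}} \;\geq\; \frac{\mu^k(1+|x|^2)^{ak/2}}{k!}.
\end{equation*}
Choosing $k=\lceil 2p/a\rceil$ so that $ak/2\geq p$ then yields the pointwise bound $|x|^{2p}\leq(1+|x|^2)^p\leq(k!/\mu^k)\,V_\mu(x)$. Taking expectations and invoking \cref{expmom1} delivers
\[
\sup_n \E|\tn|^{2p} \;\leq\; \frac{k!}{\mu^k}\,C_\mu \;=:\; C_p,
\]
which is a clean, uniform-in-$n$ polynomial moment bound.

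To recover the announced $\mathcal{O}(d^p)+2p$ dependence on the dimension, I would substitute the explicit expressions for $\mu$ and $C_\mu$ and check that the $\mu^k$ factor in the denominator cancels enough of the dimensional growth of $C_\mu$ to leave only a polynomial of degree $p$. A cleaner and more direct route, especially if one wants the additive $2p$ term to emerge transparently, is to run an induction on $p$ exploiting the Gaussian conditional structure $\bar\theta_{n+1}^\lambda \mid \bar\theta_n^\lambda \sim \mathcal{N}\!\left(\bar\theta_n^\lambda-\lambda h_\lambda(\bar\theta_n^\lambda),\,2\lambda I_d\right)$. Expanding $\E\bigl[|\bar\theta_{n+1}^\lambda|^{2p}\mid\bar\theta_n^\lambda\bigr]$ by the binomial theorem and killing odd Gaussian moments by symmetry, one is left with terms of the form $|\bar\theta_n^\lambda-\lambda h_\lambda(\bar\theta_n^\lambda)|^{2(p-k)}(2\lambda)^k\,\E|Z|^{2k}$, where the Gaussian contribution $\E|Z|^{2k}=\mathcal{O}(d^k)$ accounts for the $d^p$ leading term and the $2p$ correction comes from the linear-in-$k$ offset $2\lambda d$ arising in the lowest-order expansion.

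The main obstacle is the case $a<2$, where the dissipativity provided by \cref{alg-diss} yields only a term $|\bar\theta_n^\lambda|^{a}$ with $a<2$, strictly slower growth than $|\bar\theta_n^\lambda|^2$, so a one-shot contraction of $|\bar\theta_n^\lambda|^{2p}$ is not immediate. I would handle this via Young's inequality $|x|^{2p-2+a}\geq \epsilon |x|^{2p}-C_\epsilon$ valid for $|x|\geq R$ large enough (depending only on $A_1,B_1,L$ from \cref{alg-diss}), producing a Foster--Lyapunov drift condition
\[
\E\bigl[|\bar\theta_{n+1}^\lambda|^{2p}\mid\bar\theta_n^\lambda\bigr] \;\leq\; (1-c_p\lambda)\,|\bar\theta_n^\lambda|^{2p}+\tilde C_{p,d}\,\lambda \qquad\text{outside } B(0,R),
\]
and a trivial bound $\E\bigl[|\bar\theta_{n+1}^\lambda|^{2p}\mid\bar\theta_n^\lambda\bigr]\leq \mathrm{poly}(R,d)$ inside $B(0,R)$. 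Iterating this dichotomy and merging the two regimes gives the uniform bound $\sup_n \E|\tn|^{2p}\leq \E|\bar\theta_0^\lambda|^{2p}+\tilde C_{p,d}/c_p$ with $\tilde C_{p,d}=\mathcal{O}(d^p)+2p$, as required; the smallness restrictions on $\lambda$ are precisely those already imposed in \cref{alg-diss} and \cref{expmom1}.
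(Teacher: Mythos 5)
Your first route (Taylor truncation of the exponential) is fine for proving that the moments are finite and uniform in $n$, but it cannot deliver the dimension dependence that the lemma is actually claiming, and that the paper's headline results rely on. From \cref{expmom1} you only know $C_\mu \leq \mathcal{O}(e^{\mu d})$, i.e.\ the exponential moment bound is itself \emph{exponential} in $d$, while the prefactor $k!/\mu^{k}$ you divide by is at most polynomial in $d$; so $C_p = (k!/\mu^{k})\,C_\mu$ stays exponential in $d$, and the cancellation you hope for when ``substituting the explicit expressions for $\mu$ and $C_\mu$'' simply does not occur. The paper's proof has exactly one extra idea that fixes this: it applies the concave function $g(x)=(\ln x)^{2p}$ (concave for $x\geq e^{2p}$) and Jensen's inequality to $\E\, e^{\mu(1+|\tn|^2)^{1/2}} \leq C_\mu$, so that only $\ln C_\mu = \mathcal{O}(\mu d)$ — not $C_\mu$ itself — enters the bound, yielding a polynomial-in-$d$ constant. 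Passing through the logarithm is the missing step; without it the lemma as stated (with $\poly(d)$ constants, which feed into the $\poly(d)/\CPI$ claims) is not obtained.

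Your second route has a concrete error in the step meant to handle $a<2$, which is precisely the weakly dissipative regime the paper cares about. There is no $\epsilon>0$ and $R$ for which $|x|^{2p-2+a} \geq \epsilon|x|^{2p} - C_\epsilon$ holds for all $|x|\geq R$ when $a<2$: the left-hand side grows strictly slower, so the inequality fails for large $|x|$. Equivalently, the dissipativity of \cref{alg-diss} only produces a decrease of order $\lambda|x|^{2p-2+a} = o(|x|^{2p})$ in the one-step expansion, so the geometric drift $\E\bigl[|\bar\theta_{n+1}^\lambda|^{2p}\mid\bar\theta_n^\lambda\bigr]\leq(1-c_p\lambda)|\bar\theta_n^\lambda|^{2p}+\tilde C\lambda$ can hold only on a \emph{bounded} region (where $|x|^{2-a}\lesssim 1/c_p$), not outside a ball — the dichotomy you propose is inverted. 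A correct direct argument would have to work with the sub-geometric drift $PV\leq V - c\,V^{(2p-2+a)/(2p)} + C$ and a concavity/Jensen step, which is a different and more delicate analysis; the paper avoids it entirely by first establishing the uniform-in-time exponential moments of \cref{expmom1} (Gaussian conditional structure plus a Herbst-type argument) and then extracting every polynomial moment through the logarithmic Jensen trick above.
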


\subsection{Establishing a key differential inequality regarding KL- divergence}

The goal of this Section is to establish a differential inequality that will be the basis for our analysis.
We define the continuous-time interpolation of our algorithm given as 
\begin{equation}
    \theta_t=\theta_{k\lambda}-(t-k\lambda) h_\lambda(\theta_{k\lambda}) + {\sqrt{2}}(B_{t}-B_{k\lambda}), \quad \forall t\in [k\lambda,(k+1)\lambda]
\end{equation}
and $\theta_0=\bar{\theta_0}.$

That way \[\mathcal{L}(\theta_{k\lambda})=\mathcal{L}(\bar{\theta}^\lambda_{k}) \quad \forall k \in \mathbb{N}.\]
We define the marginal distribution of $\theta_t$ as $\pt$.
     One notices that, since conditioned on $\theta_{k\lambda}$, $\theta_t$ is a Gaussian its conditional distribution  is given by \[\ptk(x|y)= C e^{-\frac{\sqrt{t-k\lambda}}{2}|x-\mu(t,y)|^2}\]
     where $\mu(t,y)= y-(t-k\lambda)h_\lambda(y)$ and $C$ some normalizing constant. One further notes that, as $\ptk(x|y)$ can be viewed as a distribution of a process satisfying a Langevin SDE with constant drift $-h_\lambda(y)$ and initial condition $y$, i.e
     \[\begin{aligned}
         d\hat{\mu}_t&=-h_\lambda(y)dt + \sqrt{2}dB_t, \quad \forall t \in (k\lambda,(k+1)\lambda]
         \\ \hat{\mu}_{k\lambda}&=y
     \end{aligned}\] it satisfies the following Fokker-Planck PDE:
    \begin{equation}\label{eq-FP}
       \frac{\partial \ptk(x|y)}{\partial t}=div\left(\ptk(x|y) h_\lambda(y)\right) +\Delta_x \ptk(x|y).
    \end{equation}
    Based on rigorous work done in the Appendix we are able to prove the analogous differential in time relative entropy inequality (that originally appeared in \cite{vempala2019rapid} for the vanilla ULA) for our tamed scheme.
    \begin{proposition}\label{divergence}
Let $k\in \mathbb{N}.$
Then, for every $t \in [k\lambda,(k+1)\lambda],$
\[\begin{aligned}
 \frac{d}{dt}  H_{\pi}(\pt)&=-\int_{\mathbb{R}^d}  \langle \pt(x)E \left (h_\lambda(\theta_{k\lambda})\big| \theta_t=x\right)+\nabla \pt(x),\nabla \log\pt(x)-\nabla \log \pi \rangle dx .
\end{aligned}\]
\end{proposition}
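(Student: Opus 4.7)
}
The plan is to start from the Fokker--Planck equation for the conditional density $\ptk(x|y)$ in \eqref{eq-FP}, integrate it against the law of $\theta_{k\lambda}$ to obtain an evolution equation for the marginal $\pt$, and then differentiate the relative entropy $H_{\pi}(\pt)$ in time. Concretely, integrating \eqref{eq-FP} against the distribution of $\theta_{k\lambda}$ yields the marginal Fokker--Planck equation
\begin{equation*}
\frac{\partial \pt(x)}{\partial t}
	= \operatorname{div}\!\bigl(\pt(x)\, \E[h_\lambda(\theta_{k\lambda})\,|\,\theta_t=x]\bigr) + \Delta \pt(x),
\end{equation*}
where the drift coefficient gets replaced by its conditional expectation given the current state, a standard consequence of Bayes' rule applied to $\ptk(x|y)$ together with the law of $\theta_{k\lambda}$.

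Next, I would write $H_\pi(\pt)=\int \pt \log(\pt/\pi)\,dx$ and differentiate under the integral sign. Using $\int \partial_t \pt\, dx = 0$, this gives
\begin{equation*}
\frac{d}{dt}H_\pi(\pt)
	= \int_{\R^d} \frac{\partial \pt(x)}{\partial t}\bigl(\log \pt(x) - \log \pi(x)\bigr)\,dx.
\end{equation*}
Substituting the marginal Fokker--Planck equation and integrating by parts, the boundary terms vanish and we obtain exactly the claimed identity:
\begin{equation*}
\frac{d}{dt}H_\pi(\pt)
	= -\int_{\R^d}\bigl\langle \pt(x)\,\E[h_\lambda(\theta_{k\lambda})\,|\,\theta_t=x] + \nabla \pt(x),\ \nabla\log\pt(x) - \nabla\log\pi(x)\bigr\rangle\,dx.
\end{equation*}

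The main obstacle, and where most of the care needs to go, is the justification of the three non-trivial analytic steps: (i) the exchange of $\partial_t$ with the integral over $y$ when integrating \eqref{eq-FP} against the law of $\theta_{k\lambda}$; (ii) the exchange of $d/dt$ with the spatial integral defining $H_\pi(\pt)$; and (iii) the vanishing of the boundary terms in the integration by parts. For (i) and (iii), the polynomial Jacobian growth \eqref{eq:PJG} gives explicit polynomial bounds on $h_\lambda$, $\nabla h_\lambda$, etc., while the Gaussian form of $\ptk(x|y)$ supplies the Gaussian decay in $x$ needed to dominate derivatives and to kill the surface integrals at infinity. For (ii), the exponential moment bound on $\tn$ established in \cref{expmom1} and the polynomial moment bounds in \cref{lemma-algmom} together with the Gaussian decay of $\ptk(x|y)$ produce an integrable dominating function, so the dominated convergence theorem legitimates the swap.

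In the appendix I would execute these three technical steps in sequence: first verifying differentiability of $\pt$ in $t$ and of $(t,x)\mapsto \pt(x)\log(\pt(x)/\pi(x))$ via dominated convergence using the explicit Gaussian kernel and the polynomial growth of $u$ and $h_\lambda$; then applying the divergence theorem on a ball $B_R$ and letting $R\to\infty$, with the boundary flux controlled by Gaussian tails times polynomial factors; and finally collecting the resulting identity. The presence of $\E[h_\lambda(\theta_{k\lambda})\,|\,\theta_t = x]$ rather than $h_\lambda(\theta_t)$ is the single structural difference from the classical Vempala--Wibisono computation and is handled transparently by the conditioning argument, so no additional ideas beyond the above verifications are needed.
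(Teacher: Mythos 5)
Your route is the same as the paper's: integrate the conditional Fokker--Planck equation \eqref{eq-FP} against the law of $\theta_{k\lambda}$ and use Bayes' rule to get the marginal equation $\partial_t\pt=\operatorname{div}_x\bigl(\pt\,\E[h_\lambda(\theta_{k\lambda})\,|\,\theta_t=\cdot]\bigr)+\Delta\pt$ (the paper's Lemmas \ref{1exch}--\ref{3exch} and Corollary \ref{exchres}), then differentiate $H_\pi(\pt)$ under the integral sign (Corollary \ref{cor-timechange}) and integrate by parts on $B(0,R)$, letting $R\to\infty$ so the boundary flux vanishes. So the structure is correct and matches the paper.

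There is, however, a gap in your justification sketch, and it sits exactly where the paper invests its technical effort. You claim the domination and the vanishing of the surface integrals follow from ``the Gaussian form of $\ptk(x|y)$'' together with the exponential/polynomial moment bounds of \cref{expmom1} and \cref{lemma-algmom}. That is not enough: the integrand and the boundary flux involve $g_t=1+\log\pt-\log\pi$ and $\nabla\pt=\pt\,\nabla\log\pt$, so one needs \emph{pointwise} estimates on the marginal itself, namely $\pt(x)\le Ce^{-r|x|^2}$ together with at most polynomial growth of $|\nabla\log\pt|$ and $\|\nabla^2\log\pt\|$ (in particular a control of $\log\pt$ from below). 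Moment bounds cannot deliver pointwise density upper bounds, and the Gaussian kernel $\ptk(x|y)$ is centered at $y-(t-k\lambda)h_\lambda(y)$, so its decay in $x$ only transfers to the marginal if the law of $\theta_{k\lambda}$ itself has Gaussian decay and the map $\phi(x)=x-(t-k\lambda)h_\lambda(x)$ does not destroy it. The paper obtains this in \cref{lemma-app}: using $\|\nabla h_\lambda\|\le\mathcal{O}(1/\sqrt{\lambda})$ and the step-size restriction $\lambda<\frac{1}{4(2AC^*+2L+1)^2}$ one shows $\frac12 I_d\preceq J_\phi\preceq\frac32 I_d$, so $\phi$ is a bi-Lipschitz diffeomorphism and the Gaussian-decay and polynomial-log-derivative properties propagate inductively from the Gaussian initialization $\pi_0$ (following Lemmas A.5--A.7 of \citet{lytras2023taming}). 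This is where the step-size threshold in the statement of the surrounding results comes from, and it is the missing ingredient in your plan; once \cref{lemma-app} (and its consequence \cref{timechange1}) is in place, the rest of your argument goes through exactly as in the paper.
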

\begin{proof}
    See Appendix. The idea of the proof is the same as the one in \citet{lytras2023taming}. We see that our scheme has nice properties such as $||\nabla \hl||\leq \mathcal{O}(\frac{1}{\sqrt{\lambda}})$ and has polynomially growing higher derivatives. Working as in \citet{lytras2023taming} we are able to produce the following inequality and show that in a small neighbourghood of $t$
    \[\pt(x) \leq Ce^{-r|x|^2}\] and $|\nabla \log \pt|$ and $||\nabla^2\log \pt||$ grow polynomially in $x$. This enables some change of the integrals and the derivatives along with application of the divergence theorem. 
    By using Bayes theorem on the conditional Fokker-Planck equation with the divergence theorem we produce the following inequality. 
\end{proof}
  
\newtheorem{Interpolation ineq}[Def1]{Theorem}
\begin{theorem}\label{Interpolation ineq}
Then, for $\lambda<\lambda_{\max}$ and for every $t\in[k\lambda,(k+1)\lambda]$ ,$k\in \mathbb{N},$ there holds
\[\frac{d}{dt}H_{\pi}(\pt)\leq-\frac{3}{4} I_{\pi} (\pt) +   \E | h(\theta_t)-h_{\lambda}(\theta_{k\lambda})|^2.\]
\end{theorem}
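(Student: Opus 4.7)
The plan is to start from the expression for $\tfrac{d}{dt}H_\pi(\hat\pi_t)$ supplied by \cref{divergence} and manipulate the integrand into a Fisher information term plus an explicit error term that can be identified with the one-step drift discrepancy.

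First, I would use the identity $\nabla\log\pi=-h$ to rewrite the right-hand side of \cref{divergence} as
\begin{equation*}
\frac{d}{dt}H_{\pi}(\pt)
= -\int_{\mathbb{R}^d} \pt(x)\,\big\langle v(x)+\nabla\log\pt(x),\;\nabla\log\pt(x)+h(x)\big\rangle\,dx,
\end{equation*}
where I set $v(x):=\E[h_\lambda(\theta_{k\lambda})\mid\theta_t=x]$ and have absorbed the $\nabla\pt(x)$ term into $\pt(x)\nabla\log\pt(x)$. The point of this rewriting is that the quantity $\nabla\log\pt+h=\nabla\log(\pt/\pi)$ is exactly the integrand of the Fisher information relative to $\pi$.

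Next, the algebraic identity
\begin{equation*}
v+\nabla\log\pt \;=\; \bigl(\nabla\log\pt+h\bigr) + (v-h)
\end{equation*}
splits the inner product into the square $|\nabla\log\pt+h|^{2}$ and a cross term $\langle v-h,\,\nabla\log\pt+h\rangle$. Integrating against $\pt$ gives
\begin{equation*}
\frac{d}{dt}H_\pi(\pt) \;=\; -I_\pi(\pt) \;-\; \int_{\R^d} \pt(x)\,\langle v(x)-h(x),\,\nabla\log\pt(x)+h(x)\rangle\,dx.
\end{equation*}
I would then apply Young's inequality in the form $|a\cdot b|\le |a|^{2}+\tfrac14|b|^{2}$ to the cross term, which absorbs a $\tfrac14$ fraction of the Fisher information and yields
\begin{equation*}
\frac{d}{dt}H_\pi(\pt) \;\le\; -\tfrac{3}{4}\,I_\pi(\pt) \;+\; \int_{\R^d} \pt(x)\,|v(x)-h(x)|^{2}\,dx.
\end{equation*}

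Finally, to identify the remainder with $\E|h(\theta_t)-h_\lambda(\theta_{k\lambda})|^{2}$, I would observe that since $h(x)$ is deterministic and $v(x)$ is a conditional expectation given $\theta_t=x$, one has
\begin{equation*}
v(x)-h(x) \;=\; \E\bigl[h_\lambda(\theta_{k\lambda})-h(\theta_t)\bigm|\theta_t=x\bigr],
\end{equation*}
so that Jensen's inequality (applied coordinatewise, or to the squared norm) gives
$|v(x)-h(x)|^{2}\le \E[|h_\lambda(\theta_{k\lambda})-h(\theta_t)|^{2}\mid\theta_t=x]$. Integrating against $\pt$ and using the tower property turns this into the unconditional expectation $\E|h(\theta_t)-h_\lambda(\theta_{k\lambda})|^{2}$, which is exactly the desired bound.

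I expect the proof itself to be short once \cref{divergence} is in hand; the real substance lies in justifying the integration by parts and the interchange of derivatives and integrals needed for \cref{divergence}, which is already done in the preceding result and relies on the polynomial growth of the derivatives of $h_\lambda$ and the uniform Gaussian-type upper bound on $\pt$. Thus the main obstacle has been dispatched before we reach this statement, and what remains is the purely algebraic splitting described above; the role of the condition $\lambda<\lambda_{\max}$ is only to guarantee the moment bounds of \cref{expmom1,lemma-algmom} which make the resulting error term finite and meaningful in the subsequent analysis.
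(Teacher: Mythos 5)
Your proposal is correct and matches the paper's own proof essentially step for step: both start from \cref{divergence}, absorb $\nabla\pt$ into $\pt\nabla\log\pt$, split the first factor as $(\nabla\log\pt-\nabla\log\pi)+(\E[h_\lambda(\theta_{k\lambda})\mid\theta_t=x]-h(x))$, extract $-I_\pi(\pt)$, apply Young's inequality with the same $1/4$ weighting, and finish with Jensen's inequality plus the tower property to pass from the conditional to the unconditional expectation. No substantive difference.
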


\begin{lemma}\label{lemma-onestep}
    There holds \[\E |\theta_t-\theta_{k\lambda}|^{2p} \leq C_{1,p}\lambda^p \]
\end{lemma}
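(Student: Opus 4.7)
The plan is a direct moment computation on the interpolation formula
\[
\theta_t-\theta_{k\lambda}= -(t-k\lambda)\,h_{\lambda}(\theta_{k\lambda}) + \sqrt{2}\,(B_t-B_{k\lambda}),\qquad t\in[k\lambda,(k+1)\lambda],
\]
splitting the increment into its drift and diffusion parts and bounding each separately. First, using the elementary inequality $|a+b|^{2p}\leq 2^{2p-1}(|a|^{2p}+|b|^{2p})$, I would write
\[
\E|\theta_t-\theta_{k\lambda}|^{2p}\;\leq\; 2^{2p-1}(t-k\lambda)^{2p}\,\E|h_{\lambda}(\theta_{k\lambda})|^{2p}+2^{2p-1}\cdot 2^{p}\,\E|B_t-B_{k\lambda}|^{2p},
\]
so that the whole problem reduces to bounding the two expectations on the right-hand side.

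For the diffusion term, $B_t-B_{k\lambda}$ is a centered Gaussian in $\mathbb{R}^{d}$ with covariance $(t-k\lambda)I_d$, so $|B_t-B_{k\lambda}|^{2}\sim (t-k\lambda)\,\chi^{2}_{d}$ and its $p$-th moment is of the form $C_p\,(t-k\lambda)^{p}\,d^{p}\leq C_{p}\,\lambda^{p}\,d^{p}$, which is already in the desired form. For the drift term, I would apply the growth bound from \cref{alg-diss}, namely $|h_{\lambda}(x)|^{2}\leq 4A^{2}|x|^{a}+2L^{2}/\lambda+4A^{2}$, raise it to the $p$-th power via $(u+v+w)^{p}\leq 3^{p-1}(u^{p}+v^{p}+w^{p})$, and take expectations to obtain
\[
\E|h_{\lambda}(\theta_{k\lambda})|^{2p}\leq C_{p}\bigl(\E|\theta_{k\lambda}|^{ap}+\lambda^{-p}+1\bigr).
\]
The first expectation on the right is controlled by the uniform polynomial moment bound of \cref{lemma-algmom} and is therefore $\bigoh(\mathrm{poly}(d))$. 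Since $(t-k\lambda)^{2p}\leq \lambda^{2p}$, multiplying this bound by $\lambda^{2p}$ gives a contribution of order $\lambda^{2p}\mathrm{poly}(d)+\lambda^{p}+\lambda^{2p}$; the dominant term for small $\lambda$ is exactly $\lambda^{p}$, absorbing the $1/\lambda$ blow-up present in the taming.

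Combining the two contributions yields $\E|\theta_t-\theta_{k\lambda}|^{2p}\leq C_{1,p}\,\lambda^{p}$ with $C_{1,p}$ depending polynomially on $d$ through the moment bound of \cref{lemma-algmom}. The only mildly subtle point is tracking the cancellation between the $\lambda^{2p}$ prefactor coming from the drift and the $\lambda^{-p}$ factor inside $|h_{\lambda}|^{2p}$; once this is handled (which is precisely what makes the taming scheme of \eqref{eq:pot-tamed} useful), the argument is routine.
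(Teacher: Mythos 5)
Your proposal is correct and follows essentially the same route as the paper: decompose the interpolated increment into the drift part $(t-k\lambda)h_\lambda(\theta_{k\lambda})$ and the Gaussian part, bound the drift via the growth estimate of \cref{alg-diss} together with the uniform moment bounds of \cref{lemma-algmom} (using $\mathcal{L}(\theta_{k\lambda})=\mathcal{L}(\bar\theta^\lambda_k)$), and let the $\lambda^{2p}$ prefactor absorb the $\lambda^{-p}$ from the taming, leaving the $\lambda^p$ rate. The paper's proof performs exactly this cancellation, so no further changes are needed.
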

The proof follows by using the growth of property of $\hl$ and the uniform in time moment bounds of the algorithm.

\begin{lemma}\label{tamingerror}
    Then,
    \[\E |h_\lambda(\theta_{k\lambda})-h(\theta_t)|^2\leq C_{err} \lambda \quad \forall k\in \mathbb{N}, \]
    where $C_{err}$ is given explicitly in the proof and depends at most polynomially in the dimension.
\end{lemma}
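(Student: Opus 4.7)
The natural move is to split the error into a \emph{taming} error and a \emph{one-step discretization} error via
\[
\E\abs{h_{\lambda}(\theta_{k\lambda}) - h(\theta_{t})}^{2}
	\leq 2\,\E\abs{h_{\lambda}(\theta_{k\lambda}) - h(\theta_{k\lambda})}^{2}
	+ 2\,\E\abs{h(\theta_{k\lambda}) - h(\theta_{t})}^{2},
\]
and to bound each piece separately using the moment bounds of \cref{expmom1,lemma-algmom} together with the one-step bound of \cref{lemma-onestep}.

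\textbf{Bounding the taming error.} Writing $f(x) = h(x) - Ax(1+\abs{x}^{2})^{a/2-1}$, the definition \eqref{eq:pot-tamed} gives the clean identity
\[
h(x) - h_{\lambda}(x)
	= f(x) - \frac{f(x)}{1+\sqrt{\lambda}\,\abs{x}^{2\ell}}
	= \frac{\sqrt{\lambda}\,\abs{x}^{2\ell}\,f(x)}{1+\sqrt{\lambda}\,\abs{x}^{2\ell}}.
\]
From \cref{asm:PJG} we have $\abs{h(x)} \leq L(1+\abs{x}^{2\ell})$, which (together with the trivial $\abs{Ax(1+\abs{x}^{2})^{a/2-1}}\leq A\abs{x}$ bound, since $a\le 2$) yields $\abs{f(x)} \leq C(1+\abs{x}^{2\ell})$ for an explicit constant $C$ depending on $A,L$. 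Using the elementary bound $\sqrt{\lambda}\abs{x}^{2\ell}/(1+\sqrt{\lambda}\abs{x}^{2\ell})\le \sqrt{\lambda}\abs{x}^{2\ell}$ and squaring, we obtain
\[
\abs{h(x) - h_{\lambda}(x)}^{2}
	\leq C'\,\lambda\,(1+\abs{x}^{8\ell}).
\]
Taking expectations at $x=\theta_{k\lambda}$ and invoking the polynomial moment bound of \cref{lemma-algmom} gives $\E\abs{h_{\lambda}(\theta_{k\lambda}) - h(\theta_{k\lambda})}^{2} \leq \tilde C_{1}\lambda$, with $\tilde C_{1}$ polynomial in $d$.

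\textbf{Bounding the discretization error.} By the polynomial Lipschitz assumption \cref{asm:PLC},
\[
\abs{h(\theta_{k\lambda}) - h(\theta_{t})}^{2}
	\leq (L')^{2}\,(1+\abs{\theta_{k\lambda}}+\abs{\theta_{t}})^{2l'}\,\abs{\theta_{k\lambda}-\theta_{t}}^{2}.
\]
A Cauchy\textendash Schwarz step separates the two factors, yielding
\[
\E\abs{h(\theta_{k\lambda}) - h(\theta_{t})}^{2}
	\leq (L')^{2}\,\bigl[\E(1+\abs{\theta_{k\lambda}}+\abs{\theta_{t}})^{4l'}\bigr]^{1/2}\,\bigl[\E\abs{\theta_{k\lambda}-\theta_{t}}^{4}\bigr]^{1/2}.
\]
The first factor is controlled uniformly in $k$ by \cref{lemma-algmom} (applied to both $\theta_{k\lambda}$ and the interpolation point $\theta_{t}$, for which an analogous moment bound follows directly from the interpolation formula together with Gaussian moments of the Brownian increment). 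The second factor, by \cref{lemma-onestep} with $p=2$, is bounded by $\sqrt{C_{1,2}}\,\lambda$. Combining the two gives a contribution of the form $\tilde C_{2}\lambda$ with $\tilde C_{2}$ polynomial in $d$.

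\textbf{Main obstacle.} The proof itself is not delicate \emph{structurally}: it is essentially a textbook split into taming error and Euler\textendash Maruyama one-step error. The only real care is bookkeeping the dimensional dependence of the constants. Indeed, each moment we invoke (up to order $\max\{8\ell,\,4l'\}$) contributes a factor polynomial in $d$, and one has to track these factors through \cref{lemma-algmom} to guarantee that the final $C_{err}$ is polynomial rather than exponential in $d$. Setting $C_{err} \defeq 2(\tilde C_{1}+\tilde C_{2})$ yields the claim.
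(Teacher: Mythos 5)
Your proof is correct and follows essentially the same route as the paper: the same split into the squared taming error at $\theta_{k\lambda}$ (bounded by $\lambda$ times a polynomial moment via the identity $h-h_{\lambda}=f\bigl(1-\tfrac{1}{1+\sqrt{\lambda}\abs{x}^{2l}}\bigr)$) plus the one-step discretization error handled with \eqref{eq:Lips}, Cauchy--Schwarz, \cref{lemma-onestep} and the uniform moment bounds of \cref{lemma-algmom}. The only cosmetic difference is that you bound the moments of $\theta_{t}$ directly from the interpolation formula, whereas the paper does so via the triangle inequality with $\theta_{k\lambda}$ and the one-step error, which is equivalent.
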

For this proof we have split the split the difference as follows \[|h(\theta_t)-\hl(\theta_{k\lambda})|^2\leq 2 |h(\theta_t)-h(\theta_{k\lambda})|^2 + 2 |h(\theta_{k\lambda})-\hl(\theta_{k\lambda})|^2.\] The first term is bounded using \cref{ass-pol lip} Lemma \ref{lemma-onestep} and the uniform in time moment bounds of the algorithm (since $\mathcal{L}(\theta_{k\lambda})=\mathcal{L}(\bar{\theta}^\lambda_n)$ and the squared taming error which is of order $\lambda$.\\
Now we are going to provide an inequality between the relative entropy and the Fisher information. When LSI is assumed the connection is immediate.

\subsection{Convergence analysis}

\subsubsection{Convergence under LSI}
\begin{theorem}
    Let $\rho_n$ be the distribution of $n-th$ iterate of the algorithm \eqref{eq-wdTULA}.
Then,  for $\lambda\leq \lambda_{max}$,
\[H_{\pi}(\rho_n)\leq e^{-\frac{3}{2} C_{LSI}\lambda (n-1) } H_{\pi}(\rho_0) + \frac{ \hat{C}}{\frac{3}{2} C_{LSI}}\lambda \]
where $\hat{C}$ is given explicitly in the proof and depends polynomially on the dimension.
\end{theorem}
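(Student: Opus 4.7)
The plan is to combine the differential inequality of \cref{Interpolation ineq} with the \ac{LSI} assumption and the taming-error bound of Lemma~\ref{tamingerror}, integrate over a single step, and then iterate.

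First, I would start from
\[
\frac{d}{dt}H_{\pi}(\pt)\leq -\frac{3}{4} I_{\pi}(\pt) + \E\abs{h(\theta_t)-h_\lambda(\theta_{k\lambda})}^2 \qquad t\in[k\lambda,(k+1)\lambda],
\]
and apply \eqref{eq:LSI} in its Fisher-information form $I_\pi(\pt) \geq 2\CLSI H_\pi(\pt)$ to replace the dissipation term by an entropy term. Bounding the stochastic error via Lemma~\ref{tamingerror} yields
\[
\frac{d}{dt}H_{\pi}(\pt)\leq -\tfrac{3}{2}\CLSI \, H_\pi(\pt) + \hat{C}\,\lambda,
\]
where $\hat C$ is the polynomial-in-$d$ constant inherited from $C_{\text{err}}$. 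Note that the coefficient $\tfrac{3}{2}\CLSI$ emerges naturally from multiplying the $\tfrac{3}{4}$ of the template inequality by the factor $2$ in the \ac{LSI}; this is exactly the rate in the statement.

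Second, setting $\alpha = \tfrac{3}{2}\CLSI$, a direct Grönwall integration of this linear first-order inequality over $[k\lambda,(k+1)\lambda]$ yields
\[
H_\pi(\rho_{k+1}) \leq e^{-\alpha\lambda}\,H_\pi(\rho_k) + \frac{\hat{C}\,\lambda}{\alpha}\bigl(1-e^{-\alpha\lambda}\bigr).
\]
Iterating across $k=0,1,\dots,n-1$ and collapsing the resulting geometric sum gives
\[
H_\pi(\rho_n) \leq e^{-\alpha n \lambda}\,H_\pi(\rho_0) + \frac{\hat{C}\,\lambda}{\alpha}\bigl(1-e^{-\alpha n\lambda}\bigr) \leq e^{-\alpha n \lambda}\,H_\pi(\rho_0) + \frac{\hat{C}}{\alpha}\,\lambda,
\]
which is the claimed bound (the $n-1$ in the statement only reflects that the identification $\mathcal{L}(\theta_{k\lambda})=\mathcal{L}(\bar\theta_k^\lambda)$ starts from $k=1$, as noted after the definition of the interpolation).

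The analytical skeleton above is essentially a one-line Grönwall estimate; the real work, and the main obstacle, is the verification that the constant $\hat C$ depends at most polynomially on $d$ with leading order $\bigoh(d^{\max\{2,l'+1\}(2l+1)})$ as asserted. This forces a careful bookkeeping through Lemma~\ref{expmom1}, \cref{lemma-algmom}, and Lemma~\ref{lemma-onestep}: one splits
\[
\abs{h(\theta_t)-\hl(\theta_{k\lambda})}^2 \leq 2\abs{h(\theta_t)-h(\theta_{k\lambda})}^2 + 2\abs{h(\theta_{k\lambda})-\hl(\theta_{k\lambda})}^2,
\]
bounds the first piece by \eqref{eq:Lips} combined with a Cauchy\textendash Schwarz step that produces the factor $(1+\abs{\theta_t}+\abs{\theta_{k\lambda}})^{2l'}\abs{\theta_t-\theta_{k\lambda}}^2$, and controls the second piece by the construction of $\hl$ (which is $\bigoh(\sqrt{\lambda})$-close to $h$ on polynomial scales). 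Feeding in the one-step displacement bound $\E\abs{\theta_t-\theta_{k\lambda}}^{2p}\leq C_{1,p}\lambda^p$ and the uniform-in-time polynomial moments (whose leading exponent in $d$ comes from Lemma~\ref{expmom1}) is precisely what pins down the exponent $\max\{2,l'+1\}(2l+1)$. Once $\hat C$ is tracked in this form, the rate and the iteration-to-tolerance count stated in \cref{thm:wd-TULA-LSI} follow by setting $\lambda \leq 3\eps\CLSI/(2\hat C)$ and $n$ large enough to absorb the initial entropy.
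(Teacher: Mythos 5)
Your proposal is correct and follows essentially the same route as the paper: the template inequality of Theorem~\ref{Interpolation ineq}, the \eqref{eq:LSI} substitution $I_\pi(\pt)\geq 2\CLSI H_\pi(\pt)$ producing the rate $\tfrac{3}{2}\CLSI$, the splitting of the error into the taming error and the one-step error bounded by Lemmas~\ref{tamingerror} and~\ref{lemma-onestep}, and a Grönwall integration over each interval $[k\lambda,(k+1)\lambda]$ followed by iteration over $k$. Your geometric-sum bound with exponent $n$ is in fact marginally sharper than the stated $(n-1)$ and implies it, so there is no gap.
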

Using Talagrand's inequality one deduces the following result regarding the convergence in Wasserstein distance. 

\begin{corollary}\label{W2 rate}
    There holds,
\[W_2(\mathcal{L}(\bar{\theta}^\lambda_n),\pi)\leq \frac{\sqrt{2}}{\sqrt{C_{LSI}}}\left( e^{- \frac{3}{4}(C_{LSI}\lambda(n-1)} H_{\pi}(\rho_0) + \sqrt{\frac{\beta \hat{C}}{\frac{3}{2}(C_{LSI})}\lambda}\right).\]
\end{corollary}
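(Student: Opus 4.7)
The plan is to deduce the $W_2$ bound directly from the $\mathrm{KL}$ bound of the preceding theorem by invoking Talagrand's transportation-cost inequality, which is available because $\pi$ is assumed to satisfy \eqref{eq:LSI}. Concretely, Talagrand's inequality gives
\[
W_2(\rho_n,\pi)\;\leq\;\sqrt{\frac{2}{C_{LSI}}\,H_\pi(\rho_n)},
\]
and this is the only nontrivial ingredient beyond the theorem statement.

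First I would substitute the bound
\[
H_\pi(\rho_n)\;\leq\;e^{-\frac{3}{2}C_{LSI}\lambda(n-1)}H_\pi(\rho_0)\;+\;\frac{\hat C}{\tfrac{3}{2}C_{LSI}}\lambda
\]
from the preceding theorem directly into Talagrand. This gives
\[
W_2(\rho_n,\pi)\;\leq\;\sqrt{\frac{2}{C_{LSI}}}\cdot\sqrt{e^{-\frac{3}{2}C_{LSI}\lambda(n-1)}H_\pi(\rho_0)+\frac{\hat C}{\tfrac{3}{2}C_{LSI}}\lambda}.
\]
Next I would split the square root of the sum using the elementary subadditivity $\sqrt{a+b}\leq\sqrt a+\sqrt b$ (valid for $a,b\geq 0$), pull the square root through the exponential to turn $-\tfrac{3}{2}$ into $-\tfrac{3}{4}$ in the exponent, and collect the $\sqrt{2/C_{LSI}}$ prefactor outside. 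This directly produces the right-hand side stated in the corollary, modulo the apparent absence of a square root on $H_\pi(\rho_0)$ in the displayed formula (which should be read as $\sqrt{H_\pi(\rho_0)}$), and with the constant $\beta$ absorbing any numerical factor introduced by the subadditivity split.

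There is no real obstacle in this argument: all the heavy lifting has been done in establishing the $\mathrm{KL}$ bound (Lemmas \ref{alg-diss}, \ref{expmom1}, \ref{lemma-algmom}, Proposition \ref{divergence}, Theorem \ref{Interpolation ineq}, and Lemmas \ref{lemma-onestep}, \ref{tamingerror}) and in verifying Talagrand's inequality as a standard consequence of \eqref{eq:LSI}, which is already quoted in the body of the paper. The only small care needed is to decide whether to write the bound with $\sqrt{H_\pi(\rho_0)}$ or to first invoke $H_\pi(\rho_0)\leq \max\{1,H_\pi(\rho_0)\}\cdot\sqrt{H_\pi(\rho_0)}$ if one truly wishes to retain $H_\pi(\rho_0)$ itself in the final inequality; in either case the claimed rate in $\lambda$ and $n$ is unaffected.
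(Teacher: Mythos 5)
Your proposal is correct and follows exactly the paper's own (one-line) argument: apply Talagrand's inequality $W_2(\rho_n,\pi)\leq\sqrt{2H_\pi(\rho_n)/C_{LSI}}$, substitute the KL bound of \cref{thm:wd-TULA-LSI}, and split the square root via $\sqrt{a+b}\leq\sqrt{a}+\sqrt{b}$, which halves the exponential rate to $-\tfrac{3}{4}C_{LSI}\lambda(n-1)$. Your observation that the displayed bound should carry $\sqrt{H_\pi(\rho_0)}$ rather than $H_\pi(\rho_0)$ is also accurate; this is an oversight in the corollary's statement, not in your argument.
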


\subsubsection{Convergence under PI and weak convexity}
When one does not assume any Log-Sobolev inequality  other tools are needed to describe their connection.
\begin{proposition}\label{eq-inequality h-I}
    There exists $\dot{c}_0>0$ such that 
    \[H_\pi(\pt)\leq \frac{1}{\dot{c}_0} \sqrt{I_\pi(\pt)}.\]
\end{proposition}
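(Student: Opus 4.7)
The plan is to establish this ``modified log-Sobolev inequality'' by combining the HWI inequality \eqref{eq:HWI} (available thanks to \bref{ass11}) with the Poincar\'e inequality \eqref{eq:PI} and the uniform exponential moment bounds on $\pt$ from \cref{expmom1}.

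First, I apply HWI to the pair $(\pi,\pt)$:
\[
H_\pi(\pt) \leq \sqrt{I_\pi(\pt)}\, W_2(\pi,\pt) + \frac{K}{2} W_2^2(\pi,\pt).
\]
To turn the right-hand side into a constant multiple of $\sqrt{I_\pi(\pt)}$, I need to control both Wasserstein factors using \eqref{eq:PI} and the moment bounds. Applying \eqref{eq:PI} to the test function $f = \sqrt{d\pt/d\pi}$ (and using $|\nabla f|^2 = |\nabla(d\pt/d\pi)|^2/(4\,d\pt/d\pi)$) yields
\[
1 - \Bigl(\int \sqrt{d\pt/d\pi}\,d\pi\Bigr)^2 \leq \frac{I_\pi(\pt)}{4\,\CPI},
\]
which translates into a Hellinger-Fisher control of the form $d_H^2(\pt,\pi) \leq I_\pi(\pt)/(4\,\CPI)$. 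For the Wasserstein factors themselves, I exploit the uniform exponential moments of $\pt$ from \cref{expmom1} together with the exponential tails of $\pi$ inherited from \eqref{eq:PI} (via a Bobkov--Ledoux type argument). These combine to give a transport-Hellinger conversion $W_2^2(\pi,\pt) \leq \Gamma(d) \cdot d_H^2(\pt,\pi) \cdot L$, where $\Gamma(d)$ is polynomial in the dimension and $L$ is a logarithmic factor.

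Substituting these bounds back into HWI and rearranging yields an inequality of the form $H_\pi(\pt) \leq \tilde c_1 \sqrt{I_\pi(\pt)} + \tilde c_2\, I_\pi(\pt)$. A uniform upper bound on $I_\pi(\pt)$---which follows from the polynomial Lipschitz continuity of $h$ in \cref{asm:PLC} together with the algorithmic moment bounds of \cref{expmom1,lemma-algmom}---then allows the higher-order $I_\pi(\pt)$ term to be absorbed into a constant multiple of $\sqrt{I_\pi(\pt)}$, producing the claim with $\dot c_0$ expressed explicitly in terms of $\CPI$, $K$, $d$, and the moment constants.

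The delicate step is the transport-Hellinger conversion: under only \eqref{eq:PI} the target may have merely sub-exponential (rather than sub-Gaussian) tails, so the Talagrand $T_2$ transport inequality is unavailable, and $W_2$ must be controlled through a coupling argument that carefully exploits exponential integrability on both sides of the comparison without incurring an exponential-in-$d$ blow-up. This is what separates the \eqref{eq:PI} setting from the cleaner \eqref{eq:LSI} case treated in \cref{thm:wd-TULA-LSI}, and is the source of the rate degradation visible in \cref{tab:rates}.
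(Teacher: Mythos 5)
Your first two ingredients match the paper: the HWI inequality applied to $(\pi,\pt)$ and the Poincar\'e-to-Hellinger step (applying \eqref{eq:PI} to $\sqrt{d\pt/d\pi}$, which is exactly the paper's chain yielding $\tfrac12\int(\sqrt{\pt}-\sqrt{\pi})^2\,dx \lesssim I_\pi(\pt)/\CPI$). The gap is in how you handle the Wasserstein factors. Your claimed ``transport--Hellinger conversion'' $W_2^2(\pi,\pt)\leq \Gamma(d)\,d_H^2(\pt,\pi)\,L$ is not established and is the hard (and unnecessary) part of your plan: Hellinger, like total variation, only measures \emph{how much} mass the two measures disagree on, not \emph{where}, so no inequality of the form $W_2^2\lesssim d_H^2$ holds without quantitative pointwise tail control of both \emph{densities}; exponential \emph{moments} of $\pt$ (\cref{expmom1}) plus exponential integrability of $\pi$ do not by themselves deliver it, and the ``Bobkov--Ledoux type argument / coupling argument'' is exactly the step that would need a full proof. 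Moreover, even granting it, your intermediate inequality $H_\pi(\pt)\le \tilde c_1\sqrt{I_\pi(\pt)}+\tilde c_2\,I_\pi(\pt)$ is then closed by absorbing the $I_\pi(\pt)$ term via a \emph{uniform} bound on $I_\pi(\pt)$, which you attribute to \cref{asm:PLC} and the moment bounds; but $I_\pi(\pt)$ contains $\int \pt\,|\nabla\log\pt|^2$, and controlling $\nabla\log\pt$ requires the density-regularity estimates of \cref{lemma-app} (uniformly in $k$), not just moment bounds, so that step is also unsupported as written.

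The paper's route avoids both difficulties. It bounds the Wasserstein distance by a \emph{weighted} total variation, $W_2^2(\mathcal{L}(\theta_t),\pi)\le 2\int |x|^2\,|\pi(x)-\pt(x)|\,dx$, factors $|\pi-\pt|=(\sqrt{\pi}+\sqrt{\pt})\,|\sqrt{\pi}-\sqrt{\pt}|$, and applies Cauchy--Schwarz together with the Poincar\'e--Hellinger step, giving $W_2^2(\mathcal{L}(\theta_t),\pi)\le C\sqrt{I_\pi(\pt)}$ with $C$ depending only on fourth moments of $\pt$ and $\pi$ (\cref{lemma-onestep,lemma-algmom}); note this is $d_H$ to the \emph{first} power, which is what a TV-type comparison can actually deliver. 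In \eqref{eq:HWI} the cross term is then handled by bounding $W_2(\mathcal{L}(\theta_t),\pi)$ crudely by second moments (a constant), so that $\sqrt{I_\pi(\pt)}\,W_2\le c'\sqrt{I_\pi(\pt)}$, while $\tfrac{K}{2}W_2^2\le c''\sqrt{I_\pi(\pt)}$ by the weighted-TV bound. This yields $H_\pi(\pt)\le \dot c_0^{-1}\sqrt{I_\pi(\pt)}$ directly, with constants polynomial in $d$, and with no need for a transport--Hellinger inequality or any a priori bound on $I_\pi(\pt)$. If you repair your argument along these lines, it becomes the paper's proof.
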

The relies on a combination of an inequality between the $W_2$ distance and the Fischer information which stems from the Poincare inequality, combined with the HWI inequality.
\begin{corollary}\label{diff-ineq}
    There holds,
   \[ \frac{d}{dt} H_\pi(\pt)\leq -\dot{c}_0 {H^2_\pi}(\pt) + C_{err}\lambda \quad \forall t\in [k\lambda, (k+1)\lambda]\] 
\end{corollary}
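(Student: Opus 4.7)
The plan is to derive Corollary~\ref{diff-ineq} as a direct consequence of the three preceding results: the template differential inequality (Theorem~\ref{Interpolation ineq}), the Poincaré-based entropy/Fisher estimate (Proposition~\ref{eq-inequality h-I}), and the one-step taming error bound (Lemma~\ref{tamingerror}). No fresh analytic input should be required; the work is just to chain the estimates together.

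First, I would invoke Theorem~\ref{Interpolation ineq} to write, for $t\in[k\lambda,(k+1)\lambda]$,
\[
\frac{d}{dt}H_\pi(\pt) \;\leq\; -\tfrac{3}{4}\, I_\pi(\pt) \;+\; \E\,|h(\theta_t)-h_\lambda(\theta_{k\lambda})|^2.
\]
Then I would square the inequality of Proposition~\ref{eq-inequality h-I} to obtain $I_\pi(\pt)\geq \dot c_0^{\,2}\, H_\pi^2(\pt)$ (possibly with a relabelling of the constant), and substitute this into the Fisher information term to get $-\tfrac{3}{4} I_\pi(\pt)\leq -\tfrac{3}{4}\dot c_0^{\,2} H_\pi^2(\pt)$. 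Finally, I would apply Lemma~\ref{tamingerror} to control the residual term by $C_{\mathrm{err}}\lambda$, uniformly in $k$ and in $t\in[k\lambda,(k+1)\lambda]$, since the bound supplied by that lemma is uniform in time thanks to the uniform moment estimates of Lemma~\ref{lemma-algmom}.

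Collecting the two bounds yields
\[
\frac{d}{dt}H_\pi(\pt) \;\leq\; -\tfrac{3}{4}\dot c_0^{\,2}\, H_\pi^2(\pt) \;+\; C_{\mathrm{err}}\,\lambda,
\]
which, after absorbing the $\tfrac{3}{4}$ factor into a redefinition of $\dot c_0$, is the statement of the corollary. The only minor subtlety is that Theorem~\ref{Interpolation ineq} is formulated with $\theta_t$ drawn from the continuous-time interpolation and not with $\pt$ directly; but since $\pt$ is by definition the marginal law of $\theta_t$, this is just a notational unwinding and poses no real obstacle. All the heavy lifting, namely establishing the template inequality via the Fokker--Planck computation and upgrading \eqref{eq:PI}+\eqref{eq:WC} into the modified entropy/Fisher bound through \eqref{eq:HWI}, has already been done in the lemmas invoked here.
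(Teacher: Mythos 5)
Your proposal is correct and matches the paper's own proof, which simply combines Theorem~\ref{Interpolation ineq}, Proposition~\ref{eq-inequality h-I} (squared to give $I_\pi(\pt)\geq \dot c_0^{\,2}H_\pi^2(\pt)$), and Lemma~\ref{tamingerror}. Your remark about absorbing the factor $\tfrac{3}{4}\dot c_0^{\,2}$ into the constant $\dot c_0$ is exactly the implicit relabelling the paper performs as well.
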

\begin{proof}
    Combining Theorem \ref{Interpolation ineq}, Proposition \ref{eq-inequality h-I} and Lemma \ref{tamingerror} yields the result.
\end{proof}
\begin{proposition}\label{theo-solvdiff}
    Let $\rho_n$ be the $n$-th iteration of our algorithm. Then, there holds
    \[H_\pi(\rho_{k+1})\leq (H_\pi(\rho_{k})^{-1}+\dot{c}_0 \lambda)^{-1} +2C_1 \lambda^2 \]
    \end{proposition}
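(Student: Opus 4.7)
\textbf{Proof plan for \cref{theo-solvdiff}.}
The plan is to integrate the differential inequality of \cref{diff-ineq} over a single discretization window $[k\lambda,(k+1)\lambda]$ by comparing the continuous-time trajectory $\phi(t) \defeq H_\pi(\pt)$ with the solution of the noise-free companion ODE $g'(t) = -\dot c_0\, g(t)^2$ with the same initial value. Since $\phi(k\lambda) = H_\pi(\rho_k)$ and $\phi((k+1)\lambda) = H_\pi(\rho_{k+1})$ (by the matching property of the interpolation), it will suffice to control the difference $e(t) \defeq \phi(t) - g(t)$ over a window of length $\lambda$.

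\emph{Step 1 (companion ODE).} I would first solve the scalar ODE $g'(t) = -\dot c_0\, g(t)^2$ explicitly with initial condition $g(k\lambda) = H_\pi(\rho_k)$. Separation of variables yields
\[
g(t) \;=\; \bigl(H_\pi(\rho_k)^{-1} + \dot c_0 (t-k\lambda)\bigr)^{-1},\qquad t\in[k\lambda,(k+1)\lambda],
\]
so that $g((k+1)\lambda) = (H_\pi(\rho_k)^{-1}+\dot c_0\lambda)^{-1}$, which is precisely the first term in the target bound.

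\emph{Step 2 (linear inequality for the error).} Subtracting the ODEs for $\phi$ and $g$ and using $\phi^2 - g^2 = (\phi+g)(\phi-g)$, I would derive
\[
e'(t) \;\leq\; -\dot c_0\bigl(\phi(t)+g(t)\bigr)\, e(t) \;+\; C_{\text{err}}\lambda,\qquad e(k\lambda)=0,
\]
by combining \cref{diff-ineq} with the definition of $g$. The coefficient $-\dot c_0(\phi+g)$ is non-positive since $\phi,g\geq 0$, which is exactly what is needed for a one-sided Gronwall bound.

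\emph{Step 3 (Gronwall and conclusion).} With the integrating factor $\mu(t) = \exp\bigl(\dot c_0\int_{k\lambda}^t(\phi+g)\,ds\bigr)\geq 1$, the product $\mu(t)e(t)$ satisfies $(\mu e)'(t) \leq \mu(t)\, C_{\text{err}}\lambda$. Integrating from $k\lambda$ to $(k+1)\lambda$ and dividing by $\mu((k+1)\lambda)\geq 1$ yields
\[
e((k+1)\lambda) \;\leq\; C_{\text{err}}\lambda\cdot\lambda \;=\; C_{\text{err}}\lambda^2,
\]
and therefore $H_\pi(\rho_{k+1}) \leq (H_\pi(\rho_k)^{-1}+\dot c_0\lambda)^{-1} + C_{\text{err}}\lambda^2$. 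Identifying $2C_1$ with $C_{\text{err}}$ from \cref{tamingerror} closes the argument.

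\emph{Main obstacle.} The genuinely delicate point is Step~2/3: one cannot simply drop $-\dot c_0\phi^2$ and use positivity, because that would lose the curvature information encoded in $g$; and one cannot use a naïve Gronwall if $e(t)$ is allowed to change sign. Rewriting $\phi^2-g^2$ as $(\phi+g)e$ is the key algebraic trick that turns a Riccati-type comparison into a linear one with a favorable (non-positive) coefficient, after which the $\lambda^2$ bound falls out cleanly regardless of the sign of $e$. I would also need to briefly justify that $\phi$ is absolutely continuous on the interval so that the fundamental theorem of calculus applies — this is inherited from the continuous-time interpolation setup established prior to \cref{divergence}.
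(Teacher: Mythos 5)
Your argument is correct, and it reaches the paper's bound by a somewhat different mechanism. The paper also compares $H_\pi(\pt)$ with the Riccati dynamics, but it does so by exhibiting an explicit \emph{supersolution} of the forced inequality, namely $g_\delta(t)=\bigl(H_\pi(\rho_k)^{-1}-\delta+\dot c_0(t-k\lambda)\bigr)^{-1}+2C_1\lambda\,(t-k\lambda)$, checking $g_\delta'\geq -\dot c_0 g_\delta^2+2C_1\lambda$ directly, invoking a comparison theorem for differential inequalities (McNabb) — the $\delta$-shift is there to make the initial comparison strict — and then letting $\delta\to0^+$; the endpoint value of $g_\delta$ is exactly $(H_\pi(\rho_k)^{-1}+\dot c_0\lambda)^{-1}+2C_1\lambda^2$. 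You instead take the \emph{exact unforced} solution $g$ with matched initial value, linearize the difference via $\phi^2-g^2=(\phi+g)e$, and close with a one-sided Gr\"onwall bound, picking up the $C_{\mathrm{err}}\lambda^2$ additively. Your route is more self-contained (no external comparison theorem, no $\delta$-limit) and makes transparent why the forcing only contributes at order $\lambda^2$ over one window; the paper's route avoids the Gr\"onwall bookkeeping and any discussion of the sign of the error, since the supersolution dominates pointwise. Both rest on the same inputs: the differential inequality of \cref{diff-ineq}, the identification $H_\pi(\rho_k)=H_\pi(\hat\pi_{k\lambda})$, and the identification of $C_{\mathrm{err}}$ with $2C_1$ (which the paper also makes implicitly when it sets $k_2=2C_1\lambda$).

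One small point of precision in your Step 3: dividing by $\mu((k+1)\lambda)\geq1$ alone does not give the bound; what you actually need is that $\mu$ is nondecreasing, so that $\int_{k\lambda}^{(k+1)\lambda}\mu(s)\,ds\leq\lambda\,\mu((k+1)\lambda)$ before dividing. This is immediate from the nonnegativity of $\phi+g$ that you already invoke, so it is a one-line fix rather than a gap, but the justification should cite monotonicity of the integrating factor rather than $\mu\geq1$.
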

   The proof proceeds by using a comparison theorem for ODEs to solve the differential inequality.
   By using elementary inequalites we reach a simpler recurrent condition which we iterate over $n$ to reach the following theorem.
    \begin{theorem}\label{theoKL1}
    Suppose that $\lambda$ satisfies the stepsize restrictions given in the moment bounds. In addition, we assume that $\lambda\leq \frac{1}{4\dot{c}_0 C_1}.$ Then,
        There holds \[H_\pi(\rho_n) \leq (1-\frac{\dot{c}_0}{2}\lambda^\frac{3}{2})^n H_\pi(\rho_0) + (1+\frac{4C_1}{\dot{c}_0}) \sqrt{\lambda}\]
    \end{theorem}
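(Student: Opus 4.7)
The plan is to iterate the one-step recurrence furnished by \cref{theo-solvdiff},
\[
H_\pi(\rho_{k+1}) \leq \frac{H_\pi(\rho_k)}{1+\dot{c}_{0} \lambda H_\pi(\rho_k)} + 2 C_1 \lambda^2,
\]
which, setting $a_k := H_\pi(\rho_k)$ and $\phi(x) := x/(1+\dot{c}_{0} \lambda x)$, reads $a_{k+1} \leq \phi(a_k) + 2 C_1 \lambda^2$. I will prove by induction on $n$ that $a_n \leq B_n$, where
\[
B_n := \Bigl(1 - \tfrac{\dot{c}_{0}}{2}\lambda^{3/2}\Bigr)^{n} a_0 + \Bigl(1 + \tfrac{4 C_1}{\dot{c}_{0}}\Bigr)\sqrt{\lambda}.
\]
The structure of $B_n$ \textemdash{} a geometric transient plus a stationary residual of order $\sqrt{\lambda}$ \textemdash{} suggests separating the analysis into a ``large'' regime ($a_n \geq \sqrt{\lambda}$) and a ``small'' regime ($a_n < \sqrt{\lambda}$).

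The core estimate is a dichotomy for $\phi$. If $a_k \geq \sqrt{\lambda}$, then $\phi(a_k) \leq (1 - \tfrac{\dot{c}_{0}}{2}\lambda^{3/2}) a_k$; after rearranging, this reduces to $(2 - \dot{c}_{0} \lambda^{3/2}) a_k \geq \sqrt{\lambda}$, which holds because the stepsize hypothesis $\lambda \leq 1/(4 \dot{c}_{0} C_1)$ (combined with the moment-bound restriction $\lambda \leq 1$) forces $\dot{c}_{0}\lambda^{3/2} \leq 1$. If instead $a_k < \sqrt{\lambda}$, the monotonicity of $\phi$ together with $\phi(x) \leq x$ trivially gives $\phi(a_k) < \sqrt{\lambda}$.

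For the induction step, the useful algebraic identity is $B_{n+1} = (1 - \tfrac{\dot{c}_{0}}{2}\lambda^{3/2}) B_n + \tfrac{\dot{c}_{0}}{2}\lambda^2 + 2 C_1 \lambda^2$. Assuming $a_n \leq B_n$, in the large regime the dichotomy yields
\[
a_{n+1} \leq \Bigl(1 - \tfrac{\dot{c}_{0}}{2}\lambda^{3/2}\Bigr) a_n + 2 C_1 \lambda^2 \leq \Bigl(1 - \tfrac{\dot{c}_{0}}{2}\lambda^{3/2}\Bigr) B_n + 2 C_1 \lambda^2 \leq B_{n+1},
\]
while in the small regime $a_{n+1} < \sqrt{\lambda} + 2 C_1 \lambda^2$ and the stepsize hypothesis gives $2 C_1 \lambda^{3/2} \leq 4 C_1/\dot{c}_{0}$, so $a_{n+1} \leq (1 + 4 C_1/\dot{c}_{0})\sqrt{\lambda} \leq B_{n+1}$. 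The main subtle point is the dichotomy estimate itself: because $\phi$ contracts only at rate $O(\lambda)$ on $O(1)$ scales, one must split exactly at the $\sqrt{\lambda}$ threshold to extract the stronger $\lambda^{3/2}$ contraction rate, and it is precisely this exponent that produces the $\tilde O(1/\epsilon^{3})$ iteration complexity advertised in \cref{thm:wd-TULA}.
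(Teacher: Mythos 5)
Your argument is correct, and it takes a genuinely different (and in fact cleaner) route than the paper's. The paper first proves the a priori uniform bound $H_\pi(\rho_n)\le 1/(\dot c_0\lambda)$ by a separate induction (this is where the extra restriction $\lambda\le 1/(\dot c_0 H_\pi(\rho_0))$ hidden in $\lambda_{\max}$ is used), then linearizes the one-step map from \cref{theo-solvdiff} via $\frac{1}{1+x}\le 1-\frac{x}{2}$ for $x\le 1$, and finally argues trajectory-wise: either $H_\pi(\rho_{k_0})\ge \frac{4C_1}{\dot c_0}\sqrt\lambda$ for all $k_0\le k$, giving geometric decay with factor $1-2C_1\lambda^{3/2}$, or the sequence has dropped below that threshold at some time and is shown to stay of order $\frac{4C_1}{\dot c_0}\sqrt\lambda$ afterwards; the two cases are then combined. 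You instead run a single induction on the explicit envelope $B_n$, splitting at the threshold $\sqrt\lambda$ and working with the exact hyperbolic form $\phi(x)=x/(1+\dot c_0\lambda x)$. Because you never linearize $\phi$, you do not need the uniform bound $H_\pi(\rho_n)\le 1/(\dot c_0\lambda)$ at all (so the initial-condition restriction inside $\lambda_{\max}$ plays no role in your step), and you obtain the contraction factor $1-\frac{\dot c_0}{2}\lambda^{3/2}$ of the statement directly, whereas the paper's proof actually derives the factor $1-2C_1\lambda^{3/2}$ and restates it as $1-\frac{\dot c_0}{2}\lambda^{3/2}$, which implicitly uses $\dot c_0\le 4C_1$. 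The one small caveat in your write-up is the claim that the hypotheses \emph{force} $\dot c_0\lambda^{3/2}\le 1$: from $\lambda\le 1/(4\dot c_0 C_1)$ and $\lambda\le 1$ you only get $\dot c_0\lambda^{3/2}\le 1/(4C_1)$, so you are tacitly assuming $C_1\ge 1/4$. This is harmless in context \textemdash{} the explicit constants behind $C_1$ (see \cref{lemma-onestep,tamingerror}) are far larger than $1/4$, and the paper's own passage between its derived and stated contraction factors rests on a comparable implicit inequality \textemdash{} but it should be flagged with a one-line remark rather than attributed to the step-size hypothesis alone.
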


        Suppose that $\lambda<\epsilon^2 /4(1+4\frac{C_1}{\dot{c}_0})^2$ Then, $H_\pi(\rho_n)\leq \epsilon$ after $n\geq \mathcal{O}\left(\log (\frac{1}{\epsilon})\frac{1}{\epsilon^3}\right)$ iterations.

    \begin{corollary}\label{cor-othdist}
        Let $\tn$ be the $n-th$ iterate of the algorithm. Then, there holds
        
        \[\|\mathcal{L}(\tn),\pi)-\pi\|_{TV}\leq  \frac{\sqrt{2}}{2} \left( \sqrt{H_\pi(\rho_0)}(1-\frac{\dot{c}_0}{2}\lambda^{\frac{3}{2}})^\frac{n}{2} + \sqrt{(1+4\frac{c_1}{\dot{c}_0})}\lambda^\frac{1}{4}\right) \]
        and
        \[W_1(\mathcal{L}(\tn),\pi)\leq C_W\left( H_\pi(\rho_n)+ {H_\pi(\rho_n)}^\frac{1}{2}\right)\]
    \end{corollary}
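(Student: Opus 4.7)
The corollary splits into a total variation bound and a $W_{1}$ bound, both of which follow from the KL estimate of \cref{theoKL1} combined with standard transport-type inequalities.

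For the total variation bound, the plan is to apply Pinsker's inequality $\tvnorm{\mu - \nu} \leq \sqrt{H_\nu(\mu)/2}$ to $\mu = \mathcal{L}(\tn) = \rho_n$ and $\nu = \pi$, and then substitute the KL estimate
\[
H_\pi(\rho_n) \leq \left(1-\tfrac{\dot c_0}{2}\lambda^{3/2}\right)^n H_\pi(\rho_0) + \left(1+4c_1/\dot c_0\right)\sqrt{\lambda}
\]
from \cref{theoKL1}. The elementary inequality $\sqrt{a+b} \leq \sqrt{a}+\sqrt{b}$ then separates the contribution of the contraction and of the bias, yielding the factor $(1 - \tfrac{\dot c_0}{2}\lambda^{3/2})^{n/2}$ in front of $\sqrt{H_\pi(\rho_0)}$, the $\lambda^{1/4}$ bias term, and the common prefactor $\sqrt{2}/2$. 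This step is purely mechanical and presents no real difficulty.

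For the $W_1$ bound, the plan is to invoke a weighted Csisz\'ar--Kullback--Pinsker inequality of Bolley--Villani type: whenever $\nu$ admits a finite exponential moment $\int e^{\alpha |x|} \, d\nu < \infty$ for some $\alpha>0$, one has
\[
W_1(\mu,\nu) \leq C_{\alpha,\nu}\bigl[\sqrt{H_\nu(\mu)} + H_\nu(\mu)\bigr]
\]
with $C_{\alpha,\nu}$ an explicit function of $\alpha$ and of $\log\int e^{\alpha|x|}\,d\nu$. Applying this with $\nu=\pi$ and $\mu=\rho_n$, and absorbing the resulting constant into $C_W$, delivers exactly the stated bound.

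The only (mild) obstacle lies in ensuring that $\pi$ has the requisite exponential moment, so that $C_W$ is finite and controllable. Two complementary routes are available: either invoke the Aida--Masuda--Shigekawa result that \eqref{eq:PI} implies $\int e^{\alpha|x|}\,d\pi<\infty$ for every $\alpha<\sqrt{\CPI}$, with the integral bounded explicitly in terms of $\CPI$; or exploit the weak-dissipativity assumption \eqref{eq:wd} directly on the Gibbs density $\pi \propto e^{-u}$, which yields a tail estimate $u(x) \gtrsim A|x|^{a}/a - b$ and thus (since $a\geq 1$) dominates $\alpha|x|$ outside a compact set. Either route produces a constant $C_W$ that depends at most polynomially on the dimension $d$ and is independent of $\lambda$, which closes the proof.
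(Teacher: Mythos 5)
Your proposal matches the paper's own proof: the TV bound is obtained exactly as you describe, by Pinsker's inequality applied to the KL estimate of \cref{theoKL1} together with $\sqrt{a+b}\leq\sqrt{a}+\sqrt{b}$, and the $W_1$ bound comes from the Bolley--Villani weighted Csisz\'ar--Kullback--Pinsker inequality (their Corollary 2.3). The only minor difference is where the exponential-moment constant comes from: the paper defines $C_W=\frac{2}{\mu}\left(\frac{3}{2}+\log\E e^{\mu|\tn|}\right)$ using the uniform-in-time exponential moment bound for the iterates (\cref{expmom1}), whereas you bound the exponential moment of $\pi$ itself via \eqref{eq:PI} or \eqref{eq:wd} --- which is in fact what the Bolley--Villani constant attached to $H_\pi(\rho_n)$ formally requires, since $\pi$ is the reference measure there --- so your variant is, if anything, the more literally correct invocation.
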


\subsection{Proving convergence without \cref{ass11} using a regularized potential}
In the case where \lref{ass11} is missing, we are going to sample from a regularized potential which is close to the original target. The new regularized potential will inherit the important Local Lipschitzness, growth and dissipativity properties of the original and it will also satisfy \lref{ass11} with constant depending on $\lambda$.

We first state some important properties of the new potential which are related to the properties of the original target.

\subsubsection{Properties of the regularized potential}
\begin{lemma}\label{reg-properties}
For $\lambda\leq 1$,
    the regularized potential $ \ur$ satisfies \cref{ass-pol lip} with constants independent of $\lambda.$
    As a result, the tamed scheme inherits the $a$-dissipitivity condition, and has at most linear growth.
\end{lemma}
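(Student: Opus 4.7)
The proof splits into two parts: first verifying that $\ur$ satisfies \eqref{eq:Lips} with $\lambda$-independent constants, then using this structural property to inherit the conclusions of \cref{alg-diss} for $\htl$.

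For the first part, direct differentiation gives $\nabla \ur(x) = \nabla u(x) + \lambda(2r+2)|x|^{2r}x$, so I would split
\[\nabla \ur(x) - \nabla \ur(y) = [\nabla u(x) - \nabla u(y)] + \lambda(2r+2)\bigl[|x|^{2r}x - |y|^{2r}y\bigr].\]
The first bracket is controlled by \cref{ass-pol lip} applied to $\nabla u$. For the second bracket, the map $g(z) = |z|^{2r}z$ has Jacobian $|z|^{2r}I + 2r|z|^{2r-2}zz^\top$ of operator norm at most $(2r+1)|z|^{2r}$, so the mean value inequality gives $||x|^{2r}x - |y|^{2r}y| \leq (2r+1)(1+|x|+|y|)^{2r}|x-y|$. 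Using $\lambda \leq 1$ to absorb the $\lambda$ factor, one concludes that $\nabla \ur$ satisfies \eqref{eq:Lips} with constant $L'' = L' + (2r+2)(2r+1)$ and exponent $l'' = \max\{l', 2r\}$, both independent of $\lambda$.

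For the second part, the tamed drift $\htl$ decomposes exactly as in \eqref{eq:pot-tamed}, with $2\ell$ replaced by $2r+1$ and $\nabla u$ replaced by $\nabla \ur$. Since $1 \leq a \leq 2$, the first summand $Ax/(1+|x|^2)^{1-a/2}$ is bounded in norm by $A(1+|x|)$. For the second summand, \cref{ass-derivbound} on $\nabla u$ combined with $|\lambda(2r+2)|x|^{2r}x| \leq (2r+2)|x|^{2r+1}$ (using $\lambda \leq 1$) yields a numerator bounded by $C(1+|x|^{\max\{2l,2r+1\}})$; provided $2r+1 \geq 2l$ (ensured by the choice of $r$), the denominator $1+\sqrt{\lambda}|x|^{2r+1}$ controls the entire growth and gives an overall bound of order $\bigoh(1/\sqrt{\lambda}) + C|x|$, i.e.\ at most linear growth. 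The $a$-dissipativity follows by splitting $\langle \htl(x), x\rangle$: the first summand contributes $A|x|^2/(1+|x|^2)^{1-a/2} \geq \tfrac{A}{2}|x|^a - \mathrm{const}$, while the second inherits non-negativity (up to a constant) from $\langle \nabla \ur(x), x\rangle \geq A|x|^a - b$, which is immediate since the added regularizer satisfies $\langle \lambda(2r+2)|x|^{2r}x, x\rangle \geq 0$. Replaying the bookkeeping of \cref{alg-diss} with $2\ell \mapsto 2r+1$ and $h \mapsto \nabla \ur$ yields $\langle \htl(x), x\rangle \geq A_1|x|^a - B_1$ with $\lambda$-independent $A_1, B_1$.

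The main technical obstacle is the exponent bookkeeping in the second summand of $\htl$: one must verify that the taming exponent $2r+1$ dominates the growth of the entire numerator (gradient plus regularizer), which is precisely where the hypotheses $r > l/2$ and $r(2+l')/[(r+1)(2r-l')] < 1$ play a role. Aside from this mild combinatorics on the exponents, the rest is a mechanical adaptation of the proof of \cref{alg-diss}, the only substantive new ingredient being the mean value estimate for $z \mapsto |z|^{2r}z$ used to absorb the regularization term into the polynomial-Lipschitz bound.
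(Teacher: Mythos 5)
Your proof is correct and follows essentially the same route as the paper's: a Jacobian/mean-value bound on the gradient of the regularizer $\lambda\abs{x}^{2r+2}$ yields \eqref{eq:Lips} with $\lambda$-independent constants (using $\lambda\leq 1$), dissipativity is inherited because $\langle \lambda(2r+2)\abs{x}^{2r}x,\,x\rangle \geq 0$, and the at-most-linear growth of the tamed drift comes from the taming denominator $1+\sqrt{\lambda}\abs{x}^{2r+1}$, exactly as in the paper. The only differences are cosmetic: you make explicit the replay of \cref{alg-diss} for the tamed scheme's dissipativity (which the paper leaves implicit), and the exponent domination in the numerator is more accurately credited to $2r>l'$, implicit in the second condition on $r$, rather than to $r>l/2$ alone \textemdash\ a bookkeeping point the paper's own proof also glosses over.
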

\begin{lemma}\label{lemma-pol-mom2}
    Let $p>1$. There exist, $C_{reg,2p}\leq \mathcal{O}(d^p)$ such that \[\sup_n \E |x^\lambda_n|^{2p}\leq C_{reg,2p}\] 
\end{lemma}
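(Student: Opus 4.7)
The plan is to mirror the proof of Lemma~\ref{lemma-algmom} for the non-regularized scheme \eqref{eq-wdTULA}, now applied to the regularized iteration \eqref{eq-tamedregalg}. Lemma~\ref{reg-properties} supplies the essential structural input: the tamed drift $h_{r,\lambda}$ inherits, with constants independent of $\lambda$, both the $a$-dissipativity $\langle h_{r,\lambda}(x),x\rangle \geq A_1 \abs{x}^a - B_1$ and an at-most-linear growth bound $\abs{h_{r,\lambda}(x)}^2 \leq \tilde A\,\abs{x}^a + \tilde L^2/\lambda + \tilde C$, in direct analogy with Lemma~\ref{alg-diss}. These two ingredients drive the entire moment analysis, and their $\lambda$-independence is precisely what prevents the bound from blowing up as $\lambda \to 0$.

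I would carry out the argument in three steps. First, expand $\abs{\bar x_{n+1}^\lambda}^{2p}$ from the update $\bar x_{n+1}^\lambda = \bar x_n^\lambda - \lambda h_{r,\lambda}(\bar x_n^\lambda) + \sqrt{2\lambda}\,\Xi_{n+1}$ and take conditional expectation given $\bar x_n^\lambda$; independence and zero mean of $\Xi_{n+1}$ eliminate the odd-power cross terms, while the even-power Gaussian moments contribute additive terms of order $\sum_{j=1}^{p} (\lambda d)^j \abs{\bar x_n^\lambda}^{2(p-j)}$. Second, use dissipativity to extract a contraction of order $-\lambda\abs{\bar x_n^\lambda}^{a+2(p-1)}$ (coming from the $-2\lambda\langle \bar x_n^\lambda, h_{r,\lambda}(\bar x_n^\lambda)\rangle$ factor combined with the $\abs{\bar x_n^\lambda}^{2(p-1)}$ terms from the binomial expansion), and the linear-growth bound to absorb the remaining $\lambda^2$-order error. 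For $\lambda$ sufficiently small this yields a schematic recursion
\[
\E\bracks[\big]{\abs{\bar x_{n+1}^\lambda}^{2p} \mid \bar x_n^\lambda}
\leq \abs{\bar x_n^\lambda}^{2p}
- c\lambda\, \abs{\bar x_n^\lambda}^{a+2(p-1)}
+ C_p\, \lambda d^p.
\]
Third, iterate in $n$ to obtain the uniform-in-$n$ bound, tracking the $\mathcal{O}(d^p)$ dimension dependence directly from the Gaussian-moment contribution.

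The main obstacle is the weak-dissipativity regime $a < 2$: the contraction term $\abs{x}^{a+2(p-1)}$ is strictly weaker than $\abs{x}^{2p}$, so the recursion does not close by a direct comparison of powers. The standard remedy is a state-space split: outside a ball of radius $M$ chosen large enough that $A_1\abs{x}^a \geq 2B_1$ and the growth-induced $\lambda^2$ error is absorbed, one has $\abs{x}^{a+2(p-1)} \geq M^{a-2}\abs{x}^{2p}$, producing a genuine geometric contraction of the form $(1 - c'\lambda)\E\abs{\bar x_n^\lambda}^{2p}$ in that region; inside the ball, the trivial bound $\abs{x}^{2p} \leq M^{2p}$ yields only an additive constant. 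Since $M$ may be chosen independently of $d$, the polynomial dimension dependence $C_{\reg,2p} = \mathcal{O}(d^p)$ is preserved. Attempting instead to extract the polynomial bound from an exponential-moment analogue of Lemma~\ref{expmom1} applied to $h_{r,\lambda}$ would yield only an $e^{\mu d}$ scaling, so the direct recursive argument is essential for matching the claimed polynomial dependence on the dimension.
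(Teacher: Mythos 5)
Your proposal takes a genuinely different route from the paper, and as written it has a gap in the quantitative claim. The paper proves this lemma by repeating the unregularized argument verbatim: first uniform-in-$n$ \emph{exponential} moments for \eqref{eq-tamedregalg} via the conditional-Gaussian/Herbst argument of \cref{expmom1} (which only uses the $a$-dissipativity and the growth bound that \cref{reg-properties} shows are inherited with $\lambda$-independent constants), and then polynomial moments via the Jensen step of \cref{lemma-algmom}, i.e.\ applying the concave map $x\mapsto(\ln x)^{2p}$ to $e^{\mu(1+|\cdot|^2)^{1/2}+2p}$. The logarithm is exactly what converts the $\mathcal{O}(e^{\mu d})$ exponential-moment constant into a polynomial-in-$d$ bound, so your closing assertion that the exponential-moment route ``would yield only an $e^{\mu d}$ scaling'' and that a direct recursion is ``essential'' misreads the mechanism the paper actually uses.

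The concrete gap in your own argument is the claim that the splitting radius $M$ can be chosen independently of $d$. After conditioning, the Gaussian increment contributes a term of order $2p\lambda d\,|\bar x_n^\lambda|^{2(p-1)}$ (this is the same $2\lambda d$ that forces the choice $M=(2(2d+4A^2+2L^2+A))^{1/a}$ in \cref{expmom1}); since under weak dissipativity the only negative term is $-c\lambda|\bar x_n^\lambda|^{a+2(p-1)}$, you need $M^a\gtrsim d$, and the contraction rate you extract outside the ball is $c'\sim M^{a-2}$, which degrades with $d$ when $a<2$. Tracking the resulting uniform drift inequality $u_{n+1}\leq(1-c'\lambda)u_n+\lambda\bigl(c'M^{2p}+C\,d\,M^{2p-2}+\cdots\bigr)$ gives a stationary level of order $M^{2p}\sim d^{2p/a}$, i.e.\ $d^{2p}$ in the headline case $a=1$ --- strictly worse than the claimed $\mathcal{O}(d^{p})$. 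So your sketch does establish $\sup_n\E|\bar x_n^\lambda|^{2p}<\infty$ for small $\lambda$ with constants polynomial in $d$, but not the stated dimension dependence, which is the substantive content of the lemma. A minor additional slip: since $|h_{r,\lambda}(x)|^2\lesssim|x|^a+L^2/\lambda$, the term $\lambda^2|h_{r,\lambda}(x)|^2$ produces an order-$\lambda$ (not order-$\lambda^2$) additive error; this is absorbable by the same ball-splitting, but it is not the ``$\lambda^2$-order error'' you describe.
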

Since the regularized potential satisfies potential satisfies \cref{asm:drift} with different constants independent of $\lambda$ and the tamed scheme inherits the dissipativity condition, the proof follows in the same way as in the unregularized case.
\begin{lemma}\label{lemma poincarereg}
Let $\lambda\leq \frac{ln2}{2R_2^{2r+2}}.$ and $R_2\leq \mathcal{O}(d)$.
    The measure $\pr$ satisfies a Poincare inequality with constant $C_{P,r}^{-1}$ independent of $\lambda$.
\end{lemma}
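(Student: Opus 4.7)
The plan is to apply the Lyapunov / bounded-perturbation scheme of \citet{cattiaux2013poincare}: construct a Lyapunov function $W$ for the generator $L_{\reg}$ associated with $\ur$, obtain a local Poincar\'e inequality on a well-chosen ball by treating $\lambda|x|^{2r+2}$ as a \emph{bounded} perturbation of $u$, and then glue the two estimates together. To build the Lyapunov function, note that by \cref{reg-properties} the potential $\ur = u + \lambda|x|^{2r+2}$ still satisfies \cref{asm:drift}, and its dissipativity is sharpened by the regulariser:
\[
\braket{\nabla \ur(x)}{x} \geq A|x|^{a} - b + 2(r+1)\lambda|x|^{2r+2}.
\]
Taking $W(x) = e^{\alpha(1+|x|^{2})^{a/2}}$ for a sufficiently small $\alpha > 0$, a direct computation of $L_{\reg}W = \Delta W - \braket{\nabla \ur}{\nabla W}$ produces
\[
L_{\reg}W(x) \leq -\theta\, W(x) + c\, \mathbf{1}_{B(0,R_2)}(x),
\]
where $\theta, c > 0$ and the radius $R_2$ is chosen (of order $\mathcal{O}(d)$) so that outside $B(0,R_2)$ the dissipative drift dominates the Laplacian contribution. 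Since the $\lambda$-term only helps here, $R_2, \theta, c$ may be chosen \emph{independently} of $\lambda$.

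Next I would establish a local Poincar\'e inequality on $B(0,R_2)$. The stepsize assumption forces $\lambda|x|^{2r+2}\leq (\ln 2)/2$ for every $x \in B(0,R_2)$, so the density ratio $d\pr/d\pi \propto e^{-\lambda|x|^{2r+2}}$ has oscillation bounded by a universal constant on this ball. By the Holley--Stroock perturbation principle, $\pr|_{B(0,R_2)}$ inherits a Poincar\'e inequality from $\pi|_{B(0,R_2)}$ with a constant that degrades by at most a factor of two; and $\pi|_{B(0,R_2)}$ itself satisfies a local Poincar\'e inequality with constant $C_{\mathrm{loc}}$ depending only on $\CPI$ and $R_2$ (and hence only on $d$). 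Consequently the local constant for $\pr$ is also $\lambda$-free.

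The final step plugs the Lyapunov bound and the local Poincar\'e into a variant of the argument of \citet[Proof of Thm.~2.3]{cattiaux2013poincare}, yielding a global Poincar\'e inequality for $\pr$ with constant of the form
\[
C_{P,r}^{-1} = \tfrac{1}{\theta} + \tfrac{c}{\theta}\, C_{\mathrm{loc}}',
\]
in which every ingredient is independent of $\lambda$. This is exactly the claim.

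The main obstacle is balancing the two sides of the decomposition. Making $R_2$ too large inflates the Holley--Stroock oscillation on $B(0,R_2)$ exponentially in $\lambda R_2^{2r+2}$, while making it too small lets the Laplacian $\Delta W \sim \alpha d\, W$ overpower the dissipative drift on $\R^d \setminus B(0,R_2)$. The stepsize restriction $\lambda \leq (\ln 2)/(2R_2^{2r+2})$ is precisely what keeps both effects under simultaneous control; the technical heart of the proof is verifying that the optimal $R_2$ produced by this balancing grows only polynomially in $d$ (rather than polynomially in $1/\lambda$), which is what ultimately delivers a $C_{P,r}$ that is independent of the stepsize.
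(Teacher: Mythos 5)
Your proposal is correct and follows essentially the same route as the paper: a Lyapunov drift condition for the generator of the regularized potential outside a ball of radius $R_2 = \mathcal{O}(d)$ (with $\theta$, $R_2$ independent of $\lambda$, since the regularizer only improves dissipativity), a Holley--Stroock bounded-perturbation step on the ball using $\lambda R_2^{2r+2} \leq \tfrac{\ln 2}{2}$, and the gluing argument of \citet[Proof of Theorem~2.3]{cattiaux2013poincare}. The only cosmetic differences are your choice of Lyapunov function $e^{\alpha(1+|x|^2)^{a/2}}$ (the paper uses $e^{A|x|/4}$ outside a ball) and stating the drift condition as $LW \leq -\theta W + c\,\mathbf{1}_{B}$ rather than $LW \leq -\theta W$ off the ball, which do not change the argument.
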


\begin{proposition}\label{LSI reg}
    The regularized measure satisfies LSI with constant $C_{LSI}^{-1}\leq \mathcal{O}\left((\frac{1}{\lambda})^{\frac{1}{r+1}+\frac{l'}{2r-l'}}\right)$
\end{proposition}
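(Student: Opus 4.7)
The plan is to combine three ingredients: the Poincaré inequality for $\pr$ from \cref{lemma poincarereg}, a semi-convexity bound $\nabla^2 \ur \succeq -K_\lambda I$ for a $\lambda$-dependent modulus $K_\lambda$, and the HWI inequality \eqref{eq:HWI}. Conceptually, the regularization $\lambda|x|^{2r+2}$ furnishes convexity at infinity, which, paired with the $\lambda$-independent Poincaré constant $\CPI$ from \cref{lemma poincarereg}, upgrades \eqref{eq:PI} to \eqref{eq:LSI} via HWI.

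First I would compute the weak-convexity modulus of $\ur$. Writing $\nabla^2 \ur(x) = \nabla^2 u(x) + \lambda \nabla^2(|x|^{2r+2})$ and using \cref{ass-pol lip} to bound $\nabla^2 u \succeq -L'(1+|x|)^{l'} I$ together with $\nabla^2(|x|^{2r+2}) \succeq (2r+2)|x|^{2r} I$, one obtains $\nabla^2 \ur(x) \succeq \phi(|x|) I$ with $\phi(s) = -L'(1+s)^{l'} + (2r+2)\lambda s^{2r}$. Minimizing $\phi$ over $s \geq 0$ yields the stationary point $s_\star \sim (1/\lambda)^{1/(2r-l')}$ at which $\phi(s_\star) = -\Theta((1/\lambda)^{l'/(2r-l')})$; hence $K_\lambda = \mathcal{O}((1/\lambda)^{l'/(2r-l')})$. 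Note that $r > l'/2$ (ensured by our choice of $r$) is precisely what keeps this exponent positive and finite.

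Next, I would invoke \eqref{eq:HWI}: for any $\nu \ll \pr$ with finite second moment,
\begin{equation*}
H_{\pr}(\nu) \leq W_2(\nu,\pr)\sqrt{I_{\pr}(\nu)} + \frac{K_\lambda}{2}\,W_2^2(\nu,\pr).
\end{equation*}
The crucial step is then to convert \eqref{eq:PI} for $\pr$ into a quantitative bound of $W_2^2(\nu,\pr)$ by $H_{\pr}(\nu)$. To this end, observe that the tails of $\pr$ are controlled by $e^{-\lambda|x|^{2r+2}}$, which gives $\int |x|^2 \dd\pr = \mathcal{O}((1/\lambda)^{1/(r+1)})$ up to polynomial factors in $d$. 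Combining this moment bound with the (fixed) Poincaré constant $\CPI$ from \cref{lemma poincarereg} via a Bolley--Villani / Kantorovich-duality argument yields a transport inequality of the form
\begin{equation*}
W_2^2(\nu,\pr) \leq c_\star (1/\lambda)^{1/(r+1)} \bigl(H_{\pr}(\nu) + \sqrt{H_{\pr}(\nu)}\bigr),
\end{equation*}
with $c_\star$ depending only on $\CPI$ and polynomially on $d$.

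Substituting this transport inequality back into HWI and applying Young's inequality $W_2\sqrt{I} \leq \tfrac{\eta}{2} I_{\pr}(\nu) + \tfrac{1}{2\eta}W_2^2(\nu,\pr)$ with $\eta$ tuned so that the $\tfrac{1}{2\eta}$ term and the $K_\lambda/2$ term balance against the $W_2^2$-to-$H$ conversion factor $(1/\lambda)^{1/(r+1)}$, we isolate $H_{\pr}(\nu) \leq \CLSI^{-1} I_{\pr}(\nu)$ with $\CLSI^{-1} = \mathcal{O}\bigl(K_\lambda \cdot (1/\lambda)^{1/(r+1)}\bigr) = \mathcal{O}((1/\lambda)^{1/(r+1) + l'/(2r-l')})$, which is exactly the announced rate. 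The hardest part is the transport bound: a naïve route through the Otto--Villani theorem would presuppose \eqref{eq:LSI} and lead to circularity, so one must work directly with \eqref{eq:PI} plus the super-Gaussian tails of $\pr$, and track the resulting constant carefully enough to keep the $\lambda$ dependence polynomial (as opposed to the catastrophic exponential dependence one would obtain by a brute-force application of Holley--Stroock or a convexity-at-infinity criterion).
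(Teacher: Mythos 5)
Your computation of $K_\lambda=\mathcal{O}\bigl((1/\lambda)^{l'/(2r-l')}\bigr)$ and your heuristic for the final constant match the paper, but the pivotal middle step does not close. After the Bolley--Villani-type bound $W_2^2(\nu,\pr)\le c_\star(1/\lambda)^{1/(r+1)}\bigl(H_{\pr}(\nu)+\sqrt{H_{\pr}(\nu)}\bigr)$ (which, incidentally, uses only the square-exponential moments of $\pr$ and not the Poincar\'e constant of Lemma~\ref{lemma poincarereg}), you substitute into \eqref{eq:HWI} and must absorb $\tfrac{K_\lambda}{2}W_2^2\le\tfrac{K_\lambda c_\star}{2}(1/\lambda)^{1/(r+1)}\bigl(H_{\pr}(\nu)+\sqrt{H_{\pr}(\nu)}\bigr)$ into the left-hand side. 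The coefficient multiplying $H_{\pr}(\nu)$ is of order $K_\lambda(1/\lambda)^{1/(r+1)}\gg1$ for small $\lambda$ \textendash\ it is precisely the quantity you are aiming for as $\CLSI^{-1}$ \textendash\ so the resulting inequality has the form $H\le\ldots+CH$ with $C>1$ and is vacuous; no tuning of the Young parameter $\eta$ helps, since Young's inequality only acts on the cross term $W_2\sqrt{I}$, not on $\tfrac{K_\lambda}{2}W_2^2$. Even if the coefficient were small, the $\sqrt{H}$ term would only produce a \emph{defective} LSI, which would still need tightening via Rothaus' lemma and \eqref{eq:PI}, a step absent from your outline.

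The repair \textendash\ and what the paper actually does \textendash\ is to bound $W_2^2(\nu,\pr)$ by the \emph{Fisher information} rather than by the entropy. The $2$-dissipativity of $\nabla\ur$ with constant $A_{\mathrm{reg}}=\lambda^{1/(r+1)}$ yields the Gaussian Lyapunov function $W=e^{A_{\mathrm{reg}}|x|^2/4}$ with $LW\le\bigl(\tfrac{A_{\mathrm{reg}}d}{2}+\tfrac{A_{\mathrm{reg}}^2d}{4}-\tfrac{A_{\mathrm{reg}}}{2}|x|^2\bigr)W$, and the identity $\int\tfrac{-LW}{W}f^2\,d\pr\le\int\Gamma(f,f)\,d\pr$ controls the second moment of $\nu$, hence $W_2^2(\nu,\pr)$, by $\mathcal{O}(A_{\mathrm{reg}}^{-1})\,I_{\pr}(\nu)+\mathrm{const}$. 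Feeding this into \eqref{eq:HWI} together with the Hessian lower bound $-K_\lambda$ from Lemma~\ref{lemma-reg} gives a defective LSI with leading constant of order $K_\lambda/A_{\mathrm{reg}}=(1/\lambda)^{1/(r+1)+l'/(2r-l')}$, which the $\lambda$-independent Poincar\'e constant of Lemma~\ref{lemma poincarereg} then tightens; the paper invokes Theorem~3.15 of \citet{menz2014poincare} (a Cattiaux--Guillin--Wu-type criterion) to package exactly this Lyapunov + Poincar\'e + semiconvexity argument into the explicit bound \eqref{eq-LSIreg}. Your estimates for $K_\lambda$ and for the moments slot directly into that scheme, but as written your argument does not yield the proposition.
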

To prove this theorem we use the fact that the regularized measure satisfies a Poincare inequality. By proving that it also satisfies a 2-dissipativity condition and has a lower bounded Hessian, by using some classic theorems depending on Lyapunov functions we show the Poincare inequality can be upgraded to a Log-Sobolev.

\begin{theorem}\label{theo2}
Let $\rd$ be the distribution of the $n$-th iterate of the tamed algorithn with the regularized gradient \eqref{eq-tamedregalg}.  There holds
\[H_\pi(\rd)\leq H_{\pr}(\rd) + \mathcal{O}(\lambda)\leq e^{-\dot{c} \lambda (n-1) } H_{\pr}(\rho_0) + \frac{ \hat{C}}{\Dot{c}}\lambda\] where $\dot{c}=C_{LS}$ given in Lemma \ref{LSI reg} and $\hat{C}$ depends polynomially on the dimension.
\end{theorem}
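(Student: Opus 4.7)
The plan is to split the statement into two independent pieces: a comparison of $H_\pi(\rd)$ with $H_{\pr}(\rd)$ up to an $\mathcal{O}(\lambda)$ correction, and the LSI-based convergence analysis (as in the proof of \cref{thm:wd-TULA-LSI}) applied to the regularized sampling target $\pr$ rather than to $\pi$ itself.

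For the first piece, I would expand
\[ H_\pi(\rd) - H_{\pr}(\rd) = \int \log\!\Bigl(\tfrac{d\pr}{d\pi}\Bigr) \,d\rd = \log\!\bigl(Z_\pi / Z_{\pr}\bigr) \;-\; \lambda \int |x|^{2r+2} \,d\rd(x), \]
where $Z_\pi, Z_{\pr}$ are the normalizing constants. The second term is nonpositive and may be discarded. For the first, Jensen's inequality applied to the convex function $e^{-t}$ gives $\int e^{-\lambda|x|^{2r+2}} \,d\pi \geq \exp\!\bigl(-\lambda \int |x|^{2r+2} \,d\pi\bigr)$, so $\log(Z_\pi/Z_{\pr}) \leq \lambda \int |x|^{2r+2}\,d\pi$. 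Since $\pi$ satisfies \eqref{eq:PI} together with \eqref{eq:wd}, it admits exponential moments and hence polynomial moments of every order with constants polynomial in $d$, yielding the claimed bound $H_\pi(\rd) \leq H_{\pr}(\rd) + \mathcal{O}(\lambda)$.

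For the second piece, I would rerun the machinery of \cref{app:main steps} with $\pr$ in place of $\pi$ throughout. By \cref{reg-properties}, the regularized drift $\nabla u_{r,\lambda}$ satisfies \cref{ass-pol lip,asm:wd,ass-derivbound} with constants independent of $\lambda$, so the tamed drift $h_{r,\lambda}$ inherits linear growth and $a$-dissipativity; this produces the analogues of \cref{lemma-pol-mom2,lemma-onestep,tamingerror} with constants polynomial in $d$ and $\lambda$-free. Feeding \cref{LSI reg} into the template inequality of \cref{Interpolation ineq} at the target $\pr$ closes the differential inequality
\[ \tfrac{d}{dt} H_{\pr}(\pt) \leq -\tfrac{3}{2}\, C_{LS}\, H_{\pr}(\pt) + \hat{C}\,\lambda , \]
exactly as in the proof of \cref{thm:wd-TULA-LSI}. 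Grönwall's lemma across a single step followed by induction on $n$ then gives $H_{\pr}(\rd) \leq e^{-\tfrac{3}{2}C_{LS}\lambda(n-1)} H_{\pr}(\rho_0) + \tfrac{2\hat C}{3 C_{LS}}\lambda$, and chaining this with the first piece recovers the statement (absorbing the factor $3/2$ into $\dot c$).

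The main obstacle is the bookkeeping that keeps $\hat{C}$ polynomial in $d$ while the LSI constant $C_{LS}$ is allowed to vanish polynomially in $\lambda$ (as quantified in \cref{LSI reg}). Because $C_{LS}\lambda$ is only $\lambda^{1+1/(r+1)+l'/(2r-l')}$, any hidden exponential-in-$d$ factor in the one-step error, in the Jacobian growth of $h_{r,\lambda}$, or in the moments of $\pr$ used in the template inequality, would destroy the rate. The delicate point is therefore to verify that $\ex\, |h_{r,\lambda}(\bar x^\lambda_k) - \nabla u_{r,\lambda}(\bar x^\lambda_k)|^2$ is $\mathcal{O}(\lambda)$ with a dimension-polynomial constant; this uses in an essential way the specific rescaling factor $1 + \sqrt{\lambda}\,|x|^{2r+1}$ together with the moment bounds of \cref{lemma-pol-mom2}, and it is also where the constraint $r > l/2$ on the regularization exponent enters, ensuring that the taming denominator dominates the superlinear part of the drift uniformly in the relevant polynomial moments.
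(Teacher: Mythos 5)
Your proposal is correct and follows essentially the same route as the paper: the LSI of \cref{LSI reg} fed into the template inequality for the target $\pr$, with the regularized one-step and taming error lemmas, Grönwall and iteration giving the geometric decay, and then a comparison $H_\pi(\rd)\leq H_{\pr}(\rd)+\mathcal{O}(\lambda)$ via the explicit form of $\log(d\pr/d\pi)$ together with the dimension-polynomial moment bounds of \cref{inv-mom bound}. The only (harmless) deviation is in the comparison step, where you control the normalizing constants by Jensen's inequality and drop the nonpositive $-\lambda\int|x|^{2r+2}d\rd$ term, whereas the paper splits $\int\log\frac{\pr}{\pi}\,d\rd$ over $d(\rd-\pi)$ and discards $-H_{\pr}(\pi)$, so your bound needs only $\E_\pi[|x|^{2r+2}]$ rather than both $\E_{\rd}$ and $\E_\pi$ moments.
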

By using the same arguments as in the unregularized case one reaches a differential inequality. This time we make use of the Log-Sobolev inequality to get a differential inequality for $H_{\pr}(\rho^{reg}_n).$ Then, the proof of the connection between $H_\pi(\rd)$ and $H_\pr(\rd)$ is a simple application of the definition of the relative entropy and the moments of the algorithm and the invariant measure.
\begin{corollary}\label{cor-22}
Let $c_{l,r}:= \frac{r(2+l)}{(r+1)(2r-l)}$
    For $\lambda<\mathcal{O}\left({\epsilon}^\frac{1}{1-c_{l,r}}\right),$ there holds \[H_{\pr}(\rho_n)\leq \epsilon \quad \text{after} \quad  n\geq \mathcal{O}\left(\log(\frac{1}{\epsilon}) (\frac{1}{\epsilon})^{\frac{1+c_{l,r}}{1-c_{l,r}}}\right)\quad \text{iterations}\]
    and \[||\mathcal{L}(\bar{x}^\lambda_n)-\pi||_{TV}\leq \epsilon \quad n \geq \mathcal{O}\left(\log(\frac{1}{\epsilon}^2) (\frac{1}{\epsilon^2})^{\frac{1+c_{l,r}}{1-c_{l,r}}}\right)\quad \text{iterations}\]
    In addition, for $\lambda\leq \mathcal{O}\left({\epsilon}^\frac{2+c_{r,l}}{1-c_{l,r}}\right)$ there holds
    \[W_2(\mathcal{L}(\bar{x}^\lambda_n),\pi)\leq \epsilon \quad \text{after} \quad  n\geq \mathcal{O}\left(\log(\frac{1}{\epsilon}) (\frac{1}{\epsilon})^{(2+c_{r,l})\frac{1+c_{l,r}}{1-c_{l,r}}}\right) \quad \text{iterations}.\]
\end{corollary}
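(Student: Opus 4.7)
The plan is to derive all three bounds from Theorem \ref{theo2} together with the quantitative LSI estimate of Proposition \ref{LSI reg}. Substituting $C_{LSI}^{-1} = \mathcal{O}(\lambda^{-c_{l,r}})$ into Theorem \ref{theo2} yields
\[
H_{\pr}(\rd) \leq e^{-c\lambda^{1+c_{l,r}}(n-1)} H_{\pr}(\rho_0) + \hat{C}' \lambda^{1-c_{l,r}},
\]
for suitable $c$ and $\hat{C}'$ depending polynomially on the dimension $d$. To achieve $H_{\pr}(\rd)\leq\epsilon$, I would balance the two summands: the stationary term forces $\lambda \leq \mathcal{O}(\epsilon^{1/(1-c_{l,r})})$, and the exponential term then demands $n\geq\log(2H_{\pr}(\rho_0)/\epsilon)/(c\lambda^{1+c_{l,r}}) = \tilde{\Theta}(\epsilon^{-(1+c_{l,r})/(1-c_{l,r})})$, giving the first statement.

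For the total variation bound, the plan is to invoke Pinsker's inequality together with the relation $H_\pi(\rd) \leq H_{\pr}(\rd) + \mathcal{O}(\lambda)$ from Theorem \ref{theo2}, giving
\[
\tvnorm{\mathcal{L}(\bar{x}^\lambda_n) - \pi}^{2} \leq \tfrac{1}{2}\bigl[H_{\pr}(\rd) + C\lambda\bigr].
\]
Enforcing each summand on the right to be $\leq \epsilon^{2}$ requires $\lambda \leq \mathcal{O}(\epsilon^{2/(1-c_{l,r})})$ (which automatically satisfies the side constraint $\lambda \leq \epsilon^{2}$ since $c_{l,r} > 0$), and then $n \geq \log(1/\epsilon^2)/(c\lambda^{1+c_{l,r}}) = \tilde{\Theta}(\epsilon^{-2(1+c_{l,r})/(1-c_{l,r})})$, matching the stated rate.

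For the Wasserstein bound, I would split via the triangle inequality $W_2(\rd,\pi) \leq W_2(\rd,\pr) + W_2(\pr,\pi)$ and apply Talagrand's inequality for $\pr$, which satisfies LSI by Proposition \ref{LSI reg}, to both terms. Concretely, $W_{2}^{2}(\rd,\pr) \leq 2 H_{\pr}(\rd)/C_{LSI}$ yields an error of order $\lambda^{(1-2c_{l,r})/2}$ in the stationary regime, while $W_2(\pr,\pi) \leq \sqrt{2 H_{\pr}(\pi)/C_{LSI}}$ is of order $\lambda^{(1-c_{l,r})/2}$ once one notes that, by the explicit expression $d\pi/d\pr \propto e^{\lambda|x|^{2r+2}}$, the exponential moments of $\pi$ (themselves a consequence of the Poincaré inequality, \cf \cref{asm:target}) yield $H_{\pr}(\pi) = \mathcal{O}(\lambda)$. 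The main obstacle is combining these two contributions with the $\lambda^{c_{l,r}}$ scaling of $C_{LSI}$ so as to recover the advertised window $\lambda \leq \mathcal{O}(\epsilon^{(2+c_{l,r})/(1-c_{l,r})})$; careful bookkeeping is required to choose $\lambda$ so that both error contributions fall below $\epsilon/2$ simultaneously. Once this $\lambda$-window is fixed, the iteration bound $n = \tilde{\Theta}((1/\epsilon)^{(2+c_{l,r})(1+c_{l,r})/(1-c_{l,r})})$ follows directly from the exponential decay rate $c\lambda^{1+c_{l,r}}$.
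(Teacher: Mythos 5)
Your treatment of the \acs{KL} and \acs{TV} claims is correct and is essentially the paper's own argument: substitute the estimate $\CLSI^{-1}=\mathcal{O}(\lambda^{-c_{l,r}})$ of Proposition \ref{LSI reg} into Theorem \ref{theo2}, balance the stationary term against the exponential term, and pass to total variation via Pinsker together with $H_\pi(\rd)\leq H_{\pr}(\rd)+\mathcal{O}(\lambda)$. (One cosmetic remark: there is no side constraint ``$\lambda\leq\epsilon^2$'' in the statement, so that parenthetical is unnecessary.)

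The $W_2$ part is where the proposal has a genuine gap. Your route (triangle inequality plus Talagrand for $\pr$) is the paper's, and your variant for the bias term is legitimate: $H_{\pr}(\pi)=\log(Z_{\pr}/Z_\pi)+\lambda\E_\pi|x|^{2r+2}\leq\lambda\E_\pi|x|^{2r+2}=\mathcal{O}(\lambda)$, using only the polynomial moments of Lemma \ref{inv-mom bound} (not exponential moments, as you state), giving $W_2(\pr,\pi)=\mathcal{O}(\lambda^{(1-c_{l,r})/2})$; the paper instead uses $W_2(\pr,\pi)\leq 2\CLSI^{-1}\sqrt{I_{\pr}(\pi)}$ with $\sqrt{I_{\pr}(\pi)}\leq\lambda\sqrt{\E_\pi|x|^{4r+2}}$, which is the sharper $\mathcal{O}(\lambda^{1-c_{l,r}})$, though in both accountings the sampling term dominates. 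The problem is that you explicitly defer the decisive step \textemdash\ verifying that the advertised window $\lambda\leq\mathcal{O}(\epsilon^{(2+c_{l,r})/(1-c_{l,r})})$ drives both contributions below $\epsilon$ \textemdash\ and this step does not close with the estimates you (and the paper) have in hand: by your own computation the dominant term at stationarity is $\sqrt{2\CLSI^{-1}H_{\pr}(\rd)}=\mathcal{O}(\lambda^{(1-2c_{l,r})/2})$, which is below $\epsilon$ only when $\lambda\lesssim\epsilon^{2/(1-2c_{l,r})}$ (and in particular requires $c_{l,r}<1/2$), and for $c_{l,r}\in(0,1/2)$ one has $\tfrac{2}{1-2c_{l,r}}>\tfrac{2+c_{l,r}}{1-c_{l,r}}$, so the stated window is strictly more generous than what this estimate supports. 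In other words, the ``careful bookkeeping'' you postpone is exactly the nontrivial content of the $W_2$ claim, and carrying it out with your bounds yields a different step-size window rather than the advertised one; you would need either to make the exponent arithmetic explicit (and state the window it actually gives) or to improve the bound on the sampling term before the claimed rate can be asserted.
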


\section{Proof section}
\label{app:prelims}

\subsection{Proof of preliminary statements}
\begin{proof}[Proof of Lemma \ref{alg-diss}]
It easy to see that if $\langle f(x),x\rangle< 0$, then $\langle f_\lambda(x),x\rangle\geq \langle f(x),x\rangle$ which implies 
that \[\langle h_\lambda(x),x\rangle \geq A|x|^a -B. \]
Suppose that $\langle f(x),x\rangle\geq 0$.
Then, \[\langle \hl(x),x\rangle \geq A\frac{|x|^2}{(1+|x|^2)^{1-\frac{a}{2}}}\geq A|x|^a
\frac{|x|^2}{1+|x|^2}\]
If $|x|>1$ then \[\langle \hl(x),x\rangle \geq \frac{A}{2}|x|^a\]
and if $|x|<1$
\[\langle h_\lambda(x),x\rangle \geq  \frac{A}{2}|x|^a-\frac{A}{2}.\]
To prove the second part one notices that
\[\begin{aligned}
    |\hl(x)|^2&\leq 2(1-\frac{1}{1+\sqrt{\lambda}|x|^{2l}})^2 A^2 \frac{|x|^2}{(1+|x|^2)^{2-a}} + 2(\frac{h(x)}{1+\sqrt{\lambda}|x|^{2l}})^2
    \\&\leq 2 A^2 (1+|x|^2)^{a-1} +2\frac{L^2}{\lambda}
    \\&\leq 2A^2 (1+|x|^2)^\frac{a}{2} +2\frac{L^2}{\lambda}
    \\&\leq 4A^2 + 4A^2 |x|^{a} + 2\frac{L^2}{\lambda}
\end{aligned}  \]

\end{proof}
\subsection{Moment bounds}
\begin{proof}[Proof of Lemma \ref{expmom1}]
The proof starts by noticing that conditioned on $\tn$, $\theta^\lambda_{n+1}$ is a Gaussian, with covariance matrix $\frac{\lambda}{2} I_d$.
Thus conditioned on the previous step since the function $(1+|x|^2)^\frac{1}{2})$ is 1-Lipschitz by t Proposition 5.5.1 in \citet{bakry2014analysis} there holds
so for $\mu^2\leq \frac{2}{\lambda},$ 
\[\begin{aligned}
    \E [V_\mu(\theta^\lambda_{n+1})|\tn]&\leq e^{\mu^2 \lambda} e^{\mu \E[ (1+|\theta^\lambda_{n+1}|^2)^\frac{1}{2}|\tn]} \\&\leq e^{\mu^2 \lambda} e^{\mu \left( 1+ \E[ |\theta^\lambda_{n+1}|^2|\tn]\right)^\frac{1}{2} }
    \\&=e^{\mu^2\lambda} e^{\mu \left (1+ |\tn- \lambda\hl(\tn)|^2 +2\lambda d\right)^\frac{1}{2} }
\end{aligned} \]
where the penultimate step was obtained by Jensen's inequality.
Since  \begin{equation}\label{eq-impbound} 
\begin{aligned}
     |x-\lambda \hl(x)|^2&\leq |x|^2 -2\lambda \langle x,\hl(x)\rangle +\lambda^2|\hl(x)|^2\\&\leq |x|^2 +\lambda(4\lambda A^2-A|x|^a)  +\lambda (A+2L^2) +\lambda^2 4 A^2
\end{aligned}
\end{equation}
    Since for $\lambda\leq \max\{1,\frac{1}{2A}\},$ and $|x|\geq M:= \left(2(2d+4A^2+2L^2+A)\right)^\frac{1}{a}  $
    by \eqref{eq-impbound} one deduces
    \begin{equation}\label{eq-big}
    \begin{aligned}
          ( 1+|x-\hl(x)|^2+2\lambda d)^\frac{1}{2}&\leq (1+|x|^2-\lambda \frac{A}{4} |x|^a)^\frac{1}{2}\\&=(1+|x|^2)^\frac{1}{2} \left(1-\lambda \frac{A}{4} \frac{|x|^a}{(1+|x|^2)}\right)^\frac{1}{2}
          \\&\leq (1+|x|^2)^\frac{1}{2} \left (1- \frac{Aa}{8}\lambda\frac{|x|^a}{(1+|x|^2)}\right)\quad \text{using} \quad  (1-t)^\frac{1}{2}\leq 1-\frac{1}{2}t
          \\&= (1+|x|^2)^\frac{1}{2}-\lambda \frac{A }{8} \frac{|x|^a}{(1+|x|^2)^{1-\frac{1}{2}}}
          \\&= (1+|x|^2)^\frac{1}{2}-\lambda \frac{A }{8} \frac{|x|^a}{(1+|x|^2)^{\frac{1}{2}}}
          \\&\leq (1+|x|^2)^\frac{1}{2} -\lambda \frac{A M^a}{8(1+M^2)^{\frac{1}{2}}}
    \end{aligned}
    \end{equation}
    and the last step was deduced using that the function $g(x)=\frac{x^a}{(1+|x|^2)^{\frac{1}{2}}}$ is increasing for $a\geq 1$ and $x\geq 0$
    Using \eqref{eq-big} one deduces that if $|\tn|\geq M,$
    \[e^{\mu \left(1+|\tn-\lambda \hl(\tn)|^2+2\lambda d\right)^\frac{1}{2}}\leq V_\mu(\tn) e^{-\mu^2 \lambda}\] 
    On the other hand, if $|\tn|\leq M,$ using the inequality $(1+z+y)^\frac{1}{2}\leq {1+z}^\frac{1}{2} +\frac{ y}{2}$ one deduces
    \begin{equation}
        \begin{aligned}
           \mu \left(1+ |\tn-\lambda \hl(\tn)|^2 +2\lambda d\right)^\frac{1}{2}&=\mu\left({1+|\tn|^2 + (2|\tn| |\hl(\tn)| + \lambda^2 |\hl(\tn)|^2 +2\lambda d)}\right)^\frac{1}{2}
            \\&\leq \mu ({1+|\tn|^2})^\frac{1}{2}+ (\frac{\mu}{2} (2|\tn| \lambda|\hl(\tn)| + \lambda^2 |\hl(\tn)|^2 +2\lambda d)^\frac{1}{2}
            \\&\leq \mu({1+|\tn|^2})^\frac{1}{2} +\lambda C_M 
        \end{aligned}
    \end{equation}
    where $C_M\leq C_0+ M^{2l+1}+ M^{4l+2} +2d$ where $C_0$ is an absolute constant independent of the dimension.
    As a result, if $|\tn| \leq M$ 
    \[\begin{aligned}
        e^{\mu \left(1+|\tn-\lambda \hl(\tn)|^2+2\lambda d\right)^\frac{1}{2}}&\leq V_\mu(\tn) e^{C_M\lambda}
    \end{aligned}\]
    which leads to
    \[\begin{aligned}
        \E [V_\mu(\theta^\lambda_{n+1})|\tn]&\leq V_\mu (\tn) e^{(C_M+\mu^2)\lambda}\\&=e^{-\mu^2 \lambda} V_\mu(\tn) + \left(e^{(C_M+\mu^2)\lambda}-e^{-\mu^2 \lambda}\right)V_\mu(\tn) \\&\leq e^{-\mu^2 \lambda} V_\mu(\tn)+ e^{(C_M+\mu^2)\lambda}V_\mu(\tn) \left(1-e^{-(2\mu^2-C_M)\lambda}\right)\\&\leq e^{-\mu^2 \lambda} V_\mu(\tn) + \lambda C
    \end{aligned}
    \]
    where the last step was derived from the inequality $1-e^{-t}\leq t.$
    Putting all together one deduces,
    \[\E V_\mu(\bar{\theta}^\lambda_{n+1})\leq e^{-\lambda \mu^2 n} \E V_\mu(\theta_0) + \bar{C}.\]
\end{proof}
\begin{proof}[Proof of Lemma \ref{lemma-algmom}]
    The proof starts by noticing that the function $g(x)=(ln(x))^{2p}$ is concave for $x\geq e^{2p}.$ As a result, for $n\in \mathbb{N}$
    \[\begin{aligned}
         \E (\mu|\tn|+{2p})^{2p}&\leq \E g(e^{\mu(1+|\tn|^2)^\frac{1}{2}+2p})\\&\leq g(\E e^{\mu(1+|\tn|^2)^\frac{1}{2}+2p}) \quad \text{(Jensen)}
         \\&\leq 2^{2p-1} (ln\E e^{\mu(1+|\tn|^2)^\frac{1}{2}} +2p^{2p}) 
         \\&\leq 2^{2p-1} (ln C_\mu^{2p} +2p^{2p})
    \end{aligned}\]
\end{proof}

\subsection{Rigorous proofs of integral and derivative exchange}
\begin{lemma}\label{lemma-app}
    Let $\lambda<\frac{1}{4(2AC^* +2L+1)^2}$. Then, the following hold:
    Let $k\in \mathbb{N}$ and $t\in[k\lambda,(k+1)\lambda]$.
    Then, there exist $C,r,q$ such that
    \begin{itemize}
        \item \[\pt \leq C e^{-r |x|^2} \quad \forall x\in \mathbb{R}^d\]
        \item  \[ |\nabla \log \pt(x)|\leq C (1+|x|^q) \quad \forall x \in \mathbb{R}^d\]
        \item \[||\nabla^2 \log \pt(x)|\leq C' (1+|x|^{q'}) \quad\forall x \in \mathbb{R}^d.\]
    \end{itemize}
\end{lemma}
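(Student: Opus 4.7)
The plan is to exploit the conditional Gaussian structure of the interpolation. Given $\theta_{k\lambda} = y$, the random vector $\theta_t$ is Gaussian with mean $y - s\,h_\lambda(y)$ and covariance $2s\,I_d$, where $s = t - k\lambda \in (0,\lambda]$, so the conditional density is
\[
\hat\pi_{t|\mathcal{F}_{k\lambda}}(x\mid y)
 = (4\pi s)^{-d/2}\exp\!\Bigl(-\tfrac{|x - y + s h_\lambda(y)|^2}{4s}\Bigr),
\qquad
\hat\pi_t(x) = \int \hat\pi_{t|\mathcal{F}_{k\lambda}}(x\mid y)\,\hat\pi_{k\lambda}(y)\,dy.
\]
The entire argument rests on manipulating this representation together with the at-most-linear growth of $h_\lambda$ (a consequence of \cref{alg-diss} and the explicit form \eqref{eq:pot-tamed}) and the exponential moment bound $\mathbb{E}[\exp(\mu(1+|\theta_{k\lambda}|^2)^{a/2})] \leq C_\mu$ established in \cref{expmom1}.

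For the density bound, I would split the integral at $|y| \leq |x|/2$ versus $|y| > |x|/2$. On the first region, $|x - y + sh_\lambda(y)| \geq |x|/2 - s|h_\lambda(y)| \geq |x|/2 - C\lambda(1+|x|)$, and the stepsize restriction $\lambda < 1/[4(2AC^* + 2L + 1)^2]$ is tight enough that this quantity is bounded below by a constant times $|x|$ once $|x|$ is large enough, yielding a Gaussian tail $\exp(-c|x|^2/\lambda)$. On the complementary region, the Gaussian kernel is bounded by $(4\pi s)^{-d/2}$ and Markov's inequality applied to \cref{expmom1} gives $\int_{|y| > |x|/2}\hat\pi_{k\lambda}(y)\,dy \leq C\exp(-\mu(1+|x|^2/4)^{a/2})$; combining and absorbing constants then produces the desired $\hat\pi_t(x) \leq C\,e^{-r|x|^2}$ with $C, r$ allowed to depend on $\lambda, d, k$.

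For the gradient and Hessian estimates, I would differentiate under the integral (justified by dominated convergence using the Gaussian envelope of $\hat\pi_{t|\mathcal{F}_{k\lambda}}$) to obtain the identities
\[
\nabla \log\hat\pi_t(x) = -\tfrac{1}{2s}\,\mathbb{E}\bigl[x - \theta_{k\lambda} + sh_\lambda(\theta_{k\lambda})\,\big|\,\theta_t = x\bigr],
\quad
\nabla^2\log\hat\pi_t(x) = -\tfrac{I}{2s} + \tfrac{1}{4s^2}\,\mathrm{Cov}\bigl(\theta_{k\lambda} - sh_\lambda(\theta_{k\lambda})\,\big|\,\theta_t = x\bigr).
\]
The conditional law of $\theta_{k\lambda}$ given $\theta_t = x$ is given by Bayes as proportional to $\hat\pi_{t|\mathcal{F}_{k\lambda}}(x\mid y)\hat\pi_{k\lambda}(y)$, so bounding the required conditional moments reduces to controlling the ratio of the Gaussian-weighted polynomial moments of $\hat\pi_{k\lambda}$ against $\hat\pi_t(x)$. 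Using the polynomial growth $|h_\lambda(y)| \leq C(1 + |y|^{a/2} + 1/\sqrt{\lambda})$ and the polynomial moments from \cref{lemma-algmom}, these conditional moments can be bounded by a polynomial in $|x|$ of appropriate degree, yielding the estimates on $|\nabla \log\hat\pi_t|$ and $\|\nabla^2 \log\hat\pi_t\|$.

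The main obstacle, and the reason the stepsize restriction enters explicitly, is securing a matching lower bound of the form $\hat\pi_t(x) \geq c_\lambda\exp(-C_\lambda(1 + |x|^2))$ so that the Bayes ratio in the gradient/Hessian identities does not blow up faster than polynomially in $|x|$. I would obtain such a lower bound by restricting the defining integral of $\hat\pi_t(x)$ to a ball of radius of order $\sqrt{\lambda}$ around $x + sh_\lambda(x)$, where the Gaussian kernel is pointwise of order $(4\pi s)^{-d/2}$ and where $\hat\pi_{k\lambda}$ can be inductively shown to be bounded below (starting from the Gaussian initial density $\pi_0$ and using that each step is a Gaussian convolution of the previous law whose translation $-\lambda h_\lambda(\cdot)$ is a bi-Lipschitz perturbation of the identity precisely under the stated stepsize restriction). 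Matching this lower bound with the upper bounds of the first two paragraphs then produces the announced constants $C, r, q, q'$ and completes the proof.
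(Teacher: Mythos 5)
Your conditional-Gaussian/Bayes identities for $\nabla\log\hat\pi_t$ and $\nabla^2\log\hat\pi_t$ are correct, and you correctly locate where the stepsize restriction enters (the map $x\mapsto x-(t-k\lambda)h_\lambda(x)$ being a bi-Lipschitz perturbation of the identity — this is exactly the paper's route, which shows $\tfrac12 I_d\preccurlyeq J_\phi\preccurlyeq\tfrac32 I_d$ and then transfers the conclusion from the corresponding lemmas of \citet{lytras2023taming}). However, there are two genuine gaps. First, your derivation of the Gaussian tail fails in the regime the paper cares about: on the region $|y|>|x|/2$ you invoke Markov together with \cref{expmom1}, but that lemma only controls $\mathbb{E}\,e^{\mu(1+|\theta_{k\lambda}|^2)^{a/2}}$ with $a\geq 1$ possibly equal to $1$, so Markov yields a tail of order $e^{-\mu(1+|x|^2/4)^{a/2}}$, i.e.\ merely exponential when $a<2$. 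Your two-region bound therefore proves $\hat\pi_t(x)\leq C e^{-c|x|}$, not the claimed $Ce^{-r|x|^2}$. The Gaussian decay has to be propagated inductively over grid points, using that each iterate is a Gaussian convolution (variance $2\lambda$) of the bi-Lipschitz pushforward of the previous (sub-Gaussian, ultimately Gaussian-initialized) law — which is precisely what the paper's change-of-variables argument via $\phi$ and the cited Lemmas A.5--A.7 accomplish, with constants that are allowed to deteriorate in $k$.

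Second, the step you yourself flag as the main obstacle — turning the two-sided density bounds into polynomial bounds on the conditional moments — is asserted rather than carried out, and the naive version does not close. If the lower bound has rate $C_\lambda$ (i.e.\ $\hat\pi_t(x)\geq c\,e^{-C_\lambda|x|^2}$) while the far-field part of the numerator (say over $|y|>A|x|+R$) decays only like $e^{-c_A|x|^2/s}$ with $s=t-k\lambda\leq\lambda$, the resulting ratio grows like $e^{(C_\lambda-c_A/s)|x|^2}$, which is exponential unless one checks that $c_A/s$ strictly dominates $C_\lambda$; this forces you to (i) choose the splitting threshold $A$ large enough, and (ii) prove that the lower-bound rate is $\mathcal{O}(1/\lambda)$ \emph{uniformly in $k$} — your proposed per-step induction through the pushforward (which inflates $|x|$ by a factor at each step) naively gives a rate growing geometrically in $k$, whereas the uniform rate comes from lower-bounding each Gaussian-convolution step by restricting to a fixed ball carrying mass bounded below. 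Relatedly, the ball in your lower bound must have radius of order $\sqrt{t-k\lambda}$ (not $\sqrt\lambda$) for the volume to cancel the normalization $(4\pi s)^{-d/2}$ uniformly as $t\downarrow k\lambda$, and even then your bullets 2 and 3 inherit prefactors $1/(2s)$ and $1/(4s^2)$, so the constants degenerate at the left endpoint of $[k\lambda,(k+1)\lambda]$; since the lemma is later used for dominated convergence locally uniformly in $t$ (\cref{timechange1}, \cref{cor-timechange}), you would need to say how the endpoint $t=k\lambda$ is covered (e.g.\ via the previous interval), something the paper's pushforward argument avoids because the pushforward density already has polynomially growing log-derivatives before the Gaussian smoothing is applied.
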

\begin{proof}
    Let \[\phi(x)=x-(t-k\lambda)\hl(x)=x -(t-k\lambda) AR(x) -(t-\kappa \lambda) f_\lambda(x)\]
    where $R(x)=\frac{x}{(1+|x|^2)^{1-\frac{a}{2}}}$, $f_\lambda=\frac{h(x)-AR(x)}{g_\lambda}$ and $g_\lambda=\frac{1}{1+\sqrt{\lambda} |x|^{2l}}$.
    Since for the derivatives of $R$ there holds \[||J_R||\leq C^* \] and  for $H:=J_h$, $||H||\leq g_\lambda L\frac{1}{\sqrt{\lambda}},$
   \begin{equation}
   \begin{aligned}
(t-k\lambda) ||A J_R +J_{f_{\lambda}}||&\leq \lambda(A||J_R(x)|| + ||(H-AJ_{R})g_\lambda + \nabla g_\lambda \otimes (h(x)-AR(x))||
\\&\leq \lambda( AC^* +(||H||+AC^*)g_\lambda + |\nabla g_\lambda(x) ||h(x)-AR(x)|)\\&\leq (2AC^* +2L+1) \sqrt{\lambda}\leq \frac{1}{2}.
   \end{aligned}
    \end{equation}
    Thus, \[\frac{1}{2} I_d\leq J_\phi\leq \frac{3}{2} I_d.\]
    In addition, using the fact that the high derivatives of $h$ and $R$ have at most polynomial growth one can easily see that $||J^{(2)}_\phi||$ and $||J^{(3)}_\phi||$ have at most polynomial growth.
    From then, on we proceed with same arguments as in Lemmas A.5-A.7 in \citet{lytras2023taming}.
\end{proof}
  \newtheorem{1exch}[Def1]{Lemma}
    \begin{1exch}\label{1exch}
   \[\E \left(\frac{\partial \ptk(x|\theta_{k\lambda})}{\partial t}\right)=\frac{\partial\pt}{\partial t}(x).\]
    \end{1exch}
\begin{proof}
Analysing the left hand side of the equation one deduces the following:

In a neighbourhood of $t$, for fixed $x$,  $\frac{\partial \ptk(x|y)}{\partial t}$ decays exponentially with $y$ and since $\pkl(y)\leq Ce^{-r|y|^2}$ due to Lemma \ref{lemma-app} one can exchange the derivative with the integral in the following expression
\[\frac{\partial}{\partial t} \int_{\mathbb{R}^d} \pkl(y) \ptk(x|y) dy=\int_{\mathbb{R}^d} \pkl(y) \frac{\partial \ptk(x|y)}{\partial t}dy.\]
Noticing that \[\frac{ \partial \pt}{\partial t} (x)=\frac{\partial}{\partial t} \int_{\mathbb{R}^d} \pkl(y) \ptk(x|y) dy\] and
\[\int_{\mathbb{R}^d} \pkl(y)\frac{\partial \ptk(x|y)}{\partial t}dy=\E \left(\frac{\partial \ptk(x|\theta_{k\lambda})}{\partial t}\right)\] yields the result.
\end{proof}

\begin{lemma}\label{2exch}
\[\E \left( div_x \left (\ptk(x|\theta_{k\lambda}) h_\lambda(\theta_{k\lambda})\right)\right)=div_x \left(\pt(x) \E \left (h_\lambda(\theta_{k\lambda})\big| \theta_t=x\right)\right). \]
\end{lemma}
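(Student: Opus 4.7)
Since $h_\lambda(y)$ does not depend on $x$, the divergence inside the expectation simplifies to $\mathrm{div}_x(\ptk(x|y) h_\lambda(y)) = h_\lambda(y)\cdot \nabla_x \ptk(x|y)$. Expanding the expectation against the law $\pkl$ of $\theta_{k\lambda}$ thus reduces the left-hand side to
\[
\int_{\R^d} \pkl(y)\, h_\lambda(y)\cdot \nabla_x \ptk(x|y)\, dy
  = \sum_{i=1}^{d}\int_{\R^d} \pkl(y)\, h_{\lambda,i}(y)\, \partial_{x_i}\ptk(x|y)\, dy.
\]
The main task is to justify pulling each partial $\partial_{x_i}$ outside the integral; once this is done, Bayes' formula $\pkl(y)\ptk(x|y) = \pt(x)\, \hat\pi_{k\lambda|t}(y|x)$ lets us write
\[
\partial_{x_i}\!\!\int_{\R^d}\!\! \pkl(y) h_{\lambda,i}(y) \ptk(x|y)\, dy
  = \partial_{x_i}\!\bracks[\big]{\pt(x)\, \E[h_{\lambda,i}(\theta_{k\lambda})\mid \theta_t=x]},
\]
and summing over $i$ yields precisely the right-hand side.

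To carry out the exchange of derivative and integral, I would invoke the standard differentiation-under-the-integral-sign theorem: it suffices to produce, for each $x_{0}\in\R^{d}$, a neighborhood $U$ of $x_{0}$ and a function $g(y)\in L^{1}(\R^{d})$ such that $|\pkl(y) h_{\lambda,i}(y) \partial_{x_i}\ptk(x|y)| \le g(y)$ for all $x\in U$. Recall from the explicit Gaussian form $\ptk(x|y)\propto \exp\!\parens*{-|x-\mu(t,y)|^{2}/(4(t-k\lambda))}$ with $\mu(t,y)=y-(t-k\lambda)h_{\lambda}(y)$, that $\partial_{x_{i}}\ptk(x|y)$ is a polynomial in $(x-\mu(t,y))$ of degree one times $\ptk(x|y)$, and hence is bounded by $C_{\lambda}(1+|y|^{q})\ptk(x|y)$ for some $q$ depending on the polynomial growth of $h_{\lambda}$. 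The bound $|h_{\lambda}(y)|\le C(1+|y|^{a/2})$ from \cref{alg-diss} combined with the Gaussian-type decay $\pkl(y)\le C e^{-r|y|^{2}}$ from \cref{lemma-app} then yields a dominating envelope of the form $C'(1+|y|^{q'})\, e^{-r|y|^{2}}\, \ptk(x|y)$, which is integrable in $y$ uniformly for $x$ in a bounded neighborhood (since $\ptk(x|y)\le 1$ up to the Gaussian normalizing constant).

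The main obstacle is therefore the uniform-in-$x$ bookkeeping of the Gaussian-times-polynomial estimates; no new analytic input is required beyond \cref{alg-diss} and the regularity/decay estimates already established in \cref{lemma-app}. Once the swap is justified, the application of Bayes' rule is purely formal and the lemma follows by collecting terms as indicated above. This mirrors, in slightly simpler form, the exchange argument used in the preceding \cref{1exch}, the difference being that here we are swapping an $x$-derivative rather than a $t$-derivative past the $y$-integration.
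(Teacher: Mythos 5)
Your proposal is correct and follows essentially the same route as the paper's proof: expand the expectation as an integral against $\pkl$, justify exchanging $\mathrm{div}_x$ with the $y$-integral by a domination argument based on the linear growth of $h_\lambda$, the Gaussian form of $\ptk(x|y)$ (whose $x$-gradient grows at most polynomially in $y$ locally uniformly in $x$), and the exponential decay of $\pkl$ from \cref{lemma-app}, and then conclude via Bayes' theorem. The only difference is that you spell out the dominating envelope explicitly, which the paper leaves terse.
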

\begin{proof}
Since $\pt$ decays exponentially with $y$ and for fixed $t$, in a neighbourhood of $x$ $\nabla\ptk(x|y)$ is at most linear in $y$ and $h_\lambda$ has at most linear growth, this enables the interchange of integral and derivative with respect to $x$ in the following expression
\[\int_{\mathbb{R}^d} \pkl(y) div _x\left(\ptk(x|y)h_\lambda(y)\right)dy=div_x \int_{\mathbb{R}^d}\pkl(y) \ptk(x|y) h_\lambda(y) dy\]
Since \[\E \left( div \left (\ptk(x|\theta_{k\lambda}) h_\lambda(\theta_{k\lambda})\right)\right)=\int_{\mathbb{R}^d} \pkl(y) div_x \left(\ptk(x|y)h_\lambda(y)\right)dy\] and due to Bayes theorem
\[\begin{aligned}
div_x \int_{\mathbb{R}^d} \pkl(y) \ptk(x|y)h_\lambda (y) dy&=div _x\int_{\mathbb{R}^d} \pt(x)
\hat{\pi}_{\theta_{k\lambda}|\theta_t}(y|x) h_\lambda(y)dy\\&=div _x\left(\pt(x) \E \left (h_\lambda(\theta_{k\lambda})\big| \theta_t=x\right)\right)
\end{aligned}\]
and the result immediately follows.
\end{proof}

\begin{lemma}\label{3exch}
 \[\E \left(\Delta_x \ptk (x|\theta_{k\lambda} )\right)=\Delta \pt(x).\]
\end{lemma}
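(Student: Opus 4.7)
The plan is to mirror the structure of Lemmas \ref{1exch} and \ref{2exch}, this time justifying a double exchange of derivatives in $x$ with the expectation in $y = \theta_{k\lambda}$. First I would rewrite the left-hand side as
\[\E \bigl(\Delta_x \ptk (x|\theta_{k\lambda})\bigr) = \int_{\R^d} \pkl(y)\, \Delta_x \ptk(x|y)\, dy,\]
and the right-hand side via Bayes as
\[\Delta \pt(x) = \Delta_x \int_{\R^d} \pkl(y)\, \ptk(x|y)\, dy,\]
so that the task reduces to interchanging $\Delta_x$ with the $y$-integral.

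Next I would exploit the Gaussian structure of $\ptk(\cdot \mid y)$. Since, conditionally on $\theta_{k\lambda} = y$, $\theta_t$ is Gaussian with mean $\mu(t,y) = y - (t-k\lambda) h_\lambda(y)$ and covariance $2(t-k\lambda) I_d$, the first and second $x$-derivatives of $\ptk(x|y)$ can be written explicitly as a polynomial in $(x - \mu(t,y))$ times the Gaussian density itself; concretely, $\Delta_x \ptk(x|y)$ is controlled by $\bigl(|x-\mu(t,y)|^2/(t-k\lambda)^2 + d/(t-k\lambda)\bigr) \ptk(x|y)$. By the linear-growth bound of \cref{alg-diss}, $|\mu(t,y)| \leq C(1+|y|)$, so for $x$ in a small neighborhood of the target point, $\Delta_x \ptk(x|y)$ is bounded by a polynomial in $y$ times $\exp(-c|y|^2)$ for some $c>0$.

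Combined with the tail bound $\pkl(y) \leq C e^{-r|y|^2}$ from \cref{lemma-app}, the integrand $\pkl(y)\,\Delta_x \ptk(x|y)$ (and also $\pkl(y)\,\nabla_x \ptk(x|y)$) is dominated uniformly in $x$ over a neighborhood by an integrable Gaussian-times-polynomial envelope in $y$. This legitimizes two successive applications of differentiation under the integral sign (by dominated convergence), first to pass $\nabla_x$ inside, then to pass the second $\nabla_x$ inside, yielding $\Delta_x \int \pkl(y) \ptk(x|y)\,dy = \int \pkl(y) \Delta_x \ptk(x|y)\,dy$, which is exactly the claim.

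The main (and only) obstacle is purely bookkeeping: one must verify the domination carefully for both gradient and Laplacian, because the Laplacian introduces an extra factor growing quadratically in $|x - \mu(t,y)|$ and hence linearly in $|y|^2$. However, this growth is absorbed by the Gaussian factor in $\ptk(x|y)$ and, more importantly, by the Gaussian tail of $\pkl$ from \cref{lemma-app}, so the argument proceeds identically to Lemmas \ref{1exch}--\ref{2exch} with no new ingredient beyond what is already established.
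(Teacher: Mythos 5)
Your proposal is correct and follows essentially the same route as the paper: both sides are written as an integral against $\pkl(y)$, the Laplacian $\Delta_x \ptk(x|y)$ of the conditional Gaussian is computed explicitly (the paper writes it as $(\Delta_x\log\ptk(x|y)+|\nabla_x\log\ptk(x|y)|^2)\ptk(x|y)$ with polynomially growing prefactor in $y$ near the fixed $x$), the integrand is dominated by an envelope of the form $C(1+|y|^2)e^{-c|y|^2}$, and dominated convergence justifies exchanging $\Delta_x$ with the $y$-integral. The one minor imprecision is attributing the Gaussian-in-$y$ decay to the upper bound $|\mu(t,y)|\le C(1+|y|)$ (that would require a matching lower bound $|\mu(t,y)|\gtrsim |y|$), but since you ultimately invoke the tail $\pkl(y)\le Ce^{-r|y|^2}$ from \cref{lemma-app}, the domination goes through exactly as in the paper.
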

\begin{proof}
Noting that by definition \[\E \left(\Delta_x \ptk (x|\theta_{k\lambda} )\right)=\int_{\mathbb{R}^d} \Delta_x(\ptk(x|y))\pkl(y)dy\]
and
\[\Delta_x \pt(x)=\Delta_x \int_{\mathbb{R}^d} \ptk(x|y)\pkl(y)dy\]
it suffices to prove that \[ \int_{\mathbb{R}^d} \Delta_x(\ptk(x|y))\pkl(y)dy=\Delta_x \int_{\mathbb{R}^d} \ptk(x|y)\pkl(y)dy.\]
By simple computations for the Gaussian distribution one deduces that $|\nabla_x \log \ptk(x|y)|$, $\Delta_x \log \ptk(x|y)$ have at most linear growth with respect to $y$ in a neighbourhood of $x$ .
Writing \[\Delta_x \ptk(x|y)=\left(\Delta_x \log \ptk(x|y)+|\nabla_x \log \ptk(x|y)|^2\right)\ptk(x|y)\] one deduces that in a neighbourhood of $x$,
the integrand in the first term is dominated by a function of the form $C(1+|y|^2)e^{-c|y|^2}.$ Applying the dominated convergence theorem enables the exchange of the integral and the Laplacian which completes the proof.
\end{proof}

\begin{corollary}\label{exchres}
\[\frac{\partial \pt}{\partial t}(x)=div_x\left(\pt(x) \E \left (h_\lambda(\theta_{k\lambda})\big| \theta_t=x\right)\right)+  \Delta\pt(x) \quad \forall t \in [k\lambda,(k+1)\lambda] \]
\end{corollary}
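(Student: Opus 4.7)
The plan is to start from the conditional Fokker--Planck identity \eqref{eq-FP} satisfied by $\ptk(x|y)$ on the subinterval $[k\lambda,(k+1)\lambda]$, namely
\[
\frac{\partial \ptk(x|y)}{\partial t}
= \mathrm{div}_x\!\left(\ptk(x|y)\,h_\lambda(y)\right)
+ \Delta_x\ptk(x|y),
\]
and then to marginalize this pointwise-in-$x$ identity against the law $\pkl$ of $\theta_{k\lambda}$, i.e.\ take the expectation in $y=\theta_{k\lambda}$ on both sides.

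Having done this, I would apply the three exchange lemmas \cref{1exch}, \cref{2exch}, \cref{3exch} term by term. By \cref{1exch}, the expected time derivative on the left-hand side coincides with the time derivative of the marginal, $\partial_t \pt(x)$. By \cref{2exch}, the expected divergence on the right-hand side equals $\mathrm{div}_x\bigl(\pt(x)\,\E[h_\lambda(\theta_{k\lambda})\mid \theta_t=x]\bigr)$, where the conditional expectation arises from applying Bayes' rule to the joint density $\pkl(y)\,\ptk(x|y)$ after the divergence has been pulled outside the integral in $y$. Finally, by \cref{3exch}, the expected Laplacian equals $\Delta \pt(x)$. Summing the three resulting identities reproduces exactly the PDE stated in \cref{exchres}, so the corollary is essentially a one-line assembly once the three lemmas are in hand.

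The only subtle point, and hence the only place where any real work is paid, is that all three exchange lemmas rely on \cref{lemma-app}, which itself requires the standing step-size restriction $\lambda < 1/[4(2AC^{\ast}+2L+1)^{2}]$ to guarantee that the map $\phi(x)=x-(t-k\lambda)h_\lambda(x)$ is a diffeomorphism on each subinterval $[k\lambda,(k+1)\lambda]$ with a uniformly controlled Jacobian. This is what drives the Gaussian-tail decay $\pt(x)\le Ce^{-r|x|^{2}}$ together with the polynomial-growth bounds on $\nabla\log\pt$ and $\nabla^{2}\log\pt$, and these in turn furnish the dominating functions needed to interchange $\partial_t$, $\mathrm{div}_x$, and $\Delta_x$ with integration against $\pkl$. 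Consequently, the main technical obstacle, namely justifying each exchange of limit and integral, has already been discharged in the proofs of \cref{1exch,2exch,3exch}, and the proof of \cref{exchres} reduces to bookkeeping: write down the conditional Fokker--Planck equation, integrate against $\pkl$, and invoke the three lemmas.
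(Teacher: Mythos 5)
Your proposal is correct and follows exactly the paper's own argument: take expectations in the conditional Fokker--Planck equation \eqref{eq-FP} and combine Lemmas \ref{1exch}, \ref{2exch}, and \ref{3exch} term by term, with the analytic work (Gaussian tail decay and polynomial growth from Lemma \ref{lemma-app} under the step-size restriction) already discharged in those lemmas. No gaps to flag.
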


\begin{proof}
Taking expectations in \eqref{eq-FP} and combining Lemmas \ref{1exch},\ref{2exch} and \ref{3exch} yields the result.
\end{proof}

\begin{lemma}\label{timechange1}
There exist $C,k,r'>0$ indepent of $x$, uniform in a small neighbourghood of $t$ such that 
\[div_x \left(\pt(x) \E \left (h_\lambda(\theta_{k\lambda})\big| \theta_t=x\right)\right)+\Delta\pt\leq C(1+|x|^k)e^{-r'|x|^2}\]
\end{lemma}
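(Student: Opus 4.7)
Starting from the integral representation obtained in Corollary \ref{exchres} (essentially the Fokker--Planck equation \eqref{eq-FP} before averaging in $y$),
\[
\text{div}_x\bigl(\pt(x)\,\E[\hl(\theta_{k\lambda})\mid \theta_t=x]\bigr) + \Delta\pt(x) = \int_{\R^d}\bigl[\nabla_x\ptk(x|y)\cdot \hl(y) + \Delta_x\ptk(x|y)\bigr]\pkl(y)\,dy,
\]
my plan is to bound the integrand pointwise by exploiting the explicit Gaussian form of $\ptk(x|y)$ and then to control the resulting expression using the tail estimates already available. Setting $s:=t-k\lambda$ and recalling that $\ptk(x|y)$ is a Gaussian in $x$ with mean $y - s\hl(y)$ and covariance $2sI_d$, a direct computation (which also amounts to identifying the bracket with $\partial_s \ptk(x|y)$) yields
\[
\nabla_x\ptk(x|y)\cdot \hl(y) + \Delta_x\ptk(x|y) = \frac{|x-y|^2 - s^2|\hl(y)|^2 - 2sd}{4s^2}\,\ptk(x|y).
\]

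In a sufficiently small neighbourhood of $t$ with $t > k\lambda$, we may assume $s\geq s_0 > 0$, so the prefactor $1/(4s^2)$ is bounded. Invoking $|x-y|^2\leq 2|x|^2 + 2|y|^2$ together with $|\hl(y)|^2\leq C(1+|y|^a)$ from Lemma \ref{alg-diss}, the estimate then reduces to controlling integrals of the form $\int |y|^k \ptk(x|y)\pkl(y)\,dy$ for $k\in\{0,2,a\}$. For these, I would apply Cauchy--Schwarz,
\[
\int |y|^k \ptk(x|y)\pkl(y)\,dy \leq \Bigl(\int |y|^{2k}\pkl(y)\,dy\Bigr)^{1/2} \Bigl(\int \ptk(x|y)^2\pkl(y)\,dy\Bigr)^{1/2},
\]
then use the Gaussian density bound $\ptk(x|y)\leq(4\pi s_0)^{-d/2}$ inside the second factor to obtain $\int\ptk(x|y)^2\pkl(y)\,dy\leq(4\pi s_0)^{-d/2}\pt(x)\leq C s_0^{-d/2} e^{-r|x|^2}$ via Lemma \ref{lemma-app}, and absorb $\int |y|^{2k}\pkl(y)\,dy<\infty$ using the uniform polynomial moments of Lemma \ref{lemma-algmom}. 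Combining these with $\pt(x)\leq C e^{-r|x|^2}$ delivers an overall bound of the form $C(1+|x|^2) e^{-r|x|^2/2}$ on the absolute value of the left-hand side, yielding the claim with $r' = r/2$ and $k=2$.

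The main obstacle is preserving the Gaussian decay in $x$ when controlling the polynomial-in-$y$ integrals: the naive bound $\ptk(x|y)\leq(4\pi s_0)^{-d/2}$ alone reduces these integrals to $\int |y|^k \pkl(y)\,dy$, which is merely a constant in $x$ and destroys the Gaussian tail entirely. Pairing $\ptk(x|y)$ against itself via Cauchy--Schwarz and extracting $\sqrt{\pt(x)}$ from one of the factors is the key device that harvests the tail decay of $\pt$ already established in Lemma \ref{lemma-app} and propagates it to the full bound.
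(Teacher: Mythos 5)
Your argument is correct, and it reaches the stated bound by a somewhat different route than the paper. The paper treats the two terms separately: the divergence term is bounded through the Bayes representation $\int\pkl(y)\,\mathrm{div}_x\left(\ptk(x|y)h_\lambda(y)\right)dy$ using the Gaussian form of the conditional density, the (at most) linear growth of $h_\lambda$, and the Gaussian decay of $\pkl$, while the Laplacian term is handled via the identity $\Delta\pt=\pt\left(|\nabla\log\pt|^2+\Delta\log\pt\right)$ together with the polynomial bounds on $\nabla\log\pt$ and $\nabla^2\log\pt$ from Lemma \ref{lemma-app}. You instead treat the whole expression at once, computing the integrand explicitly as $\frac{|x-y|^2-s^2|h_\lambda(y)|^2-2sd}{4s^2}\,\ptk(x|y)$ (this is correct, and indeed coincides with $\partial_s\ptk(x|y)$ by \eqref{eq-FP}), and you recover the Gaussian decay in $x$ through the Cauchy--Schwarz device $\int|y|^k\ptk(x|y)\pkl(y)dy\le\left(\int|y|^{2k}\pkl(y)dy\right)^{1/2}\left((4\pi s_0)^{-d/2}\pt(x)\right)^{1/2}$, which does not appear in the paper. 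Your version needs only the density bound $\pt\le Ce^{-r|x|^2}$ from Lemma \ref{lemma-app} and the moment bounds of Lemma \ref{lemma-algmom} (never the log-derivative estimates), and it yields explicit exponents $k=2$, $r'=r/2$; the paper's version avoids the explicit Gaussian computation by reusing the already-proved growth bounds on $\nabla\log\pt$ and $\nabla^2\log\pt$. One shared caveat: your constants degenerate as $s=t-k\lambda\downarrow0$ (you explicitly assume $s\ge s_0>0$), so uniformity holds only on neighbourhoods bounded away from the grid point $k\lambda$; the paper's proof is subject to the same degeneracy and is no more explicit about it, so this is not a gap relative to the paper's own standard.
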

\begin{proof}
Writing, due to Bayes' theorem, 
\[\begin{aligned}
&div_x \left(\pt(x) \E \left (h_\lambda(\theta_{k\lambda})\big| \theta_t=x\right)\right)=\int_{\mathbb{R}^d} \pkl(y) div _x\left(\ptk(x|y)h_\lambda(y)\right)dy\\&\leq Ce^{-c|x|^2+|x|}\int_{\mathbb{R}^d}e^{-r|y|^2}|y| dy
\end{aligned} \] for some $C,c,r>0$ where the last step is a result of the Gaussian expression of the conditional density, the linear growth of $h_\lambda$ and the exponential decay of $\pkl$ given in Lemma \ref{lemma-app}.

For the second term, writing \[\Delta\pt = \pt \left( |\nabla \log \pt|^2+\Delta \log \pt\right)\]  the result follows due to  Lemma \ref{lemma-app}.
\end{proof}

\begin{corollary}\label{cor-timechange}
\[\frac{d}{dt}H_{\pi}(\pt)=\int_{\mathbb{R}^d} \frac{\partial \pt(x)}{\partial t}(1+\log\pt(x)-\log \pi)dx\]
\end{corollary}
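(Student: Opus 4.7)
My plan is to derive the identity by applying Leibniz's rule (differentiation under the integral sign) to
\[H_{\pi}(\pt) \;=\; \int_{\mathbb{R}^d} \pt(x)\bigl(\log\pt(x) - \log\pi(x)\bigr)\,dx.\]
Pointwise, the chain rule together with $\partial_t \log \pt(x) = \partial_t \pt(x)/\pt(x)$ yields
\[\frac{\partial}{\partial t}\bigl[\pt(x)\bigl(\log\pt(x) - \log\pi(x)\bigr)\bigr] = \frac{\partial \pt(x)}{\partial t}\bigl(1 + \log\pt(x) - \log\pi(x)\bigr),\]
so the conclusion is purely a matter of justifying the exchange of $d/dt$ with $\int_{\mathbb{R}^d} dx$ on a neighborhood of an arbitrary $t \in (k\lambda,(k+1)\lambda)$.

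To implement this, the plan is to produce an integrable, $t$-uniform dominating function for the integrand. Three ingredients come into play. First, Corollary \ref{exchres} combined with Lemma \ref{timechange1} gives $|\partial_t \pt(x)| \leq C(1+|x|^k)e^{-r'|x|^2}$, uniformly for $t$ in a small interval. Second, Lemma \ref{lemma-app} directly provides the Gaussian upper bound $\pt(x) \leq C e^{-r|x|^2}$, and hence $\log \pt(x) \leq \log C - r|x|^2$. Third, $\log\pi(x) = -u(x) + \text{const}$, so $|\log \pi(x)|$ grows at most polynomially by \cref{asm:PJG}. What remains is a matching lower bound on $\pt$, which I would obtain from the explicit Gaussian representation of the transition: using the linear growth of $\hl$ established in Lemma \ref{alg-diss}, for $t \in (k\lambda,(k+1)\lambda]$ one has
\[\ptk(x|y) \;\geq\; c_1 \exp\bigl(-c_2|x|^2 - c_3|y|^2\bigr),\]
and integrating against $\pkl(y)$ (which has finite exponential moments by Lemma \ref{expmom1}) yields $\pt(x) \geq c\, e^{-C|x|^2}$. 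Hence $|\log \pt(x)| \leq C(1+|x|^2)$, and the full integrand is dominated by a fixed function of the form $C(1+|x|^{q})e^{-r'|x|^2}$.

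With this domination in hand, the classical Leibniz rule immediately gives the claimed identity. The only delicate point I anticipate is maintaining the lower Gaussian bound uniformly as $t \downarrow k\lambda$, where the transition density concentrates. The remedy is that $\pkl$ itself inherits two-sided Gaussian bounds inductively: the initial law is Gaussian by construction, and Gaussian two-sided bounds are preserved under convolution with the (Gaussian) one-step transition, provided the drift $\hl$ has at most linear growth, which it does by Lemma \ref{alg-diss}. Thus the domination is genuinely uniform in $t$ throughout the closed subinterval, completing the justification.
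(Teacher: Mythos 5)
Your overall skeleton matches the paper's proof: differentiate the integrand pointwise by the product rule, and justify exchanging $d/dt$ with $\int_{\mathbb{R}^d}dx$ by dominated convergence, using the bound $|\partial_t\pt(x)|\le C(1+|x|^k)e^{-r'|x|^2}$ from Corollary~\ref{exchres} and Lemma~\ref{timechange1} together with the polynomial growth of $\log\pi$. Where you deviate is in how you control $\log\pt$ from below, and that step fails as written. The claimed minorization $\ptk(x|y)\ge c_1\exp(-c_2|x|^2-c_3|y|^2)$ cannot hold with constants uniform over $t\in(k\lambda,(k+1)\lambda]$: the exponent of the Gaussian transition density is $-\lvert x-\mu(t,y)\rvert^2/\bigl(4(t-k\lambda)\bigr)$ with $\mu(t,y)=y-(t-k\lambda)\hl(y)$, and for fixed $x\neq y$ this tends to $-\infty$ as $t\downarrow k\lambda$, so no $t$-independent Gaussian in $(x,y)$ minorizes it. Your stated remedy (two-sided Gaussian bounds on the grid marginal $\pkl$) does not by itself repair this: you still must lower-bound the mixture $\pt(x)=\int\ptk(x|y)\,\pkl(y)\,dy$ uniformly as $t\downarrow k\lambda$, which needs the additional observation that the concentrating kernel places mass of order one on a fixed ball around $y^{*}=\phi^{-1}(x)$, where $\phi(y)=y-(t-k\lambda)\hl(y)$ is the bi-Lipschitz near-identity map of Lemma~\ref{lemma-app}, so that $|y^{*}|\le 2|x|+O(1)$ and $\pkl\gtrsim e^{-C(|x|+O(1))^2}$ there; the same computation is also what is needed to make your inductive ``Gaussian bounds propagate through the one-step transition'' claim precise. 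This can all be carried out (constants degrading in $k$ are harmless since $k$ is fixed), but it is an extra argument you have only asserted, not given.

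The simpler route — and the one the paper takes — is that the needed control is already contained in Lemma~\ref{lemma-app}: the bound $|\nabla\log\pt(x)|\le C(1+|x|^{q})$, uniform in a small neighbourhood of $t$, integrates along rays to give $|\log\pt(x)|\le C'(1+|x|^{q+1})$, i.e.\ polynomial growth of $\log\pt$ (your Gaussian upper bound on $\pt$ handles only one sign). Combined with your other two ingredients this immediately yields an integrable dominant of the form $C(1+|x|^{q''})e^{-r'|x|^2}$, which is exactly the paper's argument (``$\log\pt,\log\pi$ have polynomial growth''), with no lower Gaussian bound on $\pt$ and none of the attendant uniformity issues near $t=k\lambda$. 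I would therefore replace your transition-density detour by this appeal to Lemma~\ref{lemma-app}; with that substitution your proof is correct and essentially the paper's.
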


\begin{proof}
Noting that $\log\pt,\log\pi$ have polynomial growth, due to Lemma \ref{timechange1},\\ $\frac{\partial \pt(x)}{\partial t}(1+\log\pt(x)-\log \pi)$ can be dominated by an $L^1$ integrable function over small neighbourhood of $t$, thus using the dominated convergence theorem one deduces the exchange of derivative and integration i.e
\[\begin{aligned}
 \int_{\mathbb{R}^d} \frac{\partial \pt(x)}{\partial t}(1+\log\pt(x)-\log \pi(x))dx&=\int_{\mathbb{R}^d}\frac{\partial}{\partial t}\left (\pt(x)\log \frac{\pt(x)}{\pi(x)}\right)dx\\&=\frac{d}{dt}\int_{\mathbb{R}^d} \pt(x)\log \frac{\pt(x)}{\pi(x)}dx\\&=\frac{d}{dt}H_{\pi}(\pt).
\end{aligned}\]
\end{proof}
\begin{proof}[Proof of Corollary \ref{divergence}]
Recall that from Lemma \ref{cor-timechange}, there holds
\begin{equation}
     \frac{d}{dt}  H_{\pi}(\pt)=\int_{\mathbb{R}^d}\frac{\partial \pt(x)}{\partial t}(1+\log\pt(x)-\log \pi)dx.
\end{equation}
Let \[F_t(x)=\pt(x)E \left (h_\lambda(\theta_{k\lambda})\big| \theta_t=x\right)+\nabla \pt(x)\] and \[g_t(x)=1+\log\pt-\log\pi.\]
Recall from Corollary \ref{exchres} that \[\frac{\partial \pt}{dt}(x) =div_x (F_t) (x)\]
Since $\nabla \pt=\pt \nabla \log \pt$ using the   Lemma \ref{lemma-app},\ref{timechange1} one deduces that there exists constants $C$, $q$ ,$r$>0 independent of $x$, uniform in a small neighbourhood of $t$, such that
\begin{equation}\label{eq-divbound}
    \max\{|F_t(x)g_t(x)|,|div(F_t)(x)g_t(x)|,|\langle F_t(x)\nabla g_t(x)\rangle|\}\leq C(1+|x|^q)e^{-r|x|^2}.
\end{equation}
We drop the dependence of the constants on $t$ since we want to integrate with respect to $x.$
Let $R>0$ and $v(x)$ the normal unit vector on $\partial B(0,R)$. Due to \eqref{eq-divbound}
\begin{equation}\label{eq-limit}
    \int_{\partial B(0,R)} \langle g_t(x)F_t(x),v(x)\rangle dx \leq R^d C(1+|R|^q)e^{-r|R|^2}.
\end{equation}
Since $div(F_t) g_t$, $\langle F_t,\nabla_x g_t \rangle$ are integrable (in view of \eqref{eq-divbound}) applying the divergence theorem on $B(0,R)$ there holds
\begin{equation}
    \int_{B(0,R)} div_x(F_t)(x) g_t(x) dx= \int_{\partial B(0,R)} \langle g_t(x)F_t(x),v(x)\rangle dx- \int_{B(0,R)}\langle F_t(x)\nabla_x g_t(x)\rangle dx.
\end{equation}
As a result,
\[\begin{aligned}
\hspace{-20pt} \int_{\mathbb{R}^d} div(F_t)(x) g_t(x)dx&=\lim_{R\rightarrow \infty} \int_{B(0,R)} div_x(F_t)(x) g_t(x) dx\\&=\lim_{R\rightarrow \infty}\left( \int_{\partial B(0,R)} \langle g_t(x)F_t(x),v(x)\rangle dx- \int_{B(0,R)}\langle F_t(x)\nabla_x g_t(x)\rangle\right)dx\\&=0-\lim_{R\rightarrow \infty}\int_{B(0,R)}\langle F_t(x)\nabla_x g_t(x)\rangle dx=-\int_{\mathbb{R}^d}\langle F_t(x)\nabla_x g_t(x)\rangle dx.
\end{aligned}\]
\end{proof}
\begin{proof}[Proof of theorem \ref{Interpolation ineq}]
Using Proposition \ref{divergence}, for all $t \in [k\lambda,(k+1)\lambda],$
    \[\begin{aligned}
\hspace{-20pt} \frac{d}{dt}H_{\pi}(\pt)&=-\int_{\mathbb{R}^d}  \langle \pt(x)E \left (h_\lambda(\theta_{k\lambda})\big| \theta_t =x \right)+\nabla \pt(x),\nabla \log\pt(x)-\nabla \log \pi(x) \rangle dx
 \\&=-\int_{\mathbb{R}^d} \pt(x) \langle E \left (h_\lambda(\theta_{k\lambda})\big| \theta_t=x\right)+\nabla \log \pt(x),\nabla \log\pt(x)-\nabla \log \pi(x) \rangle dx
 \\&=-\int_{\mathbb{R}^d} \pt(x) \langle  E \left (h_\lambda(\theta_{k\lambda})\big| \theta_t=x\right)+\nabla \log \pi,\nabla \log\pt(x)-\nabla \log \pi(x) \rangle dx
 \\&-\int_{\mathbb{R}^d} \pt |\nabla \log\pt(x)-\nabla \log \pi(x) |^2 dx
 \\&=-I_{\pi}(\pt)-\int_{\mathbb{R}^d} \pt(x) \langle  E \left (h_\lambda(\theta_{k\lambda})-h(x)\big| \theta_t=x\right),\nabla \log\pt(x)-\nabla \log \pi(x) \rangle dx
 \\&=-I_{\pi}(\pt)-\int_{\mathbb{R}^d} \pt(x) \langle  E \left (h_\lambda(\theta_{k\lambda})-h(\theta_t))\big| \theta_t=x\right),\nabla \log\pt(x)-\nabla \log \pi(x) \rangle dx
 \\&\leq - I_{\pi}(\pt)+\int_{\mathbb{R}^d} \pt(x) \left | E \left (h_\lambda(\theta_{k\lambda})-h(\theta_t))\big| \theta_t=x\right)\right|^2dx +\frac{1}{4}I_{\pi}(\pt)
 \\&=-\frac{3}{4}I_{\pi}(\pt) +  \int_{\mathbb{R}^d} \pt(x)\left |\int_{\mathbb{R}^d} 
\hat{\pi}_{\theta_{k\lambda}|\theta_t}(y|x) (h_\lambda(y)-h(x))dy \right |^2 dx\\&\leq 
-\frac{3}{4}I_{\pi}(\pt)+  \int_{\mathbb{R}^d} \pt(x) \int_{\mathbb{R}^d} 
\hat{\pi}_{\theta_{k\lambda}|\theta_t}(y|x) \left|h_\lambda(y)-h(x)\right|^2dy dx
 \\&=-\frac{3}{4}I_{\pi}(\pt)+\E | h_\lambda(\theta_{k\lambda})-h(\theta_t)|^2
\end{aligned}\]
where the first inequality was obtained using Young inequality and the second using Jensen's.
\end{proof}
\begin{proof}[Proof of Lemma \ref{lemma-onestep}]
    Let $t \in[ k\lambda,(k+1)\lambda].$
    First of all, one needs to bound the one step error $\E |\theta_t-\theta_{k\lambda}|^{2p}$ for different values of $p\in \mathbb{N}$.
    \[\begin{aligned}
        \E |\theta_t-\theta_{k\lambda}|^{2p}&\leq 2^{2p} \lambda^{2p}\E |h_\lambda(\theta_{k\lambda})|^{2p} +2^{p} \lambda^p \E |Z|^{2p}
        \\&\leq 2^{p} \lambda^{p}\E \left(4A^2|\theta_{k\lambda}|^a+4(L^2+A^2)\right)^{p}+ 2^p\lambda^{p}\E |Z|^{2p}
        \\&\leq \lambda^p C_{1,p}
    \end{aligned}\]
    where $C_{1,p}=\mathcal{O}\left(d^{p(2l+1)}\right),$ which is derived by the moment bounds of the Gaussian, the fact that $\mathcal{L}(\theta_{k\lambda})=\mathcal{L}(\bar {\theta}^\lambda_{k})$ and the moment bounds of the algorithm.
\end{proof}
\begin{proof}[Proof of Lemma \ref{tamingerror}]
    For every $x\in \mathbb{R}^d$,
    \[|h_\lambda(x)-h(x)|=\left|(h(x)-A\frac{x}{(1+|x|^2)^{1-\frac{a}{2}}})(1-\frac{1}{1+\sqrt{\lambda}|x|^{2l}})\right|^2\leq \lambda \left|(|h(x)|+|x|)|x|^{2l}\right|^2\]
    so
    \begin{equation}\label{eq-tamingerr}
        \E \left|h(\theta_{k\lambda})-h_\lambda(\theta_{k\lambda})\right|^2\leq \lambda \E \left|\left(|h(\Bar{\theta}_k)|+|\Bar{\theta}_k|\right)|\Bar{\theta}_k|\right|^2\leq  16\left( L^2 (\bar{C}_{4l}+1)+ \bar{C}_{2l+1}\right) \lambda.
    \end{equation}
    where the constants are given in Lemma \ref{lemma-algmom}.
    In addition, using \cref{ass-pol lip} one deduces that
    \begin{equation}\label{eq-honestep}
        \begin{aligned}
            \E |h(\theta_{k\lambda})-h(\theta_t)|^2 &\leq \E (1+|\theta_{k\lambda}+|\theta_t|)^{2l'} |\theta_t-\theta_{k\lambda}|^2 
            \\&\leq  \sqrt{3^{4l'}(1+ \E |\theta_{k\lambda}^{4l'} +\E |\theta_{k\lambda}-\theta_t|^{4l'}) } \sqrt{\E |\theta_{k\lambda}-\theta_t|^4} 
            \\&\leq \sqrt{3^{4l'}}  \sqrt{1+ \lambda^{2l'} C_{1,2l'} +\sup_n \E |\tn|^{4l'}}\lambda \sqrt{C_{2,p}} \quad (\text{Lemma \ref{lemma-onestep} }
            \\&\leq  \sqrt{1+ \lambda^{2l'} C_{1,2l'} +C_{2l'}}\lambda \sqrt{C_{2,p}}
        \end{aligned}
    \end{equation}
    where the last step was derived by Lemma \ref{lemma-algmom}.
    Combining \eqref{eq-tamingerr} and \eqref{eq-honestep}, yields the result.
\end{proof}
\begin{proof}[Proof of Theorem \ref{thm:wd-TULA-LSI}]
    \[\begin{aligned}
 \frac{d}{dt}H_{\pi}(\pt) &\leq -\frac{3}{4} I_{\pi}(\pt) +   \beta \E |\hl(\theta_{k\lambda}-h(\theta_t)|^2
\\&\leq -\Dot{c} H_{\pi} (\pt) + 2\beta \E |\hl(\theta_{k\lambda})-h(\theta_{k\lambda})|^2 + 2 \beta \E | h(\theta_{k\lambda})-h(\theta_t)|^2
\\&\leq -\Dot{c} H_{\pi} (\pt) + \beta \hat{C} \lambda
\end{aligned}\]
where $\hat{C}=2C_{onestep}+2C_{tam}$
where the first term has been bounded using the Log-Sobolev inequality and the rest of the terms using the one-step error in Lemma \ref{lemma-onestep} and the taming error in Lemma \ref{tamingerror}.
Splitting the terms  one obtains
\[\begin{aligned}
 \left( \frac{d}{dt}H_{\pi}(\pt)+\Dot{c} H_{\pi} (\pt)\right) e^{\Dot{c}t} \leq e^{\Dot{c}t} \beta \hat{C} \lambda
\end{aligned}\]
Integrating over $[k\lambda,t]$ yields
\[\begin{aligned}
 e^{\Dot{c}t}H_{\pi}(\pt)- e^{\Dot{c}k\lambda} H_{\pi}(\hat{\pi}_{k\lambda})\leq \frac{\beta \hat{C}}{\Dot{c}}\lambda (e^{\Dot{c} t}-e^{\Dot{c}k\lambda})
\end{aligned}\]
which implies
\begin{equation}
    H_{\pi}(\pt) \leq e^{\Dot{c}(k\lambda-t)} H_{\pi}(\hat{\pi}_{k\lambda}) +\frac{\beta \hat{C}}{\Dot{c}}\lambda (1-e^{\Dot{c}(k\lambda-t)}).
\end{equation}
Setting $t=n\lambda$ and $k=(n-1)$ leads to
\[H_{\pi}(\hat{\pi}_{n\lambda})\leq e^{-\Dot{c}\lambda } H_{\pi}(\hat{\pi}_{(n-1)\lambda}) +  \frac{\beta \hat{C}}{\Dot{c}}\lambda (1-e^{-\Dot{c}\lambda})\]
so by iterating over $n$,
\[H_{\pi} (\hat{\pi}_{n\lambda})\leq e^{-\dot{c} \lambda (n-1) } H_{\pi}(\pi_0) + \frac{\beta \hat{C}}{\Dot{c}}\lambda\]
which completes the proof.
\end{proof}

\begin{proof}[Proof of Lemma \ref{eq-inequality h-I}]
    Let $f=\pi/\pt$
    Then,
    \begin{equation}\label{eq-1}
        \begin{aligned}
    \frac{1}{2}\int(\sqrt{\pt}-\sqrt{\pi})^2 dx &\leq \left(1-\E_\nu (\sqrt{f})\right)\left(1+\E_\nu (\sqrt{f})\right)
    \\&\leq 1-\left(\E_\nu (\sqrt{f})\right)^2
    \\&= \E_\nu((\sqrt{f})^2)-\left(\E_\pi (\sqrt{f})\right)^2
    \\&=Var_\pi (\sqrt{f})
    \\&\leq \frac{1}{C_P} \E _\pi |\nabla \sqrt{f}|^2
    \\&\leq \frac{1}{4C_P} \E_\pi |\nabla f|^2/f 
    \\&= \frac{1}{4C_P} \int\left( \pi f |\frac{\nabla f}{f}|^2\right)dx
    \\&= \frac{1}{4C_p} \int\left( \pt |-\nabla \log \frac{\pt}{\pi} |^2\right)dx
    \\&= \frac{1}{C_p}I_\pi(\pt).
    \end{aligned}
    \end{equation}
    In addition, since both $\pt$ and $\pi$ have finite polynomial moments, there holds
  \begin{equation}\label{eq-WI}
       \begin{aligned}
        W_2^2(\mathcal{L}(\theta_t),\pi)&=2\int |x|^2 |\pi(x)-\pt(x)|dx
        \\&\leq 2 \left(\int |x|^4 (\sqrt{\pi}+\sqrt{\pt})^2 dx\right)^\frac{1}{2}
        \left(\int (\sqrt{\pt}-\sqrt{\pi})^2 dx\right)^\frac{1}{2} 
        \\&\leq 
        4 (\sqrt{ \E_{\pt} |x|^4}+\sqrt{\E_\pi |x|^4})\sqrt{I_\pi(\pt)} \quad \text{derived from \eqref{eq-1}}
        \\&32 (\sqrt{\sup\E |\tn|^4} +\sqrt{\E |\theta_t-\theta_{k\lambda}|^4} +\sqrt{\E_\pi |x|^4})\sqrt{I_\pi(\pt)}
        \\&\leq C \sqrt{I_\pi(\pt)}
    \end{aligned}
  \end{equation} 
    where the last step was derived from Lemmas \ref{lemma-onestep} and \ref{lemma-algmom}
    We are going to use our assumption to connect the relative entropy to $W_2$ distance.
    Since \cref{ass11} holds and $\pi$ has finite second moments, the HWI can be applied, so 
    \begin{equation}\label{eq-fromHWI}
    \begin{aligned}
        H_\pi(\pt)&\leq \sqrt{I_\pi(\pt)}W_2(\mathcal{L}(\theta_t),\pi) +\frac{\kappa}{2} W_2^2(\mathcal{L}(\tn),\pi)\\&\leq \sqrt{2}(\sqrt{\E_\pi} |x|^2 +\sqrt{\E |\theta_{k\lambda}-\theta_t|^2}+\sqrt{\E |\tn|^2}) \sqrt{I_\pi(\pt)} + \frac{\kappa}{2} W_2^2(\mathcal{L}(\tn),\pi).
    \end{aligned}
    \end{equation}
    Combining \eqref{eq-WI} with \eqref{eq-fromHWI} yields the result.
\end{proof}
 \begin{proof}[Proof of Proposition  \ref{theo-solvdiff}]
        Let $\phi(t,x)=-\dot{c}_0 x^2+ k_2$ where $k_2:=2C_1\lambda.$
        Then, from Corollary \ref{diff-ineq} there holds \[\frac{d H_\pi(\pt)}{dt}<\phi( H_\pi(\pt),t).\]
        Let $\delta<H_\pi^{-1}(\rho_k)/2$
        Setting $g_\delta(t)=\left(H_\pi(\rho_k)^{-1}-\delta+\dot{c}_0 (t-k\lambda)\right)^{-1}+k_2(t-k\lambda)$ one deduces 
        \begin{equation}
            g_\delta'(t)=-\dot{c}_0 \left(H_\pi(\rho_k)^{-1}-\delta+\dot{c}_0 (t-k\lambda)\right)^{-2} +k_2
        \end{equation}
        Since $(H_\pi^{-1}(\rho_k)-\delta)^2\leq \left(H_\pi(\rho_k)^{-1}-\delta+\dot{c}_0 (t-k\lambda)\right)^{2} $ one obtains \begin{equation}\label{eq-diff-ineq}
            g_\delta'(t)-\phi(g_\delta(t),t)\geq 0> \frac{d H_\pi(\pt)}{dt}-\phi( H_\pi(\pt),t) \quad \forall t \in[k\lambda, (k+1)\lambda]
        \end{equation}
        Using \eqref{eq-diff-ineq} and the fact that $g_\delta(k\lambda)=\left(H_\pi(\rho_k)^{-1}-\delta\right)^{-1}>H_\pi(\rho_k)=H_\pi(\hat{\pi}_{k\lambda})$
        By comparison theorem for differential inequalities, see \citet{mcnabb1986comparison}, there holds
        \begin{equation}
      (H_\pi(\rho_{k})^{-1}+\dot{c}_0 \lambda)^{-1} +2C_1 \lambda^2    =\lim_{\delta \rightarrow 0^+} g_\delta((k+1)\lambda)\geq H_\pi(\rho_{k+1})
        \end{equation}
        \end{proof}
         \begin{proof}[Proof of theorem \ref{theoKL1}]
    \newcommand{\Hk}{H_\pi(\rho_{k})}
    \newcommand{\Hko}{H_\pi(\rho_{k+1})}
        We begin the proof by noticing that \begin{equation}\label{eq-ind}
            H_\pi(\rho_n)\leq \frac{1}{\dot{c}_0\lambda} \quad \forall n.
        \end{equation}
        This will be done by induction. Since for $\lambda<\lambda_{max},$ \[H_\pi(\rho_0)\leq \frac{1}{\dot{c}_0 \lambda}\] it holds for $n=0.$
        Suppose that \begin{equation}\label{eq-k
        }
             H_\pi(\rho_k)\leq \frac{1}{\dot{c}_0\lambda}
        \end{equation}
        Then \[\begin{aligned}
            H_\pi(\rho_{k+1})\leq \frac{1}{\dot{c}_0\lambda} \left(\dot{c}_0 \lambda\Hk (1+\dot{c}_0\lambda \Hk)^{-1}\right) +2C_1\lambda^2
        \end{aligned}\]
        Since the function $\phi(x)=\frac{x}{1+x}$ is increasing, then $\phi(\dot{c}_0\lambda \Hk)<\phi(1)$ so
        \[\Hko\leq \frac{1}{2\dot{c}_0\lambda} + 2C_1\lambda^2 \leq \frac{1}{\dot{c}_0\lambda}.\]
        which proves \eqref{eq-ind} by induction.\\
        We proceed with two cases: \\
        \textbf{Case 1}: $H_\pi(\rho_{k_0}) \geq \frac{4c_1}{\dot{c}_0}\sqrt{\lambda} \quad \forall k_0\leq k$:\\
        Making use of \eqref{eq-ind} and the inequality $\frac{1}{x+1}\leq (1-\frac{x}{2})$ for $x\leq 1$, one obtains
        \[\Hko \leq \Hk (1-\frac{\dot{c}_0}{2} \lambda \Hk) +2C_1\lambda^2\leq \Hk (1-2C_1 \lambda^\frac{3}{2}) +2C_1\lambda^2\]
        Summing over $k$ one deduces 
        \begin{equation}
            \Hk \leq H_\pi(\rho_0) (1-2C_1) \lambda^\frac{3}{2})^k +\sqrt{\lambda}. 
        \end{equation}
        \textbf{Case 2}: There exist $k_0\leq k$ such that $H_\pi(k_0)\leq \sqrt{\frac{4c_1}{\dot{c}_0}}\sqrt{\lambda}.$
Suppose that $\frac{4c_1}{\dot{c}_0}\sqrt{\lambda}\geq H_\pi(\rho_{k_0})\geq  \frac{1}{2}\frac{4c_1}{\dot{c}_0}\sqrt{\lambda}.$ Then,
\[H_\pi(\rho_{k_0+1})\leq H_\pi(\rho_{k_0}) -\frac{\dot{c}_0}{2}\lambda  H_\pi^2(\rho_{k_0}) +C_1\lambda^2\leq H_\pi(\rho_{k_0}) \]
    On the other hand, if $H_\pi(k_0)\leq  \frac{1}{2}\frac{4c_1}{\dot{c}_0}\sqrt{\lambda}.$ it is easy to see that
    $H_\pi(\rho_{k+1})\leq \frac{4c_1}{\dot{c}_0}\sqrt{\lambda}.$
    
    This implies that \[\exists k_0<k: H_\pi(\rho_{k_0})\leq \frac{4c_1}{\dot{c}_0}\sqrt{\lambda} \implies \Hk \leq \frac{4c_1}{\dot{c}_0}\sqrt{\lambda}\]
    Combining case 1 and case 2  together yields the result.
    \end{proof}
    \begin{proof}[Proof of Corollary \ref{cor-othdist}]
        For the bound in total variation, using Theorem \ref{theoKL1} and Pinsker's inequality gives the result.\\
        For the bound in $W_1$ distance, using Lemma \ref{expmom1} and Corollary 2.3 in \cite{bolley2005weighted}, one deduces that
        \[C_W:=\frac{2}{\mu}(\frac{3}{2}+ \log \E e^{\mu |\tn|})<\infty\] and
        \[W_1(\mathcal{L}(\tn),\pi)\leq C_W\left( H_\pi(\rho_n)+ {H_\pi(\rho_n)}^\frac{1}{2}\right).\]
Applying the bound on $C_W$ in Lemma \ref{expmom1} and Theorem \ref{theoKL1} yields the result.
    \end{proof}

\section{Proofs for the convergence of regularized scheme}
\label{app:potential}

\subsection{Properties for the regularized potential}
    \begin{proof}[Proof of Lemma \ref{reg-properties}]
It is easy to see that the function  $G(x)=(r+1) |x|^{2r}$ is Locally Lipschitz since \[J_ G= (r+1) |x|^{2r} I_d + (r+1)r x^tx |x|^{2r-2}\] then,
\[||J_G(x)||\leq (r+1)^2|x|^{2r}.\]
Using the mean value theorem \[|\lambda G(x)-\lambda G(y)|\leq \lambda \int_0^1 ||J_G(tx+(1-t)y|||x-y|dt\leq \lambda(r+1)^2(1+|x|+|y|)^{2r} |x-y|.\]
As a result, \[|\nabla\ur(x)-\nabla \ur(y)|\leq ((r+1)^2+L)(1+|x|+|y|)^{2r}|x-y| \quad \forall x,y \in \mathbb{R}^d.\]
It is also easy to see that the higher derivatives of $\ur$ have polynomial growth less than $2r+1$.
With respect to the dissipativity it is easy to see that \[
\langle \nabla\ur(x),x\rangle \geq \langle \nabla u(x),x\rangle,
\]
so \cref{ass-2dissip} is satisified with the same $A$ and $b$.
For the tamed scheme, by it is definition it easy to see that \[|\hr|\leq A +\sqrt{\lambda} +A|x|^\frac{a}{2} +\frac{(L+1)}{\sqrt{\lambda}}.\]
    \end{proof}
    \begin{proof}[Proof of Lemma \ref{lemma poincarereg}]
    The proof starts by noticing that there exists $R_1$ depending on $A,B$ of \cref{ass-2dissip} such that \[\langle \nabla \ur (x),x\rangle \geq \frac{A}{2} |x| \quad \forall |x|\geq R_1.\]
    In addition, picking a smooth Lyapunov function $W\geq 1$ such \[W=e^{\frac{A}{4}|x|} \quad \forall |x|\geq R_1,\]
    one deduces that for generator of the Langevin SDE with drift coeffient the regularized gradient,
    \[\begin{aligned}
        LW(x)&=\Delta W(x)-\langle \nabla W(x),\nabla \ur(x)\leq  \frac{A}{4}W \left( \frac{d-1}{|x|} +\frac{A}{4}-\langle \nabla \ur(x),x\rangle\} \right)
        \\&\leq \frac{A}{4}W (\frac{d-1}{|x|} +\frac{A}{4} -\frac{A}{4}|x|)
    \end{aligned}\]
    So there exists $R_0\leq \mathcal{O}({d})$ such that 
    \[LW \leq -\theta W \quad \forall |x|\geq R_2:=\max\{R_0,R_1\}.\]
    Setting $B=B(0,R_2)$ and $B_2=B(0,R_2+2)$.
Let a smooth function $\chi=\psi(|x|)$ (see Lemma B.13 of \citet{li2020riemannian} for the construction) such that  $\chi=0$ on $B$ nad $\chi=1 $ on $B_2^c$ and $|\nabla \chi|\leq 1.$
   
    \begin{equation}
        \begin{aligned}
\int \frac{-L W}{W} f^2 d \pr & =\int \Gamma\left(\frac{f^2}{W}, W\right) d \pr \\
& =2 \int \frac{f}{W} \Gamma(f, W) d \pr-\int \frac{f^2}{W^2} \Gamma(W, W) d \pr \\
& =-\int\left|\frac{f}{W}  \nabla W- \nabla f\right|^2 d \pr+\int \Gamma(f, f) d \pr 
\\&\leq \int \Gamma(f, f) d \pr
\end{aligned}
    \end{equation}
     Writing for a smooth $f$,
    $$
\begin{aligned}
\int f^2 d \pr & =\int(f(1-\chi)+f \chi)^2 d \pr \\
& \leq 2 \int f^2(1-\chi)^2 d \pr+2 \int f^2 \chi^2 d \pr \\
& \leq \frac{2}{\theta} \int \frac{-L W}{W} f^2(1-\chi)^2 d \pr+2 \int_{B_2} f^2 d \pr \\
& \leq \frac{2}{\theta} \int \Gamma(f(1-\chi), f(1-\chi)) d \pr+2 \int_{B_2} f^2 d \pr
\end{aligned}
$$
. Since $\Gamma(f g, f g) \leq 2\left(f^2 \Gamma(g, g)+g^2 \Gamma(f, f)\right)$, we get:
\begin{equation}\label{eq-prePoinc}
    \begin{aligned}
\int f^2 d \pr & \leq \frac{4}{\theta} \int \Gamma(f, f) d \pr+\frac{4}{\theta} \int f^2 \Gamma(\chi, \chi) d \pr+2 \int_{B_2} f^2 d \pr \\
& \leq \frac{4}{\theta} \int \Gamma(f, f) d \pr+\left(\frac{4}{\theta}+2\right) \int_{B_2} f^2 d \pr
\end{aligned}
\end{equation}
Applying the previous inequality for  $\tilde{f}= f-\int_{B_2} fd\pr$  and using the fact that \[Var_{\pr} (f)\leq \int \tilde{f}^2d\pr\] yields
\begin{equation}\label{eq-poinfin}
\begin{aligned}
    Var_{\pr}(f)&\leq \int \tilde{f}^2d\pr\leq \frac{4}{\theta} \int \Gamma(\tilde{f}, \tilde{f}) d \pr+\left(\frac{4}{\theta}+2\right) \int_{B_2} \tilde{f}^2 d \pr
    \\&=\frac{4}{\theta} \int \Gamma(f, f) d \pr+\left(\frac{4}{\theta}+2\right)\int_{B_2} \tilde{f}^2 d \pr
\end{aligned}
\end{equation}
  
  When restricted to the ball $B_2$ $\ur$ is a bounded petrubation  $u$ on the same ball since \[|u(x)-\ur(x)|\leq \lambda (R_2+2)^{2r+2} \quad \forall x \in B_2.\] Using Hooley-Strook pertubation theorem one deduces that $\pr$ satisfies Poincare inequality when resticted to $B_2$ with constant $k_{B_2}^{-1}\leq e^{2\lambda (R_2+2)^{2r+2}} C_P^{-1}\leq 3 C_P{-1}.$  
  Thus, \[\int_{B_2} \tilde{f}^2 d \pr\leq k_{B_2} \int \Gamma(f, f) d \pr. \]
  Applying this to \eqref{eq-poinfin} completes the proof.
\end{proof}
\begin{lemma}\label{lemma-reg}
The function $u$ given by $u_{r,\lambda}(x):=u(x)+\lambda |x|^{2r+2}$ satisfies
    \[\langle \hr(x)- \hr(y),x-y\rangle \geq \left(c_1(|x|^{2r} +|y|^{2r})-c_2(|x|^{l'}+|y|^{l'})-c_3\right) |x-y|^2 \quad \forall x,y\in \mathbb{R}^d.\]
    where $c_1:=\lambda (r+1)$, $c_2=c_3=L$.
    \end{lemma}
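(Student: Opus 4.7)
\textbf{Proof plan for Lemma \ref{lemma-reg}.}

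The plan is to split $\hr$ into its two natural pieces and handle each separately. Since $\hr(x) = \nabla u_{r,\lambda}(x) = h(x) + 2\lambda(r+1)|x|^{2r}x$, we have
\[
\langle \hr(x)-\hr(y),\,x-y\rangle
= \langle h(x)-h(y),\,x-y\rangle
+ 2\lambda(r+1)\,\langle |x|^{2r}x-|y|^{2r}y,\,x-y\rangle.
\]
For the first term I will apply \cref{asm:PLC} together with Cauchy--Schwarz to obtain
$\langle h(x)-h(y),x-y\rangle \geq -L'(1+|x|+|y|)^{l'}|x-y|^2$, and then use the elementary bound $(1+|x|+|y|)^{l'} \leq 3^{l'}(1+|x|^{l'}+|y|^{l'})$ to reshape this into the desired $-c_2(|x|^{l'}+|y|^{l'})|x-y|^2 - c_3|x-y|^2$ form; the constants $c_2,c_3$ get absorbed into the generic constant $L$ in the statement.

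The main (but short) technical step is to prove the monotonicity-type inequality
\[
\langle |x|^{2r}x - |y|^{2r}y,\,x-y\rangle
\geq \tfrac{1}{2}\bigl(|x|^{2r} + |y|^{2r}\bigr)\,|x-y|^2
\qquad\text{for all } x,y\in\mathbb{R}^d.
\]
I would do this by direct expansion: the left-hand side equals $|x|^{2r+2}+|y|^{2r+2}-(|x|^{2r}+|y|^{2r})\langle x,y\rangle$, while $\tfrac{1}{2}(|x|^{2r}+|y|^{2r})|x-y|^2$ equals $\tfrac{1}{2}(|x|^{2r+2}+|x|^{2r}|y|^2+|y|^{2r}|x|^2+|y|^{2r+2})-(|x|^{2r}+|y|^{2r})\langle x,y\rangle$. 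Subtracting, the inner-product terms cancel and the difference collapses to
\[
\tfrac{1}{2}\bigl(|x|^{2r}-|y|^{2r}\bigr)\bigl(|x|^2-|y|^2\bigr) \geq 0,
\]
since $t \mapsto t^{r}$ is monotone on $[0,\infty)$, so $(|x|^{2r}-|y|^{2r})$ and $(|x|^2-|y|^2)$ always share the same sign.

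Combining the two bounds then yields
\[
\langle \hr(x)-\hr(y),x-y\rangle
\geq \lambda(r+1)\bigl(|x|^{2r}+|y|^{2r}\bigr)|x-y|^2
- L'3^{l'}\bigl(1+|x|^{l'}+|y|^{l'}\bigr)|x-y|^2,
\]
which is exactly the claimed inequality with $c_1 = \lambda(r+1)$ and $c_2,c_3$ absorbed into the generic $L$. I expect no substantive obstacle; the only point to be careful about is making sure the sign of the regularization term dominates the negative contribution from $h$ for large $|x|,|y|$, which is ensured by the choice $r > l/2 \geq l'/2$ made earlier in the paper and is what makes this lemma useful downstream for recovering a $2$-dissipativity/weak convexity profile of the regularized potential.
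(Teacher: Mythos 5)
Your proposal is correct and follows essentially the same route as the paper: split $\nabla u_{r,\lambda}$ into $h$ plus the gradient of the regularizer, bound the $h$-part via \eqref{eq:Lips} and Cauchy--Schwarz, and reduce the regularizer's monotonicity to the sign identity $(|x|^2-|y|^2)(|x|^{2r}-|y|^{2r})\geq 0$, recovering $c_1=\lambda(r+1)$. The only (cosmetic) deviation is bookkeeping of the Lipschitz constants ($L'3^{l'}$ versus the paper's $L$), which is at the same level of precision as the paper's own treatment.
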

    The proof follows by using \cref{ass-pol lip} and the fact that the regularized term $2r$ dominates $l$ for large values. This will yield a lower bound for the minimum eigenvalue of $\nabla^2 \ur.$
        \begin{proof}
        Let $f(x)=|x|^{2r+2}.$ Then $\nabla f(x)= 2(r+1) |x|^{2r} x.$
        Writing \[\begin{aligned}
            \langle \nabla f(x)-\nabla f(y),x-y\rangle&=\langle \nabla f(x),x\rangle +\langle\nabla f(y),y\rangle-\langle \nabla f(x),y\rangle - \langle \nabla f(y),x \rangle\\&= (2r+2) (|x|^{2r+2}+|y|^{2r+2})- (r+1)(|x|^{2r}+|y|^{2r})2\langle x,y\rangle\\&=(2r+2) (|x|^{2r+2}+|y|^{2r+2})\\&+ (r+1)(|x|^{2r}+|y|^{2r})\left( |x-y|^2 -|x|^2-|y|^2\right)
            \\&=(r+1)\left(|x|^{2r+2}+|y|^{2r+2}-|x|^{2r}|y|^{2}-|y|^{2r}|x|^{2}\right) \\&+ (r+1)(|x|^{2r}+|y|^{2r})|x-y|^2.
        \end{aligned}\]
        Since \[\begin{aligned}
            |x|^{2r+2}+|y|^{2r+2}-|x|^{2r}|y|^{2r+2}-|y|^{2r}|x|^{2r+2}&=|x|^{2}(|x|^{2r}-|y|^{2r})-|y|^2(|x|^{2r}-|y|^{2r})\\&=
        (|x|^2-|y|^2)(|x|^{2r}-|y|^{2r})\\&\geq 0,
        \end{aligned}\]
        one deduces
        \begin{equation}
            \langle \nabla f(x)-\nabla f(y),x-y\rangle \geq (r+1)(|x|^{2r}+|y|^{2r})|x-y|^2 \quad \forall x,y \in \mathbb{R}^d.
        \end{equation}
       Noting that by the gradient local Lipschitz assumption on $g$, there holds
        \[\langle \nabla u(x)- \nabla u(y),x-y\rangle\geq -L(1+|x|^l+|y|^l) |x-y|^2 \quad \forall x,y \in \mathbb{R}^d,\] the result immediately follows.
    \end{proof}
\begin{proof}[Proof of Proposition \ref{LSI reg}]
    It is easy to see that the regularized measure satisfies a $2-$ dissipativity condition with constant $A_{reg}=\lambda^{\frac{1}{r+1}}$ i.e
    \begin{equation}
        \langle \hr(x),x\rangle\geq A_{reg}|x|^2-(b+1).
    \end{equation}
    In addition, using Lemma \ref{lemma-reg}, it is easy to see that when $|x|,|y|\geq \frac{L}{r+1}(\frac{1}{\lambda})$ \[\langle \hr(x)- \hr(y),x-y\rangle \geq 0\] so one concludes that
    \[\langle \hr(x)-\hr(y),x-y\rangle \geq -K_\lambda|x-y|^2 \quad \forall x,y \in \mathbb{R}^d\] 
    which leads to 
    \begin{equation}\label{eq-Hess bound}
        \nabla^2 u_{r,\lambda}(x)\geq -K_\lambda I_d \quad \forall x \in \mathbb{R}^d.
    \end{equation}
    Let $W:=e^\frac{A_{reg} |x|^2}{4}.$ Then, since $\nabla W= W \frac{A_{reg}}{2} x$ and $\Delta W\leq (\frac{A_{reg}d}{2} + \frac{A^2_{reg}d}{4})W $ one observes that
    \begin{equation}
        LW=\Delta W -\langle \hr(x),\nabla W\rangle= (\frac{A_{reg}d}{2} + \frac{A^2_{reg}d}{4}-\frac{A_{reg}}{2} |x|^2) W
    \end{equation}
    Since $\pi_{reg}$ also satisfies a Poincare inequality with consant $C_{P,r}$ using Theorem 3.15 in \cite{menz2014poincare},
    it can be upgraded to a Log-Sobolev inequality with constant 
    \begin{equation}\label{eq-LSIreg}
    \begin{aligned}
          \frac{1}{C_{LS}} &\leq 2 \sqrt{\frac{1}{A_{reg}}\left(\frac{1}{2}+\frac{+\frac{A^2_{reg}d}{4}-\frac{A_{reg}}{2}+ \frac{A_{reg}}{2} \pr\left(|x|^2\right)}{C_{P,reg}}\right)}\\&+\frac{K_\lambda}{A_{reg} \lambda}+\frac{K_\lambda\left(\frac{A_{reg}d}{2} + \frac{A^2_{reg}d}{4}+\frac{A_{reg}}{2} \pr\left(|x|^2\right)\right)+2  \frac{A_{reg}}{2}}{C_{P,reg }}
    \end{aligned}
    \end{equation}
      where $\pr(|x|^2)$ is given by Lemma \ref{inv-mom bound}.
\end{proof}
\begin{proof}[Proof of \ref{lemma-pol-mom2}]
    The proof is the same, as in the proof of the moment bounds for the regularized potential, since all we need is the $a-$ dissipativity and the growth condition which are almost the same. It is done through providing first exponential moments and then produce polynomial.
\end{proof}

\begin{lemma}
    There holds \[\E |x_t-x_{k\lambda}|^{2p} \leq \mathcal{O}(\lambda^p)\]
\end{lemma}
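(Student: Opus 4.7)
The plan is to mirror the argument used for Lemma \ref{lemma-onestep} in the unregularized setting, but now leveraging the growth bound for $\htl$ established in Lemma \ref{reg-properties} together with the uniform-in-time polynomial moment bounds from Lemma \ref{lemma-pol-mom2}. Concretely, for $t \in [k\lambda,(k+1)\lambda]$ the continuous-time interpolation of \eqref{eq-tamedregalg} satisfies
\begin{equation}
x_t - x_{k\lambda} = -(t-k\lambda)\htl(x_{k\lambda}) + \sqrt{2}(B_t - B_{k\lambda}),
\end{equation}
so by the elementary inequality $|a+b|^{2p} \leq 2^{2p-1}(|a|^{2p} + |b|^{2p})$ I would split
\begin{equation}
\E |x_t - x_{k\lambda}|^{2p} \leq 2^{2p-1}\lambda^{2p}\,\E |\htl(x_{k\lambda})|^{2p} + 2^{3p-1}\,\E|B_t - B_{k\lambda}|^{2p}.
\end{equation}

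For the diffusion term, $B_t - B_{k\lambda}$ is a centered Gaussian with covariance $(t-k\lambda) I_d$, so $\E|B_t - B_{k\lambda}|^{2p} \leq c_p (\lambda d)^p = \mathcal{O}(\lambda^p)$ by standard Gaussian moment estimates. For the drift term, Lemma \ref{reg-properties} gives $|\htl(x)| \leq A + \sqrt{\lambda} + A|x|^{a/2} + (L+1)/\sqrt{\lambda}$, hence
\begin{equation}
|\htl(x)|^{2p} \leq C_p\bigl(A^{2p} + \lambda^p + A^{2p}|x|^{ap} + (L+1)^{2p}\lambda^{-p}\bigr).
\end{equation}
Taking expectations and using Lemma \ref{lemma-pol-mom2} to control $\E|x_{k\lambda}|^{ap} \leq C_{reg,ap}$ uniformly in $k$, I obtain
\begin{equation}
\lambda^{2p}\,\E|\htl(x_{k\lambda})|^{2p} \leq C_p\bigl(\lambda^{2p}(A^{2p} + A^{2p}C_{reg,ap}) + \lambda^{3p} + (L+1)^{2p}\lambda^p\bigr) = \mathcal{O}(\lambda^p).
\end{equation}
Combining the two bounds yields the claimed $\E|x_t - x_{k\lambda}|^{2p} \leq \mathcal{O}(\lambda^p)$.

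The only step that requires any care is the drift bound: the taming factor $1/(1+\sqrt{\lambda}|x|^{2r+1})$ caps $|\htl|$ at order $\lambda^{-1/2}$ in the worst case, which would blow up naively under the $2p$-th power. The saving comes from the explicit $\lambda^{2p}$ prefactor coming from $(t-k\lambda)^{2p} \leq \lambda^{2p}$ in the Euler step; this exactly counterbalances the $\lambda^{-p}$ and recovers $\lambda^p$. I expect the rest of the argument to be entirely routine, with the polynomial-in-$d$ nature of the constants inherited from Lemma \ref{lemma-pol-mom2} and the explicit form of the coefficients in Lemma \ref{reg-properties}.
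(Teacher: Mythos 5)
Your proposal is correct and matches the paper's argument: the paper proves this lemma by noting it "follows in the same way as the unregularized case" (Lemma \ref{lemma-onestep}), i.e., exactly your decomposition into the Euler drift term (controlled via the growth bound of Lemma \ref{reg-properties}, with the $\lambda^{2p}$ prefactor absorbing the $\lambda^{-p}$ from the taming cap, and the uniform moments of Lemma \ref{lemma-pol-mom2}) plus the Gaussian increment moments.
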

\begin{proof}
    The proof follows in the same way is the respective one for the unregulazized algorithm. 
\end{proof}
\begin{lemma}
    There holds \[\E|\hr(x_{k\lambda})-\htl(x_{k\lambda})|^2\leq C^{reg}_{tam} \lambda\]
    where $C^{reg}_{tam}\leq \mathcal{O}(d^{4r+2}).$
\end{lemma}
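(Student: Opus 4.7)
The plan is to mirror the argument used in \cref{tamingerror} for the unregularized scheme, but adapted to the regularized taming factor $(1+\sqrt{\lambda}|x|^{2r+1})$. First, by the definition of $\htl$, there is the algebraic identity
\[
\hr(x) - \htl(x) = \left(\hr(x) - \frac{Ax}{(1+|x|^2)^{1-a/2}}\right)\cdot \frac{\sqrt{\lambda}|x|^{2r+1}}{1+\sqrt{\lambda}|x|^{2r+1}},
\]
which isolates the small-parameter factor. Bounding $\sqrt{\lambda}|x|^{2r+1}/(1+\sqrt{\lambda}|x|^{2r+1}) \leq \sqrt{\lambda}|x|^{2r+1}$ immediately gives
\[
|\hr(x) - \htl(x)|^2 \leq \lambda |x|^{4r+2}\cdot\left|\hr(x) - \frac{Ax}{(1+|x|^2)^{1-a/2}}\right|^2.
\]

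The next step is to control the second factor pointwise. Writing $\hr(x) = h(x) + 2(r+1)\lambda|x|^{2r}x$, invoking the polynomial Jacobian growth assumption \cref{ass-derivbound} on $h$, and using $|Ax/(1+|x|^2)^{1-a/2}| \leq A(1+|x|)$, we obtain
\[
\left|\hr(x) - \frac{Ax}{(1+|x|^2)^{1-a/2}}\right| \leq L(1+|x|^{2l}) + 2(r+1)\lambda|x|^{2r+1} + A(1+|x|).
\]
Squaring, multiplying by $\lambda|x|^{4r+2}$, and distributing the three summands produces a sum of polynomial-in-$|x|$ terms with $\lambda$-dependent coefficients whose dominant monomials are $\lambda|x|^{4r+2+4l}$, $\lambda^3|x|^{8r+4}$, and $\lambda|x|^{4r+4}$.

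Finally, evaluating at $x = x_{k\lambda}$ and applying the uniform-in-$n$ polynomial moment bound of \cref{lemma-pol-mom2}, which gives $\E|x_n^\lambda|^{2p} \leq \mathcal{O}(d^p)$, each monomial translates into a dimensional factor: $\mathcal{O}(d^{2r+1+2l})$, $\mathcal{O}(d^{4r+2})$, and $\mathcal{O}(d^{2r+2})$ respectively. The extra $\lambda^2$ attached to the middle contribution only improves the step-size dependence, so after collecting, the overall bound is $\E|\hr(x_{k\lambda}) - \htl(x_{k\lambda})|^2 \leq C^{reg}_{tam}\,\lambda$ with $C^{reg}_{tam}$ of polynomial order in $d$, and the regularization piece produces the leading exponent $4r+2$ under the choice of $r$ specified in \cref{sec:results}.

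The main obstacle is the bookkeeping of the exponents: since the regularization gradient introduces the dominant growth term at infinity but carries an extra factor of $\lambda$, one has to track both the exponent in $d$ and the power of $\lambda$ in each of the three summands to confirm that $4r+2$ is the governing exponent once the condition $r>l/2$ (and the hindsight choice of $r$) is used to dominate the $h$-induced term $d^{2l+2r+1}$.
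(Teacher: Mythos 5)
Your proposal is correct and follows essentially the same route as the paper: factor out the taming deficiency $1-\tfrac{1}{1+\sqrt{\lambda}|x|^{2r+1}}$, bound it by $\sqrt{\lambda}|x|^{2r+1}$ (as in the proof of \cref{tamingerror}), control the remaining factor by the polynomial growth of $h$ and of the regularization gradient, and conclude with the uniform moment bounds of \cref{lemma-pol-mom2}. The only cosmetic difference is that the paper first splits off the regularization-gradient term and bounds its taming factor by $1$ (giving a $\lambda^2|x_{k\lambda}|^{4r+2}$ contribution) while you keep the $\sqrt{\lambda}|x|^{2r+1}$ bound throughout (giving $\lambda^3|x|^{8r+4}$); both yield the claimed $C^{\reg}_{tam}\lambda$ bound with the same bookkeeping of $d$-exponents.
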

\begin{proof}
    Writing \[\begin{aligned}
        |\hr(x_{k\lambda})-\htl(x_{k\lambda})|^2 &\leq 2 |h(x_{k\lambda})-\hl(x_{k\lambda})|^2 +2 \lambda^2 \left| (r+1) x_{k\lambda}|x_{k\lambda}|^{2r} (1-\frac{1}{1+\sqrt{\lambda}|x_{k\lambda}|^{2r+1}}\right|^2\\&\leq 2 |h(x_{k\lambda})-\hl(x_{k\lambda})|^2 +2\lambda^2 (r+1)^2 |x_{k\lambda}|^{4r+2}. 
    \end{aligned}\]
    Taking expectations, the first term can be treated as in the proof of Lemma \ref{tamingerror} with the moment bounds in Lemma \ref{lemma-pol-mom2}.
\end{proof}
\begin{lemma}
    There holds \[\E |\hr(x_t)-\hr(x_{k\lambda}|^2\leq C^{reg}_{onestep} \lambda\]
\end{lemma}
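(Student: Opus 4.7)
\medskip

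The plan is to exploit the polynomial Lipschitz continuity of $\hr = \nabla u_{r,\lambda}$ established in \cref{reg-properties}, combined with the one-step displacement bound of the previous lemma and the uniform polynomial moment bounds of \cref{lemma-pol-mom2}. Concretely, \cref{reg-properties} gives
\[
|\hr(x)-\hr(y)|
    \leq ((r+1)^2 + L)(1+|x|+|y|)^{2r} |x-y|
    \qquad \text{for all } x,y \in \R^{d},
\]
so we immediately get the pointwise bound $|\hr(x_t)-\hr(x_{k\lambda})|^{2} \leq C_r (1+|x_t|+|x_{k\lambda}|)^{4r}|x_t-x_{k\lambda}|^{2}$ with $C_r = ((r+1)^2+L)^{2}$.

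Next, I would apply Cauchy\textendash Schwarz to separate the high-order factor from the displacement:
\[
\E\,|\hr(x_t)-\hr(x_{k\lambda})|^{2}
    \leq C_r \sqrt{\E (1+|x_t|+|x_{k\lambda}|)^{8r}}\,\sqrt{\E |x_t-x_{k\lambda}|^{4}}.
\]
The first factor is controlled by a uniform-in-$n$ polynomial moment bound: since $x_t = x_{k\lambda} - (t-k\lambda)\htl(x_{k\lambda}) + \sqrt{2}(B_t-B_{k\lambda})$, one has $\E|x_t|^{8r} \lesssim \E|x_{k\lambda}|^{8r} + \lambda^{8r}\E|\htl(x_{k\lambda})|^{8r} + \lambda^{4r} d^{4r}$, and by the linear growth of $\htl$ (part of \cref{reg-properties}) together with the polynomial moments $\sup_n \E|x_n^\lambda|^{2p} \leq C_{\reg,2p} = \bigoh(d^p)$ of \cref{lemma-pol-mom2}, this remains uniformly bounded in $n$ by a polynomial in $d$ of degree at most $\bigoh(d^{4r})$. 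The second factor is handled by the preceding lemma (one-step error), which gives $\E|x_t-x_{k\lambda}|^{4} \leq \bigoh(\lambda^{2})$, so $\sqrt{\E|x_t-x_{k\lambda}|^{4}} \leq \bigoh(\lambda)$.

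Combining the two factors yields
\[
\E\,|\hr(x_t)-\hr(x_{k\lambda})|^{2}
    \leq C_r \cdot \bigoh(d^{2r}) \cdot \bigoh(\lambda)
    \eqdef C^{\reg}_{\text{onestep}}\,\lambda,
\]
with $C^{\reg}_{\text{onestep}}$ depending polynomially on the dimension (leading order $\bigoh(d^{2r+1})$ or similar, after tracking all constants). There is no essential obstacle here: this is a bookkeeping computation that parallels the analogous step for the unregularized scheme in \cref{lemma-onestep} and \cref{tamingerror}. The only subtle point is ensuring that the moment bound on $(1+|x_t|+|x_{k\lambda}|)^{8r}$ is truly uniform in $k$ and independent of $\lambda$, but this is guaranteed by \cref{lemma-pol-mom2} together with the fact that $\lambda^{8r}\E|\htl(x_{k\lambda})|^{8r}$ remains bounded because the tamed drift $\htl$ grows at most linearly (with a $\lambda$-independent leading constant on $|x|^{a/2}$ plus a $1/\sqrt{\lambda}$ term which is absorbed by the $\lambda^{8r}$ prefactor for $r \geq 1$). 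This completes the proof.
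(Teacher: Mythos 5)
Your proposal is correct and follows essentially the same route as the paper, which simply transfers the unregularized argument (local Lipschitz bound with the constants of Lemma \ref{reg-properties}, Cauchy--Schwarz to split the polynomial factor from the displacement, then the one-step error lemma and the uniform moment bounds of Lemma \ref{lemma-pol-mom2}). The only cosmetic difference is that you expand $x_t$ explicitly through the interpolation formula to control $\E(1+|x_t|+|x_{k\lambda}|)^{8r}$, whereas the paper bounds this factor via the triangle inequality and the one-step moment lemma, which avoids the (unnecessary, and in any case harmless) aside about $r\geq 1$.
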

\begin{proof}
    The proof follows in the same way as the respective one for the unregularized algorithm with the new moment bounds, and setting the Local Lipschitz constants as in Lemma \ref{reg-properties}.
\end{proof}
\begin{lemma}
Let $x_t$ the continuous time interpolation of the algorithm.
    Then, for $\lambda<\lambda_{\max}$ and for every $t\in[k\lambda,(k+1)\lambda]$ ,$k\in \mathbb{N},$ there holds
\[\frac{d}{dt}H_{\pi}(\pt^{reg})\leq-\frac{3}{4} I_{\pi} (\pt^{reg}) +   \E | \hr(x_t)-\htl(x_{k\lambda})|^2\].
\end{lemma}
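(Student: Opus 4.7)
The plan is to transplant the argument of \cref{Interpolation ineq} to the regularized setting, where the target is now $\pr$ and the drift is $\htl$ in place of $\hl$. First I would verify that all the regularity prerequisites of \cref{lemma-app} carry over: by \cref{reg-properties} we have that $\nabla\ur$ is polynomially locally Lipschitz with constants independent of $\lambda$, its higher derivatives grow at most polynomially (with degree at most $2r+1$), and the associated taming $\htl$ has at most linear growth $|\htl(x)|\lesssim (1+|x|^{a/2})+\frac{1}{\sqrt{\lambda}}$. Combining this with the uniform exponential moment bound of \cref{lemma-pol-mom2}, the same Jacobian computation as in \cref{lemma-app} gives, for $\lambda<\lambda_{\max,2}$, the Gaussian-type decay $\rho^{\reg}_t(x)\leq Ce^{-r|x|^2}$ together with polynomial growth bounds on $|\nabla\log\rho^{\reg}_t|$ and $\|\nabla^2\log\rho^{\reg}_t\|$.

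Second, I would reproduce the Fokker--Planck step: conditionally on $x_{k\lambda}=y$, the interpolation $x_t$ is a constant-drift Langevin SDE with drift $-\htl(y)$, so its conditional density $\hat{\pi}^{\reg}_{t|k\lambda}(x|y)$ is Gaussian and satisfies $\partial_t\hat{\pi}^{\reg}_{t|k\lambda}(x|y)=\operatorname{div}_x(\hat{\pi}^{\reg}_{t|k\lambda}(x|y)\htl(y))+\Delta_x\hat{\pi}^{\reg}_{t|k\lambda}(x|y)$. The integrable bounds from the previous step allow the three exchanges of \cref{1exch,2exch,3exch} to go through verbatim, yielding
\[
\frac{\partial\rho^{\reg}_t}{\partial t}(x)=\operatorname{div}_x\bigl(\rho^{\reg}_t(x)\,\E[\htl(x_{k\lambda})\mid x_t=x]\bigr)+\Delta\rho^{\reg}_t(x).
\]

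Third, I would differentiate $H_{\pr}(\rho^{\reg}_t)=\int\rho^{\reg}_t\log(\rho^{\reg}_t/\pr)$ under the integral sign (justified again by the decay and polynomial-growth bounds, which dominate $\partial_t\rho^{\reg}_t\cdot(1+\log\rho^{\reg}_t-\log\pr)$ by an $L^1$ function on a neighborhood of $t$) and apply the divergence theorem on balls $B(0,R)$ with the boundary term vanishing as $R\to\infty$. This produces the analogue of \cref{divergence} with $\pi,\hl$ replaced by $\pr,\htl$. Adding and subtracting $\nabla\log\pr$ in the inner product and using $\nabla\log\pr=-\nabla\ur=-\hr$ isolates one Fisher-information term $-I_{\pr}(\rho^{\reg}_t)$ and a cross term involving $\E[\hr(x_t)-\htl(x_{k\lambda})\mid x_t=x]$. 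Applying Young's inequality $\langle a,b\rangle\leq|a|^2+\tfrac14|b|^2$ to this cross term and Jensen to the conditional expectation absorbs a $\tfrac14 I_{\pr}(\rho^{\reg}_t)$, leaving
\[
\frac{d}{dt}H_{\pr}(\rho^{\reg}_t)\leq -\tfrac{3}{4}I_{\pr}(\rho^{\reg}_t)+\E\bigl|\hr(x_t)-\htl(x_{k\lambda})\bigr|^{2},
\]
which is the claim (reading the statement's $\pi$ as $\pr$, since this is the invariant measure of the regularized SDE whose iterates are being analyzed).

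The main obstacle is the same one that appears in the unregularized case: justifying the exchange of $\partial_t$ and $\operatorname{div}_x$ with integrals against $\rho^{\reg}_{k\lambda}(y)$, and the surface-integral vanishing in the divergence theorem. The extra difficulty here is that the regularized drift $\hr$ grows polynomially of degree $2r+1$ (higher than $h$), so the polynomial growth exponents $q,q'$ in the analogue of \cref{lemma-app} are larger; however, they remain independent of $\lambda$ and are dominated by the Gaussian decay of $\rho^{\reg}_{k\lambda}$ and $\hat{\pi}^{\reg}_{t|k\lambda}(x|\cdot)$, so the dominated-convergence arguments still close. Once this technical point is settled, the remainder of the proof is a direct copy of \cref{Interpolation ineq} with the obvious substitutions.
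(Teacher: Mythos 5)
Your proposal is correct and takes essentially the same route as the paper, whose own proof is just the one-line observation that the regularized potential has the same key properties as the unregularized one, so the interpolation-inequality argument (conditional Fokker--Planck equation, the three exchange lemmas, divergence theorem, then Young and Jensen) carries over verbatim with $\hl,\pi$ replaced by $\htl,\pr$. Your reading of the statement's subscript $\pi$ as $\pr$ is also the intended one, since that is exactly how the lemma is invoked in the proof of \cref{theo2}.
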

\begin{proof}
    Since the regularized potential has the same key properties as the unregularized, the interpolation inequality holds with exactly the same arguments.
\end{proof}
\begin{lemma}\label{inv-mom bound}
    Let $p\in \mathbb{N}.$ There holds
    \[\E_\pi |x|^{2p}\leq C_\pi  \] and
    \[\E_{\pr} |x|^{2p} \leq C_{\pr} \]
    where $C_{\pr}$ and $C_{\pi}$ are $\mathcal{O}(d^{2p}).$
\end{lemma}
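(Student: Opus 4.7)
The plan is to exploit the invariance of $\pi$ and $\pr$ under their respective Langevin semigroups to turn the weak dissipativity assumption \eqref{eq:wd} into polynomial moment bounds with explicit dimensional scaling. For any smooth test function $V$ with sufficient tail decay, the identity $\int L V \, d\pi = 0$ holds, where $L := \Delta - \langle \nabla u, \nabla \cdot\rangle$ is the generator of \eqref{eq:LSDE}. I would take $V_p(x) := (1+|x|^2)^p$ and compute
\[
L V_p(x) \leq 2p(1+|x|^2)^{p-1}\bigl(d + 2(p-1) - \langle x,\nabla u(x)\rangle\bigr) \leq 2p(1+|x|^2)^{p-1}\bigl(d + 2(p-1) + b - A|x|^a\bigr),
\]
where the last step uses \eqref{eq:wd}. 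This yields $L V_p(x) \leq -pA|x|^a (1+|x|^2)^{p-1}$ outside the ball of radius $R_p := \bigl(4(d+2(p-1)+b)/A\bigr)^{1/a} = \mathcal{O}(d^{1/a})$.

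Integrating against $\pi$ and using the invariance identity $\int L V_p\, d\pi = 0$ produces
\[
pA \int_{|x|\geq R_p}|x|^a (1+|x|^2)^{p-1} \, d\pi \;\leq\; \sup_{|x|\leq R_p}|L V_p(x)| \;\leq\; C\, p\, d\, (1+R_p^2)^{p-1} \;=\; \mathcal{O}\!\bigl(d^{\,1 + 2(p-1)/a}\bigr).
\]
For $a\geq 1$ one has $|x|^a(1+|x|^2)^{p-1} \geq (1+|x|^2)^{p-1/2}/\sqrt{2}$ on $\{|x|\geq 1\}$, so the display gives $\E_\pi|x|^{2p-1} = \mathcal{O}(d^{2p-1})$; applying the same inequality with $p+1$ in place of $p$ yields $\E_\pi|x|^{2p+1} = \mathcal{O}(d^{2p+1})$, after which the one-line Markov interpolation $\E_\pi|x|^{2p} \leq R^{2p} + R^{-1}\E_\pi|x|^{2p+1}$ optimized at $R = d$ delivers $\E_\pi|x|^{2p} = \mathcal{O}(d^{2p})$. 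For $a\geq 2$ the same identity already bounds $\E_\pi(1+|x|^2)^p$ by $\mathcal{O}(d^p)$, which is strictly stronger than what is claimed.

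The argument for $\pr$ is identical, since by Lemma \ref{reg-properties} the regularized potential $u_{r,\lambda}$ inherits \eqref{eq:wd} with the \emph{same} constants $A,b$, and the extra convex term $\lambda|x|^{2r+2}$ only strengthens the dissipativity; consequently all constants above depend only on $A$, $b$, $d$, and $p$, and are uniform in $\lambda$. The main technical obstacle I anticipate is the rigorous justification of $\int L V_p\, d\pi = 0$, which I would handle by a standard cutoff argument: replace $V_p$ by $V_p\chi_N$ with $\chi_N$ a smooth cutoff of $B(0,N)$, apply the divergence theorem on $B(0,N)$, and let $N\to\infty$. The boundary contributions vanish by combining the polynomial growth of $\nabla u$ from \cref{asm:PJG} with the super-polynomial decay of $e^{-u}$ implied by integrating \eqref{eq:wd} along rays (which gives $u(x) \geq A|x|^a/(2a) - C$ for $|x|$ large), and this same tail estimate guarantees \emph{a priori} finiteness of all polynomial moments of $\pi$ so that the limiting procedure is well-defined.
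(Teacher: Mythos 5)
Your proposal is correct and takes essentially the same route as the paper: both rest on the stationarity identity $\int L V\,d\pi=0$ for a polynomial Lyapunov function combined with the weak dissipativity condition \eqref{eq:wd} (which the regularized measure inherits with the same constants, exactly as you argue), the only differences being bookkeeping \textendash\ you localize outside a ball of radius $\mathcal{O}(d^{1/a})$ and then interpolate an odd moment down to the even one, whereas the paper runs a recursion over the moment order \textendash\ together with your explicit cutoff/divergence-theorem justification of the identity, which the paper only asserts (it invokes the Poincar\'e inequality for a priori finiteness of moments, as you do via the tail bound on $u$). One cosmetic fix: in your key display the middle quantity should be $\sup_{|x|\le R_p}\max\{LV_p(x),0\}$ rather than $\sup_{|x|\le R_p}|LV_p(x)|$, since inside the ball the drift term $\langle\nabla V_p,\nabla u\rangle$ can make the negative part of $LV_p$ of order $R_p^{2l+1}$ and thus exceed $C\,p\,d\,(1+R_p^2)^{p-1}$; the bound you actually need is precisely the upper bound on $LV_p$ you already derived, so the argument is unaffected.
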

\begin{proof}
    Using the fact that both measures satisfy the dissipativity condition with constant $a$ we will proceed with the same arguments. We will show it only for $\pi$.
Since $\pi$ and $\pr$ satisfy a Poincare inequality they have finite polynomial moments of all orders.
      %
    
    \begin{equation}
        |x|^{2p-1} \langle h(x),x\rangle \geq |x|^{2p-1} (A|x|^a -b)\geq   A|x|^{2p}-A -b |x|^{2p-1} .
    \end{equation}
    Setting $V(x)=|x|^{2p}$ one notices that $\nabla V(x)=2p |x|^{2p-1} x$ and $\Delta V= (2pd+ 4(p-1)p)|x|^{2p-2}.$
    Since $\pi$ is the invariant measure of the Langevin SDE with generator \[LV=\Delta V-\langle V,h\rangle,\]
    there holds 
    \[(2pd+ 4(p-1)p)\E_\pi|x|^{2p-2}=\E_\pi \Delta V(x)=\E_\pi \langle V(x),h(x)\rangle\geq 2p \left (A\E_\pi|x|^{2p}-A -b \E_\pi|x|^{2p-1} \right) . \]
    Iterating over $2p$ yields the result.
\end{proof}
\begin{proof}[Proof of Theorem \ref{theo2}]
    Setting $\dot{c}=\frac{3}{2}C_{LSI}$  one obtains
\[\begin{aligned}
 \frac{d}{dt}H_{\pr}(\pt^{reg}) &\leq -\frac{3}{4} I_{\pr}(\pt^{reg}) +    \E |\htl(x_{k\lambda}-\hr(x_t)|^2
\\&\leq -\Dot{c} H_{\pr} (\pt^{reg}) + 2 \E |\htl(x_{k\lambda})-\hr(x_{k\lambda})|^2 + 2  \E | \hr(x_{k\lambda})-\hr(x_t)|^2
\\&\leq -\Dot{c} H_{\pr} (\pt^{reg}) +  \hat{C} \lambda
\end{aligned}\]
where $\hat{C}$ depends polynomially on the dimension,
where the first term has been bounded using the Log-Sobolev inequality and the rest of the terms using the one-step error and taming, as in the unregularized case .
Splitting the terms  one obtains
\[\begin{aligned}
 \left( \frac{d}{dt}H_{\pr}(\pt^{reg})+\Dot{c} H_{\pr} (\pt^{reg})\right) e^{\Dot{c}t} \leq e^{\Dot{c}t}  \hat{C} \lambda
\end{aligned}\]
Integrating over $[k\lambda,t]$ yields
\[\begin{aligned}
 e^{\Dot{c}t}H_{\pr}(\pt^{reg})- e^{\Dot{c}k\lambda} H_{\pr}(\hat{\pi}^{reg}_{k\lambda})\leq \frac{ \hat{C}}{\Dot{c}}\lambda (e^{\Dot{c} t}-e^{\Dot{c}k\lambda})
\end{aligned}\]
which implies
\begin{equation}
    H_{\pr}(\pt^{reg}) \leq e^{\Dot{c}(k\lambda-t)} H_{\pr}(\hat{\pi}^{reg}_{k\lambda}) +\frac{ \hat{C}}{\Dot{c}}\lambda (1-e^{\Dot{c}(k\lambda-t)}).
\end{equation}
Setting $t=n\lambda$ and $k=(n-1)$ leads to
\[H_{\pr}(\hat{\pi}^{reg}_{n\lambda})\leq e^{-\Dot{c}\lambda } H_{\pr}(\hat{\pi}^{reg}_{(n-1)\lambda}) +  \frac{ \hat{C}}{\Dot{c}}\lambda (1-e^{-\Dot{c}\lambda})\]
so by iterating over $n$,
\[H_{\pr} (\hat{\pi}^{reg}_{n\lambda})\leq e^{-\dot{c} \lambda (n-1) } H_{\pr}(\rho_0) + \frac{ \hat{C}}{\Dot{c}}\lambda\]

Noticing that
    \[\begin{aligned}
      \int \log \frac{\rd}{\pi} d\rd &= \int \log \frac{\rd}{\pr} d\rd + \int 
      \log \frac{\pr}{\pi} d\rd 
      \\&= H_{\pr}(\rd) +\int \log \frac{\pr}{\pi} d(\rd-\pi) -H_\pr(\pi)
      \\&\leq H_{\pr}(\rd) + \lambda\E_{\rd} [|x|^{2r+2}] + \lambda \E_\pi[|x|^{2r+2}].
    \end{aligned}\]
    The result follows by Lemma \ref{inv-mom bound}.
\end{proof}

\begin{proof}[Proof of Corollary \ref{cor-22}]
Using Pinsker's inequality and the bound in Theorem \ref{theo2} one obtains the bound in total variation.
Recall that for the  $W_2$ distance, since $\pr$ satisfies a Log-Sobolev inequality then, it satisfies a Talagrand inequality with same constant.

\[\begin{aligned}
    W_2(\mathcal{L}(\bar{x}^\lambda_n,\pi)&\leq W_2(\mathcal{L}(\bar{x}^\lambda_n,\pr) + W_2(\pi,\pr)
    \\&\leq \sqrt{ 2 \CLSI^{-1} (H_\pr(\rd)}+2\CLSI^{-1}\sqrt{I_\pr(\pi)})
\end{aligned}\]
The first term can be bounded by Theorem \ref{theo2} while the second term is \[\sqrt{\int |\nabla\log\pr(x)-\nabla \log \pi(x)|^2 d\pi} \leq \lambda \sqrt{\E_\pi [|x|^{4r+2}]}. \]
Using the bound on the Log Sobolev constant and Lemma \ref{inv-mom bound} leads to the result.
\end{proof}

\section*{Acknowledgments}
\begingroup
\small
%
%
This research was supported in part by 
the French National Research Agency (ANR) in the framework of
the PEPR IA FOUNDRY project (ANR-23-PEIA-0003),
the ``Investissements d'avenir'' program (ANR-15-IDEX-02),
the LabEx PERSYVAL (ANR-11-LABX-0025-01),
MIAI@Grenoble Alpes (ANR-19-P3IA-0003).
PM is also a member of the Archimedes Research Unit, Athena RC, Department of Mathematics, University of Athens.
This research was also supported in part by 
project MIS 5154714 of the National Recovery and Resilience Plan Greece 2.0 funded by the European Union under the NextGenerationEU Program.
\endgroup

\bibliographystyle{icml}
\bibliography{bibtex/IEEEabrv,Bibliography-PM}

\begin{thebibliography}{34}
\providecommand{\natexlab}[1]{#1}
\providecommand{\url}[1]{\texttt{#1}}
\expandafter\ifx\csname urlstyle\endcsname\relax
  \providecommand{\doi}[1]{doi: #1}\else
  \providecommand{\doi}{doi: \begingroup \urlstyle{rm}\Url}\fi

\bibitem[Bakry \& {\'E}mery(2006)Bakry and {\'E}mery]{bakry2006diffusions}
Bakry, D. and {\'E}mery, M.
\newblock Diffusions hypercontractives.
\newblock In \emph{S{\'e}minaire de Probabilit{\'e}s XIX 1983/84: Proceedings},
  pp.\  177--206. Springer, 2006.

\bibitem[Bakry et~al.(2008)Bakry, Barthe, Cattiaux, and
  Guillin]{bakry2008simple}
Bakry, D., Barthe, F., Cattiaux, P., and Guillin, A.
\newblock A simple proof of the poincar{\'e} inequality for a large class of
  probability measures.
\newblock \emph{Electronic Communications in Probability}, 13:\penalty0 60--66,
  2008.

\bibitem[Bakry et~al.(2014)Bakry, Gentil, Ledoux, et~al.]{bakry2014analysis}
Bakry, D., Gentil, I., Ledoux, M., et~al.
\newblock \emph{Analysis and geometry of Markov diffusion operators}, volume
  103.
\newblock Springer, 2014.

\bibitem[Balasubramanian et~al.(2022)Balasubramanian, Chewi, Erdogdu, Salim,
  and Zhang]{balasubramanian2022towards}
Balasubramanian, K., Chewi, S., Erdogdu, M.~A., Salim, A., and Zhang, S.
\newblock Towards a theory of non-log-concave sampling: first-order
  stationarity guarantees for langevin monte carlo.
\newblock In \emph{Conference on Learning Theory}, pp.\  2896--2923. PMLR,
  2022.

\bibitem[Barkhagen et~al.(2021)Barkhagen, Chau, Moulines, R{\'a}sonyi, Sabanis,
  and Zhang]{convex}
Barkhagen, M., Chau, N.~H., Moulines, {\'E}., R{\'a}sonyi, M., Sabanis, S., and
  Zhang, Y.
\newblock On stochastic gradient langevin dynamics with dependent data streams
  in the logconcave case.
\newblock \emph{Bernoulli}, 27\penalty0 (1):\penalty0 1--33, 2021.

\bibitem[Bolley \& Villani(2005)Bolley and Villani]{bolley2005weighted}
Bolley, F. and Villani, C.
\newblock Weighted csisz{\'a}r-kullback-pinsker inequalities and applications
  to transportation inequalities.
\newblock In \emph{Annales de la Facult{\'e} des sciences de Toulouse:
  Math{\'e}matiques}, volume~14, pp.\  331--352, 2005.

\bibitem[Brosse et~al.(2019)Brosse, Durmus, Moulines, and Sabanis]{tula}
Brosse, N., Durmus, A., Moulines, {\'E}., and Sabanis, S.
\newblock The tamed unadjusted {L}angevin algorithm.
\newblock \emph{Stochastic Processes and their Applications}, 129\penalty0
  (10):\penalty0 3638--3663, 2019.

\bibitem[Cattiaux et~al.(2013)Cattiaux, Guillin, and
  Zitt]{cattiaux2013poincare}
Cattiaux, P., Guillin, A., and Zitt, P.~A.
\newblock Poincar{\'e} inequalities and hitting times.
\newblock In \emph{Annales de l'IHP Probabilit{\'e}s et statistiques},
  volume~49, pp.\  95--118, 2013.

\bibitem[Chau et~al.(2021)Chau, Moulines, R{\'a}sonyi, Sabanis, and
  Zhang]{nonconvex}
Chau, N.~H., Moulines, {\'E}., R{\'a}sonyi, M., Sabanis, S., and Zhang, Y.
\newblock On stochastic gradient langevin dynamics with dependent data streams:
  The fully nonconvex case.
\newblock \emph{SIAM Journal on Mathematics of Data Science}, 3\penalty0
  (3):\penalty0 959--986, 2021.

\bibitem[Cheng et~al.(2018)Cheng, Chatterji, Abbasi-Yadkori, Bartlett, and
  Jordan]{berkeley}
Cheng, X., Chatterji, N.~S., Abbasi-Yadkori, Y., Bartlett, P.~L., and Jordan,
  M.~I.
\newblock Sharp convergence rates for {L}angevin dynamics in the nonconvex
  setting.
\newblock \emph{arXiv preprint arXiv:1805.01648}, 2018.

\bibitem[Chewi et~al.(2021)Chewi, Erdogdu, Li, Shen, and
  Zhang]{chewi2021analysis}
Chewi, S., Erdogdu, M.~A., Li, M.~B., Shen, R., and Zhang, M.
\newblock Analysis of langevin monte carlo from poincar$\backslash$'e to
  log-sobolev.
\newblock \emph{arXiv preprint arXiv:2112.12662}, 2021.

\bibitem[Dalalyan(2017)]{dalalyan2017theoretical}
Dalalyan, A.~S.
\newblock Theoretical guarantees for approximate sampling from smooth and
  log-concave densities.
\newblock \emph{Journal of the Royal Statistical Society: Series B (Statistical
  Methodology)}, 79\penalty0 (3):\penalty0 651--676, 2017.

\bibitem[Durmus \& Moulines(2017)Durmus and Moulines]{durmus2017nonasymptotic}
Durmus, A. and Moulines, E.
\newblock Nonasymptotic convergence analysis for the unadjusted {L}angevin
  algorithm.
\newblock \emph{The Annals of Applied Probability}, 27\penalty0 (3):\penalty0
  1551--1587, 2017.

\bibitem[Durmus \& Moulines(2019)Durmus and Moulines]{durmus2019high}
Durmus, A. and Moulines, E.
\newblock High-dimensional {B}ayesian inference via the unadjusted {L}angevin
  algorithm.
\newblock \emph{Bernoulli}, 25\penalty0 (4A):\penalty0 2854--2882, 2019.

\bibitem[Erdogdu \& Hosseinzadeh(2021)Erdogdu and
  Hosseinzadeh]{erdogdu2021convergence}
Erdogdu, M.~A. and Hosseinzadeh, R.
\newblock On the convergence of langevin monte carlo: The interplay between
  tail growth and smoothness.
\newblock In \emph{Conference on Learning Theory}, pp.\  1776--1822. PMLR,
  2021.

\bibitem[Erdogdu et~al.(2022)Erdogdu, Hosseinzadeh, and
  Zhang]{erdogdu2022convergence}
Erdogdu, M.~A., Hosseinzadeh, R., and Zhang, S.
\newblock Convergence of langevin monte carlo in chi-squared and r{\'e}nyi
  divergence.
\newblock In \emph{International Conference on Artificial Intelligence and
  Statistics}, pp.\  8151--8175. PMLR, 2022.

\bibitem[Hutzenthaler et~al.(2011)Hutzenthaler, Jentzen, and
  Kloeden]{hutzenthaler2011}
Hutzenthaler, M., Jentzen, A., and Kloeden, P.~E.
\newblock Strong and weak divergence in finite time of euler{\textquoteright}s
  method for stochastic differential equations with non-globally lipschitz
  continuous coefficients.
\newblock \emph{Proceedings of the Royal Society of London A: Mathematical,
  Physical and Engineering Sciences}, 467\penalty0 (2130):\penalty0 1563--1576,
  2011.
\newblock ISSN 1364-5021.

\bibitem[Hutzenthaler et~al.(2012)Hutzenthaler, Jentzen, and
  Kloeden]{hutzenthaler2012}
Hutzenthaler, M., Jentzen, A., and Kloeden, P.~E.
\newblock Strong convergence of an explicit numerical method for sdes with
  nonglobally lipschitz continuous coefficients.
\newblock \emph{Ann. Appl. Probab.}, 22\penalty0 (4):\penalty0 1611--1641, 08
  2012.

\bibitem[Johnston et~al.(2023)Johnston, Lytras, and
  Sabanis]{johnston2023kinetic}
Johnston, T., Lytras, I., and Sabanis, S.
\newblock Kinetic langevin mcmc sampling without gradient lipschitz
  continuity--the strongly convex case.
\newblock \emph{arXiv preprint arXiv:2301.08039}, 2023.

\bibitem[Li \& Erdogdu(2020)Li and Erdogdu]{li2020riemannian}
Li, M.~B. and Erdogdu, M.~A.
\newblock Riemannian langevin algorithm for solving semidefinite programs.
\newblock \emph{arXiv preprint arXiv:2010.11176}, 2020.

\bibitem[Lovas et~al.(2023)Lovas, Lytras, R{\'a}sonyi, and Sabanis]{TUSLA}
Lovas, A., Lytras, I., R{\'a}sonyi, M., and Sabanis, S.
\newblock Taming neural networks with tusla: Nonconvex learning via adaptive
  stochastic gradient langevin algorithms.
\newblock \emph{SIAM Journal on Mathematics of Data Science}, 5\penalty0
  (2):\penalty0 323--345, 2023.

\bibitem[Lytras \& Sabanis(2023)Lytras and Sabanis]{lytras2023taming}
Lytras, I. and Sabanis, S.
\newblock Taming under isoperimetry.
\newblock \emph{arXiv preprint arXiv:2311.09003}, 2023.

\bibitem[Majka et~al.(2020)Majka, Mijatovi{\'c}, and
  Szpruch]{majka2020nonasymptotic}
Majka, M.~B., Mijatovi{\'c}, A., and Szpruch, {\L}.
\newblock Nonasymptotic bounds for sampling algorithms without log-concavity.
\newblock \emph{The Annals of Applied Probability}, 30\penalty0 (4):\penalty0
  1534--1581, 2020.

\bibitem[McNabb(1986)]{mcnabb1986comparison}
McNabb, A.
\newblock Comparison theorems for differential equations.
\newblock \emph{Journal of mathematical analysis and applications},
  119\penalty0 (1-2):\penalty0 417--428, 1986.

\bibitem[Menz \& Schlichting(2014)Menz and Schlichting]{menz2014poincare}
Menz, G. and Schlichting, A.
\newblock Poincar{\'e} and logarithmic sobolev inequalities by decomposition of
  the energy landscape.
\newblock \emph{The Annals of Probability}, 42\penalty0 (5):\penalty0
  1809--1884, 2014.

\bibitem[Mou et~al.(2022)Mou, Flammarion, Wainwright, and
  Bartlett]{mou2022improved}
Mou, W., Flammarion, N., Wainwright, M.~J., and Bartlett, P.~L.
\newblock Improved bounds for discretization of langevin diffusions:
  Near-optimal rates without convexity.
\newblock \emph{Bernoulli}, 28\penalty0 (3):\penalty0 1577--1601, 2022.

\bibitem[Mousavi-Hosseini et~al.(2023)Mousavi-Hosseini, Farghly, He,
  Balasubramanian, and Erdogdu]{mousavi2023towards}
Mousavi-Hosseini, A., Farghly, T.~K., He, Y., Balasubramanian, K., and Erdogdu,
  M.~A.
\newblock Towards a complete analysis of langevin monte carlo: Beyond
  poincar{\'e} inequality.
\newblock In \emph{The Thirty Sixth Annual Conference on Learning Theory}, pp.\
   1--35. PMLR, 2023.

\bibitem[Neufeld et~al.(2022)Neufeld, En, and Zhang]{neufeld2022non}
Neufeld, A., En, M. N.~C., and Zhang, Y.
\newblock Non-asymptotic convergence bounds for modified tamed unadjusted
  langevin algorithm in non-convex setting.
\newblock \emph{arXiv preprint arXiv:2207.02600}, 2022.

\bibitem[Nguyen et~al.(2021)Nguyen, Dang, and Chen]{nguyen2021unadjusted}
Nguyen, D., Dang, X., and Chen, Y.
\newblock Unadjusted langevin algorithm for non-convex weakly smooth
  potentials.
\newblock \emph{arXiv preprint arXiv:2101.06369}, 2021.

\bibitem[Raginsky et~al.(2017)Raginsky, Rakhlin, and Telgarsky]{raginsky}
Raginsky, M., Rakhlin, A., and Telgarsky, M.
\newblock Non-convex learning via {S}tochastic {G}radient {L}angevin
  {D}ynamics: a nonasymptotic analysis.
\newblock In \emph{Conference on Learning Theory}, pp.\  1674--1703, 2017.

\bibitem[Sabanis(2013)]{tamed-euler}
Sabanis, S.
\newblock A note on tamed euler approximations.
\newblock \emph{Electron. Commun. Probab.}, 18\penalty0 (47):\penalty0 1--10,
  2013.

\bibitem[Sabanis(2016)]{SabanisAoAP}
Sabanis, S.
\newblock Euler approximations with varying coefficients: the case of
  superlinearly growing diffusion coefficients.
\newblock \emph{Ann. Appl. Probab.}, 26\penalty0 (4):\penalty0 2083--2105,
  2016.

\bibitem[Vempala \& Wibisono(2019)Vempala and Wibisono]{vempala2019rapid}
Vempala, S. and Wibisono, A.
\newblock Rapid convergence of the unadjusted langevin algorithm: Isoperimetry
  suffices.
\newblock \emph{Advances in neural information processing systems}, 32, 2019.

\bibitem[Zhang et~al.(2023)Zhang, Akyildiz, Damoulas, and
  Sabanis]{zhang2023nonasymptotic}
Zhang, Y., Akyildiz, {\"O}.~D., Damoulas, T., and Sabanis, S.
\newblock Nonasymptotic estimates for stochastic gradient langevin dynamics
  under local conditions in nonconvex optimization.
\newblock \emph{Applied Mathematics \& Optimization}, 87\penalty0 (2):\penalty0
  25, 2023.

\end{thebibliography}

\end{document}